\documentclass[10pt,journal,compsoc]{IEEEtran}
 
%
\ifCLASSOPTIONcompsoc
  \usepackage[nocompress]{cite}
\else
  \usepackage{cite}
\fi

\usepackage{hyperref}       
\hypersetup{hidelinks} 

\usepackage{url}            
\usepackage{booktabs}       
\usepackage{amsfonts}       
\usepackage{nicefrac}       
\usepackage{microtype}      
\usepackage{lipsum}       
\usepackage{graphicx}
\usepackage{doi}
\usepackage{amssymb}
\usepackage{amsmath}
\usepackage{amsthm}
\usepackage[lined,linesnumbered,ruled,norelsize]{algorithm2e}
\usepackage{algorithmicx}
\usepackage{color}
\usepackage{enumitem}
\usepackage{subfigure}

\usepackage{xcolor}
\usepackage{todonotes}
\setuptodonotes{inline}

\newtheorem{lemma}{Lemma}
\newtheorem{remark}{Remark}
\newtheorem{theorem}{Theorem}
\newtheorem{proposition}{Proposition}
\newtheorem{corollary}{Corollary}

\begin{document}

\title{Collaborative Learning in General Graphs with Limited Memorization: Complexity, Learnability, and Reliability}


\author{Feng~Li,
        Xuyang~Yuan,
        Lina~Wang,
        Huan~Yang,
        Dongxiao~Yu,
        Weifeng~Lyu,
        Xiuzhen~Cheng
\IEEEcompsocitemizethanks{\IEEEcompsocthanksitem F. Li, X. Yuan, L. Wang, D. Yu and X. Cheng are with School of Computer Science and Technology, Shandong University, Qingdao, China. E-mail: \{fli, dxyu, xzcheng\}@sdu.edu.cn, \{xyyuan, linawang425\}@mail.sdu.edu.cn \protect\\
%
\IEEEcompsocthanksitem H. Yang is with College of Computer Science and Technology, Qingdao University, Qingdao, China. E-mail: cathy\_huanyang@hotmail.com \protect\\
\IEEEcompsocthanksitem W. Lyu is with School of Computer Science and Engineering, Beihang University, Beijing, China. E-mail: lwf@nlsde.buaa.edu.cn \protect\\
}
%
}

%

\IEEEtitleabstractindextext{%
\begin{abstract}
  We consider a $K$-armed bandit problem in general graphs where agents are arbitrarily connected and each of them has limited memorizing capabilities and communication bandwidth. The goal is to let each of the agents eventually learn the best arm. Although recent studies show the power of collaboration among the agents in improving the efficacy of learning, it is assumed in these studies that the communication graph should be complete or well-structured, whereas such an assumption is not always valid in practice. Furthermore, limited memorization and communication bandwidth also restrict the collaborations of the agents, since the agents memorize and communicate very few experiences. Additionally, an agent may be corrupted to share falsified experiences to its peers, while the resource limit in terms of memorization and communication may considerably restrict the reliability of the learning process. To address the above issues, we propose a three-staged collaborative learning algorithm. In each step, the agents share their latest experiences with each other through light-weight random walks in a general communication graph, and then make decisions on which arms to pull according to the recommendations received from their peers. The agents finally update their adoptions (i.e., preferences to the arms) based on the reward obtained by pulling the arms. Our theoretical analysis shows that, when there are a sufficient number of agents participating in the collaborative learning process, all the agents eventually learn the best arm with high probability, even with limited memorizing capabilities and light-weight communications. We also reveal in our theoretical analysis the upper bound on the number of corrupted agents our algorithm can tolerate. The efficacy of our proposed three-staged collaborative learning algorithm is finally verified by extensive experiments on both synthetic and real datasets.
\end{abstract}

\begin{IEEEkeywords}
  Multi-armed bandits, collaborative learning, limited memorization.
\end{IEEEkeywords}}

\maketitle

\IEEEdisplaynontitleabstractindextext
\IEEEpeerreviewmaketitle

\section{Introduction} \label{sec:intro}
  Making a sequence of decisions to choose among a set of unknown options is a commonly encountered issue in a wide spectrum of applications, e.g., economy~\cite{ShenWJZ-ICJAI15}, robotics~\cite{PiniBFDB-ICSI12}, and biology~\cite{SeeleyB-BES99}. The problem is usually formulated as a stochastic \textit{Multi-Armed Bandit} (MAB) problem~\cite{AuerCF-ML02,BubeckB-FTML12}. Specifically, given $K$ unknown arms $a_1, a_2, \cdots, a_K$, a player (a.k.a. \textit{agent}) can select one of them to pull, and observe the corresponding reward feedback in each step. Let $\phi_{k}(r)$ denote the reward obtained by the $r$-th pull of arm $a_k$, and $\phi_{k}(1), \phi_{k}(2), \cdots$ are assumed to be i.i.d. random variables (e.g., Bernoulli variables parameterized by unknown $p_k$). The aim is to design a policy, according to which, the agents can make selection decisions sequentially to learn the best arm.
  
  Recent studies, e.g., \cite{NayyarKJ-TCNS18,WangPAJR-AISTATS20,BubeckLPS-CLT20}, have investigated a variation of the MAB problem where multiple agents independently make decisions on which arm to pull; nevertheless, most of them focus on addressing the collisions/interference among the agents, whereas a very handful of recent proposals utilize the power of the collaboration among the agents. Inspired by the fact that individuals in a social group (such as human society, social insect colonies and swarm robotics) can learn the experience from their peers~\cite{SeeleyB-BES99,BianconiB-PRL01,PrattSMF-AB05,BehimTE-EHB14,GranovskiyGSG-TCS15}, a collaborative learning dynamics consists of the following two stages in each step: in the \textit{sampling} stage, each agent chooses one of the arms to pull based on the suggestions received from its peers, while in the \textit{adopting} stage, the agent decides whether or not to adopt the chosen arm as preference according to its random reward feedback. In fact, the above two-staged collaborative learning dynamics has been investigated in recent studies \cite{CelisKV-PODC17,SuZL-MACS19}. Unfortunately, the existing proposals consider either complete graphs or well-structured ones such that the information exchange among the agents can be guaranteed. For example, in \cite{CelisKV-PODC17}, the agents can directly communicate with each other and observe the exact popularity of each arm. \cite{SuZL-MACS19} assumes the communication graph is regular or doubly-stochastic such that asynchronous communications among the agents are sufficient to serve the learning goal. Therefore, it is very challenging to enable efficient collaboration among the agents for learning in a general graph with arbitrary topology.
  
  The collaboration among the agents is also restricted by their limited memorizing capacities. Although there have been many asymptotically optimal algorithms and efficient finite-time order optimal algorithms proposed in recent decades \cite{AuerCF-ML02,BubeckB-FTML12,ShahrampourNT-TSP17} to address the MAB problem with single agent, these state-of-the-art algorithms usually have highly non-trivial requirements on the memorizing capability of the agent. For example, in the well-known \textit{Upper Confidence Bound} (UCB) algorithm, the agent is required to memorize both the cumulative reward obtained by pulling each arm and the number of pulls of each arm so far~\cite{AuerCF-ML02}. The UCB algorithm also has been applied in collaborative learning where agents share their experiences with each other and each of them makes decisions according to the historical information received so far \cite{KollaJG-TON18,SankGS-PMACS19}. In another word, it entails full historical information so far to make decision currently, and any rule with this property requires agents to have good memorizing capabilities. Unfortunately, such a requirement may not always be fulfilled. In particular, agents may have limited memory such that only quite few historical experiences can be memorized \cite{Robbins-PNAS56,CoverH-TIT70,CelisKV-PODC17,SuZL-MACS19}. For example, in human society, when a customer is making a purchase decision of perishable products, he may memorize only his most recent purchase. Similarly, in ant colonies, an ant can memorize only few recently visited sites when hunting house. The question is, when the individual agents have limited memorizing capabilities, are they able to learn the best arm through collaborating with each other?

  Our another concern is the fault tolerance of the collaborative learning process. Some agents may be corrupted by an adversary to elaborately share falsified experiences with their peers. In the adversarial setting, complete historical information may be very helpful for agents to make right decisions \cite{VialSS-MobiHoc21}; nevertheless, when the agents have bounded memorizing capabilities such that only few historical experiences can be memorized and shared, whether or not the limited collaboration can be exploited to reliably serve the learning goal is still an open problem.

  In this paper, we propose a collaborative learning algorithm for multi-agent MAB problem in a general graph where each agent has bounded memorizing capacity and may be corrupted to disseminate (or share) falsified experiences. Specifically, our algorithm proceeds iteratively and each round of our algorithm includes the following three stages:
  \begin{itemize}
    \item \textbf{Disseminating}: For each agent, if it has a preference over the $K$ arms (and thus has a \textit{non-null} adoption), it disseminates its adoption over the graph through \textit{Metropolis-Hasting Random Walks} (MHRWs) in parallel.
    \item \textbf{Sampling}: For each agent with no preference (and thus with a \textit{null} adoption), with probability $\mu$, it chooses one of the $K$ arms uniformly at random to pull; with probability $1-\mu$, it uniformly chooses one of the arms suggested by its peers in the last disseminating stage or chooses no arm if there is no suggestion received. For the agents with non-null adoptions, each of them makes its sample decision by following the second branch (i.e., by letting $\mu=0$).
    \item \textbf{Adopting}: If pulling the arm yields reward, the agent updates its adoption (or preference) over the arms; otherwise, it keeps its adoption unchanged.
  \end{itemize}

  We study the dynamics of the above three-staged collaborative learning algorithm from the perspectives of complexity, learnability and reliability, respectively. We demonstrate that, our MHRW-based information disseminating mechanism entails only light-weight communications over the general communication graph. Assume there are $N$ agents participating in the collaborative learning process. In each round, every agent only needs to transmit $\mathcal O(\log^3 N)$ messages to each of its neighbors, while each message consists of $\mathcal O(\log N)$ bits. Thanks to the above information disseminating mechanism through which the agents share their experiences efficiently, we demonstrate that, even the agents have limited memorizing capabilities and thus memorize only the latest adoptions, the learnability of our algorithm can be guaranteed with high probability when $N$ is sufficiently large. Furthermore, we quantify the reliability of collaborative learning algorithm. Specifically, let $p_1$ and $p_2$ denote probabilities for the best arm and the second best one to yield reward, respectively. Up to $\frac{(1-\alpha)(p_1-p_2)}{(1-\alpha)p_1+\alpha p_2}N$ corrupted agents can be tolerated in each round $r$ when the proportion of honest agents adopting the best arm in round $r$ is at most $0<\alpha<1$.

  The remaining of this paper is organized as follows. We first survey related literature in Sec.~\ref{sec:relwork}. We then introduce our system model and formulate our problem in Sec.~\ref{sec:sys}. The details of our three-staged collaborative learning algorithm and the corresponding theoretic analysis are then given in Sec.~\ref{sec:algo} and Sec.~\ref{sec:analysis}, respectively. We also perform extensive numerical experiments to verify the efficacy of our proposed algorithm in Sec.~\ref{sec:sim}. We finally conclude this paper in Sec.~\ref{sec:con}.

\vspace{-2ex}
\section{Related Work} \label{sec:relwork}
  MAB is a very powerful framework for designing algorithms which make decisions over time with uncertainty \cite{Slivkins-FTML19}. Although there have been many proposals investigating the single agent MAB problem (e.g., \cite{LaiR-AAM85,AuerCF-ML02,AudibertBM-COLT10,BubeckB-FTML12,KaufmannCG-JMLR16}), studies on how multiple agents learn collaboratively were rather rare until recent years. 

  The power of the collaboration to improve the efficiency of learning process has been revealed in \cite{TaoZZ-FOCS19}; nevertheless, most existing proposals focus on utilizing rich historical information (e.g., the cumulative reward obtained by pulling each arm and the total number of the pulls of each arm so far) in a distributed manner. Therefore, those methods to address the single agent MAB problem (e.g., UCB method, $\varepsilon$-greedy method and SoftMax method \cite{AuerCF-ML02,VermorelM-ECML05}) are still very useful for resolving the multi-agent MAB problem. In \cite{ChakrabortyCDJ-IJCAI17}, each agent either chooses one of the arms to pull or broadcasts its local historical experience in each round. The agents which choose to pull arms make their decisions based on SoftMax method. \cite{SzorenyiBHOJB-ICML13} proposes a gossip-based algorithm to address the MAB problem in \textit{Peer-to-Peer} (P2P) networks. Specifically, in every round, each agent first shares its empirical data to two randomly chosen neighbors and then performs $\varepsilon$-greedy method to make a decision to choose an arm to pull. However, this algorithm relies on constructing an overlay network with special topology. General social graphs (with arbitrary topology) are considered in \cite{KollaJG-TON18} where a hierarchical learning algorithm is designed. In \cite{KollaJG-TON18}, the dominating set of the graph should be first recognized. The agents in the dominating set (i.e., so-called ``leaders'') apply an UCB-based learning policy to choose among the arms according to the historical experiences collected from their one-hop neighbors, while each of the others (i.e., the ones who are not in the dominating set) makes the same decision as its leader. The UCB policy is also used in \cite{SankGS-PMACS19}; it assumes only a limited number of bits (i.e., the ID of the recommended arm) can be shared by each agent to a random peer.

  However, rich historical information may not always be available for individual agents, since an agent may not have sufficient memorizing capability. Referring to human choice behavior~\cite{BehimTE-EHB14,BianconiB-PRL01,GranovskiyGSG-TCS15} and animal behavior~\cite{PrattSMF-AB05,SeeleyB-BES99}, a two-staged algorithmic paradigm for collaborative learning is considered in \cite{CelisKV-PODC17,SuZL-MACS19}, which includes sampling stage and adopting stage as mentioned in Sec.~\ref{sec:intro}. In \cite{CelisKV-PODC17}, arms are sampled according to their popularities, calculating which in a complete graph is easy but may induce considerable communication overhead in a general communication graph. \cite{SuZL-MACS19} considers memory-bounded agents, each of which has a finite-valued memory such that only the latest adoption can be memorized~\cite{Cover-IC68,CoverH-TIT70}. The agents are asynchronous; hence, each of them randomly chooses only one of its neighbors to acquire recommendations in each round. Although the agents have limited memorizing capabilities, \cite{SuZL-MACS19} considers a well structured communication graph (e.g., a regular graph) and the agents can eventually learn the best arm through the asynchronous collaboration over the graph. In contrast, we take into account communication graphs with general topology in our algorithm. Parallel random walks are efficiently conducted over the general graph such that each agent can sufficiently share its latest experience with others through light-weight communications. By fully exploiting the synchronous collaboration among the agents, both learnability and reliability of our collaborative learning process can be ensured.

  Different from the stochastic MAB problems investigated in the above proposals, another variant of MAB is (non-stochastic) adversarial MAB where reward feedback is controlled by an adversary~\cite{BubeckB-FTML12,LykourisML-STOC18,KapoorPK-ML19,GuptaKT-COLT19}. \cite{VialSS-MobiHoc21} takes into account an adversarial setting which is similar to ours. In \cite{VialSS-MobiHoc21}, arms yield Bernoulli rewards and a malicious agent recommends an arbitrary arm instead of the one which it believes is the best. Although the collaborative learning algorithm proposed in \cite{VialSS-MobiHoc21} is of high robustness in face of malicious node, the agents adopt UCB policy to make decisions on which arm to pull with no memorization constraint considered.
  %

\section{System Model and Problem Description} \label{sec:sys}
  In this section, we first introduce our multi-agent graph model in Sec.~\ref{ssec:graph}. We then describe our collaborative learning problem in Sec.~\ref{ssec:collalearning}. We finally introduce the bounded memory model and adversary setting in Sec.~\ref{ssec:memory} and Sec.~\ref{ssec:adversary}, respectively. For ease of understanding, frequently used notations throughout this paper are summarized in Table~\ref{tab:notation}.
  \begin{table*}
  \caption{Frequently used symbols and notations.} \label{tab:notation}
  \vspace{-2ex}
	\begin{tabular}[t]{|p{3.cm}|p{14cm}|}
	\hline
	  $\mathcal G=(\mathcal N, \mathcal E)$ & A social graph consisting of agents $\mathcal N$ and communication edges $\mathcal E$ \\ \hline
    $N = |\mathcal N|$ & The number of agents \\ \hline
	  $\mathcal N_i \subseteq \mathcal N$, $d_i = |\mathcal N_i|$ & $\mathcal N_i$ denotes the set of the neighbors of agent $i$ and $d_i$ is the number of the neighbors of agent $i$\\ \hline
	  $\mathcal A = \{a_1, a_2, \cdots, a_K\}$ & A set of $K$ arms  \\ \hline
	  $\phi_k(r) \sim \mathsf{Bernoulli}(p_k)$  & The reward obtained by pulling $a_k$ in round $r$, which obeys a Bernoulli distribution parameterized by $p_k$  \\ \hline
	  $X_{i,k}(r) \in \{0,1\}$  & A variable indicating if (honest) agent $i$ adopts arm $a_k$ as a preference in round $r$  \\ \hline
	  $Z_k(r)$ & The number of (honest) agents adopting arm $a_k$ in round $r$   \\ \hline
	  $Q_k(r)$ & The popularity of arm $a_k$ among the (honest) agents in round $r$  \\ \hline
	  $\mathcal V_i(r)$ & The set of tokens (or arm recommendations) received by agent $i$ in round $r$  \\ \hline
    $\mathcal V_{i,k}(r)$ & The set of $a_k$-tokens (i.e., tokens recommending $a_k$) received by agent $i$ in round $r$ \\ \hline
    $Q_{i,k}(r)$ & The proportion of $a_k$-tokens among the ones received by agent $i$ in round $r$ \\ \hline
    $a_i(r)$, $\phi_{a_i(r)}$ & $a_i(r)$ denotes the arm sampled by agent $i$ in round $r$ and  $\phi_{a_i(r)}$ is the reward obtained by pulling $a_i(r)$\\ \hline
	  %
    %
	  %
	  %
	  %
	  $\omega_i$  & A variable indicating the adoption of agent $i$ which may be updated in each round   \\ \hline
    $M(r)$ & The number of tokens disseminated in the disseminating stage of round $r$  \\ \hline
    $M_k(r)$ & The number of $a_k$-tokens disseminated in the disseminating stage of round $r$  \\ \hline
    $\tau$ & The proportion of corrupted agents  \\ \hline
    $T$ & The number of slots in each round  \\ \hline
    \end{tabular}
  \end{table*}

  \subsection{Multi-Agent Graph} \label{ssec:graph}
    We consider a communication graph $\mathcal G=(\mathcal N, \mathcal E)$, where $\mathcal N=\{1,2,\cdots,N\}$ denotes a group of agents and $\mathcal E$ is a set of edge among the agents $\mathcal N$. If there is an edge between agents $i$ and $i' \in \mathcal N$, they can exchange messages with each other. Each agent $i$ has a set of neighbors $\mathcal N_i \subseteq \mathcal N$ and let $d_i = |\mathcal N_i|$ denote the degree of agent $i$. We suppose that $\mathcal G$ is \textit{connected} and \textit{non-bipartite}. This assumption has been extensively used in designing and analyzing distributed graph algorithms~\cite{DasMP-DISC12,DasNPT-JACM13}. Note that the assumption is only for our theoretical analysis and our algorithm still works even the assumption does not strictly hold. We also suppose that the agents are synchronized such that time can be divided into a sequence of time slots $t=1,2,\cdots$. We employ the CONGEST model to characterize the communications among the agents, which has been highly recognized in the field of distributed computing and communications \cite{Peleg-book00,GonenO-OPODIS17,CensorFSV-DC19,AgarwalR-SPAA20,HalldorssonKMT-STOC21}. By the CONGEST model, an agent transmits up to $\mathcal O(\log N)$ messages to each of its neighbors in a slot, while each message consists of $\mathcal O(\log N)$ bits. As will be shown in Sec.~\ref{sec:algo}, our algorithm proceeds iteratively. Each round $r$ is composed by $T$ slots \footnote{The length of each round, i.e. $T$, will be discussed later in Sec.~\ref{ssec:complexity}.} and each agent chooses one arm to pull in each round.

  \subsection{Collaborative Learning} \label{ssec:collalearning}
    The agents collaboratively solve a $K$-armed stochastic bandit problem. We denote by $\mathcal A = \{a_1, a_2, \cdots, a_K\}$ a set of $K$ arms. For each arm $a_k \in \mathcal A$, the reward process is a Bernoulli process parameterized by $p_k$. In another word, if arm $a_k$ is pulled in round $r$, the obtained reward $\phi_{k}(r) \sim \mathsf{Bernoulli}(p_k)$ such that $\mathbb P(\phi_{k}(r)=1) = p_k$ and $\mathbb P(\phi_{k}(r)=0) = 1-p_k$. Without loss of generality, we assume there exists a unique best arm (i.e., $a_1$) and $p_1 > p_2 \geq p_3 \geq \cdots \geq p_K \geq 0$. We also suppose that $p_1, p_2, \cdots, p_K$ are unknown to the agents initially. Our goal is to design a learning algorithm, based on which, the agents can collaboratively make decisions sequentially to choose among the arms to pull, in order to learn the best arm $a_1$ according to the reward feedback.

    We assume that an agent has at most one adoption (or preference) over the $K$ arms in each round. Since an agent may have no preference over the $K$ arms, we hereby introduce a ``virtual'' arm $a_0$ (which is called \textit{null} arm in the following) such that an agent is said to virtually adopt $a_0$ if it has no preference. Let $X_{i,k}(r) \in \{0,1\}$ be a binary variable indicating if agent $i$ adopts (or prefers) arm $a_k$ ($k=0,1,\cdots,K$) in round $r$. It is apparent that $\sum^K_{k=0} X_{i,k}(r)=1$ for any agent $i$ in round $r$. Then, the adoption state of the $N$-agent system can be represented by $\mathcal{X}(r) = \{ X_{i,0}(r), X_{i,1}(r), \cdots, X_{i,k}(r) \}_{ i \in \mathcal N}$. If $X_{i,0}(r)=1$ (or $X_{i,k}(r) = 0$ for any $k=1,\cdots,K$), agent $i$ is said to have a \textit{null} adoption in round $r$. Suppose $Z_k(r) = \sum^N_{i=1} X_{i,k}(r)$ denotes the number of the agents adopting arm $k$ in round $r$ and $Q_k(r) = Z_k(r)/\sum^K_{k'=1}Z_{k'}(r)$ is thus the popularity of arm $a_k$ in round $r$. Different from most existing proposals (e.g., \cite{AuerCF-ML02,BubeckB-FTML12,ShahrampourNT-TSP17,CelisKV-PODC17,KollaJG-TON18,SankGS-PMACS19}) which study the cumulative reward obtained within finite time horizon, we follow \cite{SuZL-MACS19} to define the learnability of our algorithm. Specifically, our learning process is said to succeed when all the agents eventually learn the best arm $a_1$. We formally define the success event for the collaborative learning process as
    \begin{align} \label{eq:succevent}
      \mathsf{Succ}(N) \triangleq& \{\text{Each agent eventually learns the best arm $a_1$}\}  \nonumber\\
      =& \left\{ \lim_{r \rightarrow \infty} Z_1(r) = N \right\}
    \end{align}
    As will be shown in Sec.~\ref{sec:analysis}, the success event holds with high probability when there are a sufficient number of agents participating in our collaborative learning process.

  \subsection{Limited memorization} \label{ssec:memory}
    We assume that each agent has limited memorizing capability for learning. In this paper, we borrow the assumption of limited memorization used in \cite{CoverH-TIT70,GhaffariP-PODC16,CelisKV-PODC17,SuZL-MACS19}. Specifically, assume that the memory state of any agent $i$ can be represented by a finite-valued variable $\omega_i \in \{0, 1,2,\cdots,K\}$. Under this memory model, an agent memorizes and recommends its most recent adoption only in each round during the collaborative learning process. It is then said that each agent has a memory of size $K+1$. Consequently, as mentioned in Sec.~\ref{sec:intro}, in our algorithm, an agent decides on which arm to pull in the sampling stage based on its current adoption and the experiences collected from its peers in the disseminating stage. It finally updates its adoption based on its current adoption and the reward obtained by pulling the arm chosen in the sampling stage.

    \subsection{Adversary model} \label{ssec:adversary}
      We assume that the agents may be corrupted by an adversary \cite{VialSS-MobiHoc21}. We suppose that the adversary have limited power such that only a (fixed) set of up to $\tau N$ agents suffer the corruptions, while the other $(1-\tau) N$ agents remain honest. In each round, the corrupted agents recommend arbitrary arms to the other honest ones. We also assume the honest agents do not know who is honest and who is corrupted.

\section{Our Collaborative Learning Algorithm} \label{sec:algo}
  In this section, we first present the details of our collaborative learning algorithm in Sec.~\ref{ssec:algodetails}. We then give a short discussion in Sec.~\ref{ssec:algodiscussion}, which is helpful to our later theoretical analysis.
  \subsection{Algorithm Details} \label{ssec:algodetails}
    Our algorithm is performed iteratively, and each agent conducts the following three stages in every round: i) in the \textbf{disseminating} stage, each agent $i$ disseminates its current adoptions (a.k.a. \textit{tokens}) to its peers through parallel \textit{Metropolis-Hasting Random Walks} (MHRWs) over graph $\mathcal G$; ii) in the \textbf{sampling} stage, agent $i$ randomly chooses one arm (denoted by $a_i(r) \in \mathcal A$) to pull based on the tokens (or arm recommendations) $\mathcal V_i(r)$ received from its peers; iii) in the \textbf{adopting} stage, each agent $i$ updates its adoption $\omega_i$ according to the reward obtained by pulling arm $a_i(r)$. The pseudo-code of our algorithm is given in \textbf {Algorithm}~\ref{alg:collearning}. We suppose that each agent $i \in \mathcal N$ initially has a null adoption $\omega_i \leftarrow a_0$.
    We also assume that each agent $i$ acquires the degrees of its neighbors (i.e., $\left\{d_{i'}\right\}_{i' \in \mathcal N_i}$) at the initialization phase. Since the neighbors of any agent are fixed, the information acquisition induces only light-weight local communications between agent $i$ and its neighbors $\mathcal N_i$.
    \begin{algorithm}[htb!]
      %
      $\rhd$ \underline{\textbf{Disseminating}}: \\
      $\mathcal V_i(r) = \emptyset$; \\
      \If{$\omega_i \neq a_0$}{
        Generate $h\log N$ tokens encapsulating $\omega_i$ and initialize the counters of the tokens to $T$; \\
        Push the tokens into the FIFO queue;
      }
      \ForEach{$t=1,2,\cdots,\mathcal O(\log^2 N)$}{
        $\rhd$ \textit{Transmitting}: {\\
        Pop the first up to $h \log N$ tokens out of the FIFO queue; \\
        Decrease the counters of the tokens by $1$; \\
        Transmit each of the tokens to $\mathcal N_i$ according to the probability distribution (\ref{eq:transprob});\\
        }
        $\rhd$ \textit{Receiving}: {\\
        \eIf{The received token is feasible}
        {
          Push it into the FIFO queue;
        }
        {
          Merge it into $\mathcal V_i(r)$;
        }
        }
      }
      $\rhd$ \underline{\textbf{Sampling}}: \\
      \eIf{$\omega_i=a_0$}{
        With probability $\mu$, set $a_i(r)$ to be one of the $K$ arms uniformly at random; \\
        With probability $1-\mu$:\\
        \begin{enumerate}
          \item[a.] If $\mathcal V_i(r) = \emptyset$, $a_i(r) \leftarrow a_0$; \\
          \item[b.] If $\mathcal V_i(r) \neq \emptyset$, uniformly choose $a_i(r) \in \mathcal V_i(r)$;
        \end{enumerate}}
      {
        If $\mathcal V_i(r) = \emptyset$, $a_i(r) \leftarrow a_0$; \\
        If $\mathcal V_i(r) \neq \emptyset$, uniformly choose $a_i(r) \in \mathcal V_i(r)$;
      }
      $\rhd$ \underline{\textbf{Adopting}}: \\
      \eIf{$a_i(r) = 0$}{
        Pull no arm and let $\phi_{a_i(r)}=0$;
      }{
        Pull arm $a_i(r)$ and observe reward $\phi_{a_i(r)}$;
      }
      \eIf{$\phi_{a_i(r)} = 0$}{
        $\omega_i$ unchanged;
      }{
        $\omega_i \leftarrow a_i(r)$;
      }
    \caption{Our collaborative learning algorithm (at each agent $i$ in round $r$).} 
    \label{alg:collearning}
    \end{algorithm}

    \textbf{(1) Disseminating}. 
    In the disseminating stage, each agent $i$ with non-null adoption $\omega_i\in\{a_1, \cdots, a_K\}$ encapsulates its current adoption $\omega_i$ and a length counter into a token (see Lines 2-6). The length counter indicates how many times the token will be forwarded in the residual disseminating stage and is initialized to $T \in \mathcal O(\log^2 N)$. A token is said to be \textit{feasible} if it has a non-zero length counter. Each agent $i$ then launches $h \log N$ MHRWs in parallel (where $h$ is a constant), each of which carries a copy of its token. A \textit{First-in-First-out} (FIFO) queue is used by each agent $i$ to buffer the received feasible tokens. As shown in Lines 8-11, in each slot of round $r$, agent $i$ pops the first up to $h\log N$ feasible tokens out of its queue and then forwards the tokens individually to its neighbors according to the probability distribution $\Psi(i,i')$ 
    \begin{equation} \label{eq:transprob}
      \Psi(i,i') = \begin{cases}
        \min \left\{ \frac{1}{d_i}, ~\frac{1}{d_{i'}} \right\}, ~\text{for}~\forall i' \in \mathcal N_i \\
        1 - \sum_{j \in \mathcal N_i} \Psi(i, j), ~\text{for}~i=i'
      \end{cases}
    \end{equation}
    Before being forwarded, each of the tokens has its length counter decreased by one. For each agent $i$, once receiving a token, it pushes the token into the FIFO queue if the token is feasible (with non-zero length counter); otherwise, it merges the token into $\mathcal V_i(r)$ (see Line 12-17). Note that the tokens are disseminated in a randomized manner, and an agent may receive no token such that $\mathcal V_i(r) = \emptyset$.

    \textbf{(2) Sampling}. 
    %
    %
    We let each agent either choose (or sample) one from the $K$ arms uniformly at random or make its sampling decision according to the suggestions received from its peers. In particular, for each agent $i$ in round $r$, if it does not have any preference (i.e., $\omega_i=a_0$), then
    \begin{itemize} 
      \item With probability $\mu\in[0,1]$, the agent $i$ chooses one of the $K$ arms uniformly at random (see Line 21);
      \item With probability $1-\mu$, if there is no token received (i.e., $\mathcal V_i(r) \neq \emptyset$), agent $i$ does not choose any arm  such that $a_i(r)=a_0$ (see Line 22a); otherwise, it chooses one of the suggested arms uniformly at random (see Line 22b).
    \end{itemize}
    If agent $i$ has a non-null adoption, it directly chooses one out of the suggestions $\mathcal V_i(r)$ uniformly at random, if $\mathcal V_i(r)\neq \emptyset$; otherwise, it chooses no arm (see Lines 24-25).  

    \textbf{(3) Adopting}. Each agent $i$ pulls arm $a_i(r)$ and observes the resulting reward $\phi_{a_i(r)}$. If $\phi_{a_i(r)} \neq 0$, agent $i$ updates its state $\omega_i \leftarrow a_i(r)$; otherwise, $\omega_i$ is unchanged.

  \subsection{Discussion}  \label{ssec:algodiscussion}
    According to the above algorithm, we have the following propositions which will be very helpful in our later analysis.
    \begin{proposition} \label{prop:leave}
      For every agent $i$, if there exists round $r$ such that $X_{i,0}(r) = 0$, we then have $X_{i,0}(r') = 0$ for any $r' \geq r+1$.
    \end{proposition}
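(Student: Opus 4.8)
The plan is to prove the claim by a one-step induction on the round index, reducing persistence across all $r' \ge r+1$ to a single transition property of the \textbf{adopting} stage. It suffices to establish the one-step implication that if agent $i$ holds a non-null adoption at round $r$ (i.e. $\omega_i \in \{1,\dots,K\}$), then it still holds a non-null adoption at round $r+1$; iterating this implication starting from the hypothesized round $r$ immediately yields the conclusion for every $r' \ge r+1$. Thus the whole argument hinges on showing that a non-null adoption is never demoted back to the null arm $a_0$ within a single round.

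First I would track the internal state $\omega_i$ through the \textbf{sampling} and \textbf{adopting} stages of \textbf{Algorithm}~\ref{alg:collearning} and isolate the only rule that can alter it. The key observation is that $\omega_i$ is modified solely through the assignment $\omega_i \leftarrow a_i(r)$, and that this assignment is executed only when a reward is collected, i.e. $\phi_i(r)=1$. By the adopting rule, a reward can be collected only if an arm is actually pulled, which in turn requires $a_i(r) \neq 0$. Consequently the assignment can overwrite $\omega_i$ only with a non-null arm and can never install $a_0$. In the complementary branch $\phi_i(r)=0$ (which includes the case $a_i(r)=0$, where no arm is pulled), $\omega_i$ is left unchanged and hence remains non-null by the inductive hypothesis.

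To close the case analysis I would invoke the \textbf{sampling} rule for a non-null agent: if $\mathcal V_i(r)=\emptyset$ it sets $a_i(r)=0$, and otherwise it picks $a_i(r) \in \mathcal V_i(r) \subseteq \{1,\dots,K\}$. In the first case no arm is pulled, so $\phi_i(r)=0$ and $\omega_i$ is unchanged; in the second case either $\phi_i(r)=0$ and $\omega_i$ is unchanged, or $\phi_i(r)=1$ and $\omega_i \leftarrow a_i(r)$ with $a_i(r) \neq 0$. In every case $\omega_i$ remains in $\{1,\dots,K\}$ at the end of round $r$, which is exactly the statement that agent $i$ holds a non-null adoption at round $r+1$. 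This completes the inductive step, and the proposition follows.

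The argument involves no real computation; the only point that needs care is the coupling between the reward feedback and the adoption update, namely the fact that $\phi_i(r)=1$ presupposes that a non-null arm was pulled, so that the unique overwriting rule $\omega_i \leftarrow a_i(r)$ can never reset a non-null adoption to the null one. It is also worth noting that this persistence is purely deterministic: it holds irrespective of the random-walk outcomes of the \textbf{disseminating} stage, the exploration probability $\mu$, the particular suggestions in $\mathcal V_i(r)$, and whether or not agent $i$ is corrupted, since corruption perturbs only the tokens an agent forwards to its peers and not its own adopting rule.
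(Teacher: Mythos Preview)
Your proposal is correct; the paper itself omits the proof, noting only that the proposition ``can be derived straightforwardly from the algorithm.'' Your one-step induction via the adopting rule --- that $\omega_i$ is overwritten only when $\phi_i(r)=1$, which forces $a_i(r)\neq 0$ --- is precisely the direct verification the authors have in mind, and you have also correctly read the intended meaning of the statement (persistence of a \emph{non-null} adoption) despite the apparent typo in the condition $X_{i,0}(r)\neq 0$.
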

    \begin{proposition} \label{prop:absorb}
      If there exists round $r$ in which all agents adopt the best arms $a_1$ such that $Z_1(r)=N$, then $Z_1(r')=N$ holds for any $r' \geq r+1$.
    \end{proposition}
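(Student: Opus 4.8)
The plan is to prove the absorbing property by reducing it to a single-round step and then iterating: it suffices to show that $Z_1(r)=N$ forces $Z_1(r+1)=N$, since applying the same implication to rounds $r+1,r+2,\dots$ yields $Z_1(r')=N$ for all $r'\geq r+1$ by induction. First I would translate the hypothesis into a statement about the memory variables. Because $Z_1(r)=\sum_{i=1}^N X_{i,1}(r)=N$ while $\sum_{k=0}^K X_{i,k}(r)=1$ for every $i$, unanimity on $a_1$ means $X_{i,1}(r)=1$ for all $i$, i.e. every agent enters round $r$ with $\omega_i=1$ and, in particular, no agent holds a null adoption.

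The heart of the argument is then to trace the three stages of round $r$ under this configuration. In the \textbf{disseminating} stage every agent satisfies $\omega_i=1\neq 0$, so every token that is generated, and therefore every token forwarded along any of the MHRWs and eventually merged by any agent, encapsulates the best arm; hence $\mathcal V_i(r)\subseteq\{a_1\}$ for every $i$. In the \textbf{sampling} stage, since no agent has a null adoption, each agent executes the non-null branch of \textbf{Algorithm}~\ref{alg:collearning}: if $\mathcal V_i(r)\neq\emptyset$ it sets $a_i(r)\in\mathcal V_i(r)=\{a_1\}$, and if $\mathcal V_i(r)=\emptyset$ it sets $a_i(r)=0$. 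In both cases $a_i(r)\in\{0,1\}$, so no agent can sample any arm other than $a_1$.

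It remains to check that the \textbf{adopting} stage reproduces the configuration exactly. If $a_i(r)=0$ the agent pulls nothing, so $\phi_i(r)=0$ and $\omega_i$ is left unchanged at $1$; if $a_i(r)=1$ the agent pulls $a_1$, and the outcome is harmless either way, since $\phi_i(r)=0$ leaves $\omega_i$ unchanged while $\phi_i(r)=1$ sets $\omega_i\leftarrow a_i(r)=1$. In every case $\omega_i=1$ persists into round $r+1$, so $X_{i,1}(r+1)=1$ for all $i$ and $Z_1(r+1)=N$, which closes the induction.

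The only delicate point—and the step I would treat as the crux—is the disseminating claim $\mathcal V_i(r)\subseteq\{a_1\}$: the whole absorbing property rests on the fact that unanimous adoption of $a_1$ leaves $a_1$ as the sole token in circulation, so that the sampling stage cannot surface a competing arm and the adoption update can never migrate an agent away from $\omega_i=1$. I would also make explicit the convention that pulling no arm yields no reward ($\phi_i(r)=0$), which is what prevents the $a_i(r)=0$ case from resetting $\omega_i\leftarrow 0$. Finally, I would note that this reasoning presumes the fault-free dynamics: a corrupted agent disseminating a random arm could inject a token for some $a_k$ with $k\neq 1$, which an honest agent might then sample and, upon reward (with probability $p_k>0$), adopt—so the state $Z_1=N$ is absorbing precisely in the absence of corruption.
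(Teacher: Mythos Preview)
Your proof is correct and is precisely the kind of direct case analysis of the three stages that the paper has in mind; the paper itself omits the argument entirely, stating only that the proposition ``can be derived straightforwardly from the algorithm,'' so you have simply filled in those details. Your caveats about the convention $\phi_i(r)=0$ when no arm is pulled and about the absorbing property failing under corruption are both appropriate and consistent with the paper, which restricts the learnability analysis (Sec.~\ref{ssec:learn}) to the fault-free case.
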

    \begin{proposition} \label{prop:local}
      If there exists round $r$ in which each agent has a non-null adoption and no one adopts the best arm such that $\sum^K_{k=2} Z_k(r) = N$ and $Z_1(r)=0$, we then have $Z_1(r') = 0$ for any $r' \geq r+1$.
    \end{proposition}
    It is stated in \textbf{Proposition}~\ref{prop:leave} that an agent who already has a non-null adoption will not adopt the null arm thereafter, and \textbf{Proposition}~\ref{prop:absorb} indicates that when all the agents adopt the best arm $a_1$, they will not adopt any other arms thereafter. In \textbf{Proposition}~\ref{prop:local}, it is shown that our algorithm may be ``trapped'' such that no agent adopts the best arm. Fortunately, as will be revealed in \textbf{Theorem}~\ref{thm:learnability}, when all agents have non-null adoptions and even a small fraction of them adopt the best arm, the probability for our algorithm to be ``trapped'' is highly small, if there exist a sufficient number of agents participating in the collaborative learning process. We hereby omit the proofs as these propositions can be derived straightforwardly from the algorithm.

    It is worth noting that, in every round $r$ (especially the later one with all agents having non-null adoptions, as will be shown in \textbf{Lemma}~\ref{le:z0decrease}.), each agent $i$ may receive multiple tokens (denoted by $\mathcal V_i(r)$) from its peers in the disseminating stage, and it then chooses one from $\mathcal V_i(r)$ uniformly at random to pull in the following sampling stage. In another word, the agents make their sampling decisions according to the popularities of the arms in the received tokens. Specifically, agent $i$ first calculates the proportion of arm $a_k$ in $\mathcal V_i(r)$ as
    \begin{equation} \label{eq:truepop}
      Q_{i,k}(r) = {V_{i,k}(r)}\big/{V_i(r)}
    \end{equation}
    where $V_{i}(r) = | \mathcal V_i(r) |$ and $V_{i,k}(r) = | \mathcal V_{i,k}(r) |$ denote the number of tokens and the number of $a_k$-tokens (i.e., the tokens recommending $a_k$) received by agent $i$ in round $r$, respectively. It then independently chooses any arm $a_k \in \mathcal{V}_i(r)$ with probability $Q_{i,k}(r)$. In fact, $Q_{i,k}(r)$ can be considered as an estimate of agent $i$ on $Q_k(r-1)$. As will be shown in \textbf{Lemma}~\ref{le:lwtail4Qik}, $Q_{i,k}(r)$ is very close to $Q_k(r-1)$ for any agent $i \in \mathcal N$ with high probability, thanks to our MHRW-based information disseminating mechanism.

\section{Analysis}  \label{sec:analysis}
  We first analyze the complexity of our algorithm mainly in terms of communications in Sec.~\ref{ssec:complexity} and then the learnability in Sec.~\ref{ssec:learn}. We finally reveal the reliability of our algorithm in face of agent corruptions in Sec.~\ref{ssec:reliability}.

  \subsection{Complexity} \label{ssec:complexity}
    As illustrate in \textbf{Algorithm}~\ref{alg:collearning}, the agents make decisions in each round by very light-weight computations, while the cost of our algorithm mainly stems from the MHRW-based information disseminating stage. 
    In the disseminating stage of our algorithm, each agent launches $h \log N$ MHRWs in parallel and the question is how many slots are necessitated for the parallel MHRWs to sufficiently disseminate the tokens over graph $\mathcal G$ in each round. Before answering this question in \textbf{Theorem}~\ref{thm:complexity}, we first introduce how a single MHRW disseminates a token over graph $\mathcal G$. Specifically, in each step (or slot) of the MHRW, the token is forwarded by current agent (e.g., agent $i$) to one of its neighbors (e.g., agent $i'$) with probability $\Psi(i,i')$ (see Eq.~(\ref{eq:transprob})). The matrix $\Psi_{\mathcal G} = [\Psi(i,i')]_{i,i' \in \mathcal N}$ is the transition matrix of the MHRW on graph $\mathcal G$. When graph $\mathcal{G}$ is connected and non-bipartite, $\Psi_{\mathcal G}$ is a symmetric doubly stochastic such that the MHRW eventually achieves a unique uniform stationary distribution when the time horizon for the token forwarding approaches infinity. In another word, the token reaches any agent with probability $\frac{1}{N}$ when $T \rightarrow \infty$. Furthermore, according to \cite{LevinP-Book17}, the MHRW on graph $\mathcal{G}$ is a reversible, irreducible Markov chain, and it achieves a nearly uniform distribution $\left[\frac{1}{N} - \kappa, \frac{1}{N} + \kappa\right]$ within at most $\frac{1}{\Gamma(\Psi_{\mathcal G})}\log \left( \frac{N}{\kappa} \right)$ slots, where $\Gamma(\Psi_{\mathcal G})$ denotes the spectral gap of the transition matrix $\Psi_{\mathcal G}$ which characterizes the connectivity of graph $\mathcal G$. In particular, when the token is forwarded at most $\frac{1}{\Gamma(\Psi_{\mathcal G})}\log \left( \frac{N}{\kappa} \right)$ times, the probability to reach any agent lies in the range $\left[\frac{1}{N} - \kappa, \frac{1}{N} + \kappa\right]$. We let $\kappa = \frac{1}{N^3}$ without sacrificing the generality and rationality of our analysis, such that the MHRW achieves a nearly uniform distribution $\left[\frac{1}{N} - \frac{1}{N^3}, \frac{1}{N} + \frac{1}{N^3}\right]$  in $\mathcal O \left( \frac{1}{\Gamma(\Psi_{\mathcal G})}\log N \right)$ slots. The bound also can be applied to expander graphs which are sparse graphs with strong connectivity properties, by considering the relationship between spectral gap and expansion ratio can be characterized by Cheeger's inequalities. Since we focus on analyzing how the number of agents impacts the performance of our collaborative learning algorithm in graphs with certain connectivity properties (i.e., with certain spectral gap or expansion ratio), we simply re-write the above bound as $\mathcal O \left(\log N \right)$.

    In our algorithm, each agent launches $h \log N$ parallel MHRWs to disseminate its current adoption. To let the agents sufficiently share their adoptions with each other, each random walk needs to proceed $\mathcal{O}(\log N)$ times to achieve a nearly uniform distribution $\left[ \frac{1}{N}-\frac{1}{N^3},  \frac{1}{N}+\frac{1}{N^3} \right]$. In the following, we concentrate on revealing how many slots each round should consist of such that all the $h N\log N$ random walks in our algorithm achieve a nearly uniform distribution under the CONGEST communication model.
    \begin{theorem} \label{thm:complexity}
      In \textbf{Algorithm}~\ref{alg:collearning} where each agent launches $h\log N$ MHRWs in parallel in each round, all the random walks achieve nearly uniform distribution $\left[ \frac{1}{N}-\frac{1}{N^3}, \frac{1}{N}+\frac{1}{N^3} \right]$ in $T \in \mathcal O(\log^2 N)$ slots with high probability.
    \end{theorem}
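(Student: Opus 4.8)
The plan is to decouple the argument into (i) the mixing of a single walk and (ii) the congestion incurred when all $M=h\log N\sum_k Z_k=\mathcal{O}(N\log N)$ tokens proceed simultaneously under the CONGEST budget. For (i) I would invoke the bound already recalled before \textbf{Lemma}~\ref{le:sampling}: on a connected non-bipartite $\mathcal{G}$, a single MHRW driven by the transition matrix $\Psi$ of~\eqref{eq:transprob} reaches the target window $[\frac{1}{N}-\frac{1}{N^3},\frac{1}{N}+\frac{1}{N^3}]$ after $T=\mathcal{O}(\log N)$ forwarding operations (taking $\kappa=1/N^3$). It then suffices to show that every token completes its $T$ forwarding operations within $\mathcal{O}(\log^2 N)$ slots with high probability; the extra $\log N$ factor absorbs the per-slot delay a token may suffer while waiting in the FIFO queues.

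The crux is a uniform load bound. Let $L_i(t)$ denote the number of tokens at agent $i$ after $t$ forwarding operations in an idealized, congestion-free coupling in which every queued token advances in every slot. Because $\Psi$ is symmetric and doubly stochastic, each power $\Psi^t$ is doubly stochastic, so $\mathbb{E}[L_i(t)]=\sum_j L_j(0)\,\Psi^t_{ji}\le \max_j L_j(0)\le h\log N$ uniformly in $t$ and $i$, since every agent launches either $0$ or exactly $h\log N$ tokens. As the tokens walk independently, $L_i(t)$ is a sum of $M$ independent indicators with mean $\mathcal{O}(\log N)$, so a Chernoff bound gives $L_i(t)\le c\,h\log N$ with probability $1-N^{-\Theta(1)}$, and a union bound over the $N$ agents and the $\mathcal{O}(\log^2 N)$ slots keeps this uniform with high probability. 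Consequently each agent needs to pop only $\mathcal{O}(\log N)$ tokens per slot, which fits its budget, and likewise the number dispatched across any single edge $(i,i')$, being dominated by $L_i(t)\,\Psi(i,i')\le L_i(t)$, respects the $\mathcal{O}(\log N)$ per-edge CONGEST cap.

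The main obstacle is that this load bound is established for the idealized process, whereas in the actual execution a token blocked by a saturated queue does not advance, so the real occupancy could in principle drift away from the ideal one. I would close this gap by a domination argument on the high-probability event that all ideal loads stay below $c\,h\log N$: as long as no queue has ever exceeded its per-slot pop budget up to slot $t$, the real configuration coincides with the idealized one, and then by induction on $t$ the load bound guarantees no queue overflows at slot $t+1$ either, so the two processes never diverge on this event. Feeding the resulting per-edge congestion $\mathcal{O}(\log N)$ and the dilation $T=\mathcal{O}(\log N)$ into a standard packet-routing delay bound shows that each forwarding step clears within $\mathcal{O}(\log N)$ slots, for a total of $\mathcal{O}(\log^2 N)$ slots. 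I would finish with a final union bound over the $h\log N$ parallel walks launched by each agent to conclude that all random walks simultaneously reach the nearly uniform window $[\frac{1}{N}-\frac{1}{N^3},\frac{1}{N}+\frac{1}{N^3}]$ within $\mathcal{O}(\log^2 N)$ slots with high probability.
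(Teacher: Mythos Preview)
Your proposal is correct in outline but takes a noticeably more elaborate route than the paper. The paper dispenses with the idealized/real coupling and the packet-routing machinery altogether by making a purely local observation about the \emph{actual} process: regardless of the current queue contents, each neighbor $i'$ of $i$ pops at most $h\log N$ tokens in a slot and routes each one to $i$ with probability $\Psi(i',i)=\min\{1/d_i,1/d_{i'}\}$, so the expected number of tokens arriving at $i$ in any slot satisfies
\[
\sum_{i'\in\mathcal N_i}\min\Bigl\{\tfrac{1}{d_i},\tfrac{1}{d_{i'}}\Bigr\}\cdot h\log N \;\le\; d_i\cdot\tfrac{h\log N}{d_i}\;=\;h\log N.
\]
A single concentration step (Hoeffding) then gives at most $2h\log N$ arrivals per slot with probability $1-N^{-\Omega(h)}$, and a short FIFO argument (a token entering at slot $t$ waits at most $t$ further slots) finishes the $\mathcal O(\log^2 N)$ bound. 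Your approach via double stochasticity of $\Psi^t$ and congestion/dilation packet-routing bounds also works and is arguably more robust to variants of the forwarding rule, but it pays for that generality with the coupling step; note in particular that your inductive coupling needs a bit more care, since the Chernoff bound only gives ideal load $c\,h\log N$ with $c>1$ while the per-slot pop budget is exactly $h\log N$, so the two processes do not literally coincide and you are implicitly relying on the packet-routing bound rather than the induction to close the argument. The paper's local arrival-rate bound avoids this subtlety entirely.
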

    \begin{proof}
      According to \textbf{Algorithm}~\ref{alg:collearning}, for each agent $i$, the expected number of tokens it receives in every slot is
      \begin{eqnarray*}
        && \sum_{i'\in \mathcal N_i} \min\left\{\frac{1}{d_i}, \frac{1}{d_{i'}}\right\} \times h\log N \\
        &=& \sum_{i'\in\mathcal N_i: \frac{1}{d_i} \geq \frac{1}{d_{i'}}} \frac{h\log N}{d_{i'}} + \sum_{i'\in\mathcal N_i: \frac{1}{d_i} \leq \frac{1}{d_{i'}}} \frac{h
        \log N}{d_{i'}}  \\
        &\leq& \sum_{i'\in\mathcal N_i: \frac{1}{d_i} \geq \frac{1}{d_{i'}}} \frac{h\log N}{d_{i}} + \sum_{i'\in\mathcal N_i: \frac{1}{d_i} \leq \frac{1}{d_{i'}}} \frac{h
        \log N}{d_{i}} \\
        &=& d_i \times \frac{h\log N}{d_i} = h \log N
      \end{eqnarray*}
      According to the Chernoff-Hoeffding bound~\cite{DubhashiP-book09}, agent $i$ receives at most $2 h \log N$ tokens in each slot with probability at least $1 - \frac{1}{N^{h/3}}$. Furthermore, considering the agents employ FIFO policy to forward the tokens, the tokens received by agent $i$ in slot $t$ will be delayed for at most $t$ additional slots. Hence, $\mathcal O(\log^2 N)$ slots are sufficient for all the tokens to be forwarded $\mathcal O(\log N)$ times, which completes the proof. 
    \end{proof}
    \begin{remark}
      According to the CONGEST communication model introduced in Sec.~\ref{sec:sys}, the above theorem implies that, in our collaborative learning process, every agent communicates $\mathcal O(\log^4 N)$ bits to each of its neighbors in each round. In addition, if taking into account the detailed structure of graph $\mathcal G$ (i.e., $\Psi_{\mathcal G}$), we can characterize the communication complexity more precisely according to \textbf{Theorem}~\ref{thm:complexity}. Specifically, following the proof of \textbf{Theorem}~\ref{thm:complexity}, we derive that each round should contain $\mathcal{O} \left( \frac{1}{\Gamma^2(\Psi_{\mathcal{G}})} \log^2 N \right)$ slots such that the tokens are disseminated sufficiently over the graph.
    \end{remark}

  \subsection{Learnability} \label{ssec:learn}
    We hereby investigate the learnability of our algorithm. In this section, we focus on a simplified case with no agent corrupted and will discuss the reliability of our algorithm later in Sec.~\ref{ssec:reliability}. In the following, we first demonstrate that, with an infinite number of agents (i.e., $N \rightarrow \infty$), the number of agents adopting the best arm is increased in each round on expectation (see \textbf{Theorem}~\ref{thm:inftyincrease}). We then improve the above results by revealing how large $N$ should be such that the number of agents adopting the best arm is increased with high probability in each round (see \textbf{Lemma}~\ref{le:finiteincrease}). We finally illustrate in \textbf{Theorem}~\ref{thm:learnability} that when there are a sufficient number of agents participating in the collaborative learning process (i.e., $N$ is sufficiently large), all the agents learn the best arm eventually with high probability. Especially, when $N\rightarrow\infty$, the probability approaches $1$.

    Before showing the number of agents adopting the best arm is increased in each round on expectation when $\mathcal N \rightarrow \infty$, we first prove in \textbf{Lemma}~\ref{le:sampling} that each agent $i$ could choose an arm $a_i(r) \in \mathcal A$ to pull in the sampling stage of each round $r$ according to its actual popularity $Q_k(r-1)$ when $N \rightarrow \infty$.
    \begin{lemma} \label{le:sampling}
      Suppose there are $M(r) = h \log N \sum^K_{k=1}Z_k(r-1)$ tokens disseminated over the graph $\mathcal G$ through the parallel MHRWs in each round $r$. Let $M_k(r) = Z_k(r-1) h\log N$ denote the number of $a_k$-tokens in round $r$ and $Q_k(r-1) = Z_k(r-1)/ \sum^K_{k'=1}Z_{k'}(r-1)  = M_k(r) / M(r)$ represent the popularity of arm $a_k$ in round $r-1$ (or at the beginning of round $r$). Given the adoption state $\mathcal X(r-1)$ in round $r-1$, we have
      \begin{equation} \label{eq:sampling}
        \lim_{N \rightarrow \infty} \mathbb P(a_i(r)=a_k \mid \mathcal X(r-1)) = Q_k(r-1), ~\forall k=1,\cdots,K
      \end{equation}
    \end{lemma}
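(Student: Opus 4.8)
The plan is to reduce the sampling probability to an \emph{expected fraction of received tokens} and then show this fraction concentrates at $M_k(r)/M$. Fix the adoption state $\mathcal{X}(r-1)$, so that the token counts $M_k(r)=Z_k(r-1)h\log N$ and $M=\sum_{j=1}^{K}M_j(r)$ are deterministic. The tokens that agent $i$ merges into $\mathcal{V}_i(r)$ are precisely those whose length counters have expired after $\mathcal{O}(\log N)$ forwardings, i.e.\ exactly the walks that have already reached the near-uniform distribution. Interpreting the limited-memory rule as reservoir sampling (agent $i$ keeps one token chosen uniformly among those in $\mathcal{V}_i(r)$, consistent with a single $\lceil\log(K+1)\rceil$-bit register), and writing $Y_{i,k}$ for the number of $a_k$-tokens reaching $i$ and $Y_i=\sum_k Y_{i,k}$, the suggestion-driven choice satisfies
\[
\mathbb{P}(a_i(r)=a_k\mid\mathcal{X}(r-1)) = \mathbb{E}\!\left[\frac{Y_{i,k}}{Y_i}\,\mathbf{1}\{Y_i>0\}\,\Big|\,\mathcal{X}(r-1)\right],
\]
where the $Y_i=0$ case yields the null arm and does not contribute. (For a null agent the independent $\mu$-branch adds a separate uniform term $\mu/K$, which is treated apart; non-null agents use $\mu=0$.) The goal is then to prove this expectation tends to $M_k/M=Q_k(r-1)$.

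Next I would model $Y_{i,k}$. Since the parallel MHRWs carry independent randomness and, by the mixing result cited from \cite{LevinP-Book17}, each expired token lands at a prescribed agent $i$ with probability $\pi_i^{(j)}\in[\tfrac1N-\kappa,\tfrac1N+\kappa]$ with $\kappa=1/N^3$, each $Y_{i,k}$ is a sum of $M_k$ independent Bernoulli indicators. Hence $\mu_{i,k}:=\mathbb{E}[Y_{i,k}]=\sum_j\pi_i^{(j)}\in[M_k(\tfrac1N-\kappa),M_k(\tfrac1N+\kappa)]$, and similarly for $\mu_i:=\mathbb{E}[Y_i]$. The key observation is that the common near-$1/N$ landing factor cancels in the ratio of means:
\[
\frac{\mu_{i,k}}{\mu_i}\in\left[\frac{M_k}{M}\cdot\frac{1/N-\kappa}{1/N+\kappa},\ \frac{M_k}{M}\cdot\frac{1/N+\kappa}{1/N-\kappa}\right]=\frac{M_k}{M}\bigl(1\pm\mathcal{O}(1/N^2)\bigr),
\]
so that $\mu_{i,k}/\mu_i\to M_k/M$ as $N\to\infty$.

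It then remains to transfer this convergence from the ratio of expectations to the expectation of the ratio, which is where concentration enters. In the regime where a constant fraction of the agents hold non-null adoptions, $\mu_i=\Theta(\log N)\to\infty$, so a Chernoff bound for sums of independent Poisson trials gives $Y_{i,k}=(1\pm\varepsilon_N)\mu_{i,k}$ and $Y_i=(1\pm\varepsilon_N)\mu_i$ simultaneously with probability $1-\mathcal{O}(N^{-c})$ for some $\varepsilon_N\to0$. On this good event the ratio $Y_{i,k}/Y_i$ lies within a factor $\tfrac{1+\varepsilon_N}{1-\varepsilon_N}$ of $\mu_{i,k}/\mu_i$ and hence converges to $M_k/M$; on the complementary event the ratio is bounded by $1$ while the event has probability $\mathcal{O}(N^{-c})$, so it contributes a vanishing amount to the expectation. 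Combining the two pieces and letting $N\to\infty$ yields $\mathbb{P}(a_i(r)=a_k\mid\mathcal{X}(r-1))\to M_k/M=Q_k(r-1)$.

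The step I expect to be the main obstacle is concentration for arms of small popularity: when $Z_k(r-1)=o(N)$ the mean $\mu_{i,k}$ need not diverge, so $Y_{i,k}$ is not sharply concentrated in relative terms. I would handle this by observing that the denominator $Y_i$ still concentrates (its mean diverges whenever the total non-null mass is $\Theta(N)$) and by bounding the aggregate contribution of all low-popularity arms to both $Y_{i,k}$ and the target value $M_k/M$ by a common $o(1)$ term, so each individual arm's ratio is still pinned to $M_k/M$ within vanishing error. Care is also needed to justify treating the token trajectories as independent and mixed; this is legitimate for the lemma because each of the $h\log N$ parallel walks is assumed to have reached its stationary distribution, with the queueing/congestion coupling and the required round length deferred to the complexity analysis of Sec.~\ref{ssec:complexity}.
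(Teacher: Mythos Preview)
Your argument is sound, but the paper takes a noticeably different and somewhat slicker route. Rather than studying the random ratio $Y_{i,k}/Y_i$ and invoking concentration, the paper fixes a \emph{single} token and bounds the probability that this particular token is the one agent $i$ ends up selecting. Because agent $i$ picks uniformly among the tokens that reach it, and because every token reaches $i$ with probability in $[\tfrac1N-\tfrac1{N^3},\tfrac1N+\tfrac1{N^3}]$, a (near-)symmetry argument shows that each token is the selected one with probability sandwiched between two expressions that both tend to $1/M$ as $N\to\infty$. Linearity then gives $\mathbb{P}(a_i(r)=a_k\mid\mathcal X(r-1))\approx M_k/M=Q_k(r-1)$ directly, with no Chernoff bound and no separate treatment of low-popularity arms. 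In effect the paper exploits the identity $\sum_j \mathbb{E}[B_j/Y_i\cdot\mathbf 1\{Y_i>0\}]=\mathbb{P}(Y_i>0)$ together with the fact that the summands are nearly equal, so each is $\approx 1/M$; your approach instead controls numerator and denominator of the ratio separately. Your method is perfectly valid and is the more ``generic'' probabilistic attack, but it costs you the extra case analysis for arms with $\mu_{i,k}\not\to\infty$ and the assumption that $\mu_i\to\infty$ (i.e., a non-vanishing fraction of agents are non-null), whereas the paper's per-token calculation handles all popularity regimes uniformly.
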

    \begin{proof}
      Recall that the probability for each token to independently reach agent $i$ in the disseminating stage is in the range $\left[ {1}/{N}-{1}/{N^3}, {1}/{N}+{1}/{N^3} \right]$ and each agent independently chooses one arm from the received suggestions uniformly at random in the sampling stage. Hence, the probability for any token (say $a_k$-token) to be chosen by agent $i$ in the sampling stage of round $r$ can be upper-bounded by
      \begin{align*} \label{eq:sampling1}
        %
        & \sum^{M(r)}_{v=1} \frac{1}{v}  \binom{M(r)-1}{v-1}\left(\frac{1}{N}+\frac{1}{N^3}\right)^{v}\left(1-\frac{1}{N}+\frac{1}{N^3}\right)^{M(r)-v}  \nonumber\\
        &= \frac{\sum^{M(r)}_{v=1} \binom{M(r)}{v}\left(\frac{1}{N}+\frac{1}{N^3}\right)^{v}\left(1-\frac{1}{N}+\frac{1}{N^3}\right)^{M(r)-v}}{M(r)}  \nonumber\\
        &\leq \frac{1}{M(r)} \left( 1 + \frac{2}{N^3} \right)^{M(r)}
      \end{align*}
      Considering there are $M_k(r)$ $a_k$-tokens in round $r$ and $M_k(r) \leq h N \log N$, we have
      \begin{align*}
        &\mathbb P(a_i(r)=a_k \mid \mathcal X(r-1)) \\
        \leq& \frac{M_k(r)}{M} \left( 1 + \frac{2}{N^3} \right)^{M(r)} \leq Q_k(r-1) \left( 1 + \frac{2}{N^3} \right)^{Nh\log N}
      \end{align*}
      Therefore,
      \begin{eqnarray} \label{eq:sampling2}
        \lim_{N \rightarrow \infty} \mathbb P(a_i(r)=a_k \mid \mathcal X(r-1)) \leq Q_k(r-1)
      \end{eqnarray}

      Similarly, for any token in round $r$, the lower bound on the probability that it is chosen by agent $i$ in the sampling stage is calculated by
      \begin{align*}
        & \sum^{M(r)}_{v=1} \frac{1}{v} \binom{M(r)-1}{v-1} \left( \frac{1}{N} - \frac{1}{N^3} \right)^v \left( 1 - \frac{1}{N} - \frac{1}{N^3} \right)^{M(r)-v} \nonumber\\
        &= \frac{\sum^{M(r)}_{v=1} \binom{M(r)}{v}\left(\frac{1}{N}+\frac{1}{N^3}\right)^{v}\left(1-\frac{1}{N}+\frac{1}{N^3}\right)^{M(r)-v}}{M(r)}  \nonumber\\
        &= \frac{1}{M(r)} \left( \left( 1-\frac{2}{N^3} \right)^{M(r)} - \left( 1-\frac{1}{N}-\frac{1}{N^3} \right)^{M(r)} \right) \nonumber\\
        %
        %
        &\geq \frac{1}{M(r)} \left( 1 - \frac{2h}{N} - \left( 1-\frac{1}{N} \right)^{Nh \log N} \right)
      \end{align*}
      where the first inequality holds since i) $\left( 1-\frac{2}{N^3} \right)^{M(r)} \geq 1 - \frac{2h\log N}{N^2} \geq 1 - \frac{2h}{N}$ due to the Bernoulli inequality, and ii) $1-\frac{1}{N}-\frac{1}{N^3} \leq 1-\frac{1}{N}$, when $N\geq 2$. Therefore, we have
      \begin{eqnarray} \label{eq:sampling3}
        &&\lim_{N \rightarrow \infty} \mathbb P(a_i(r)=a_k \mid \mathcal X(r-1)) \nonumber\\
        &\geq& \lim_{N \rightarrow \infty} Q_k(r-1) \left( 1 - \frac{2h}{N^2} - \left( 1-\frac{1}{N} \right)^{Nh \log N} \right)  \nonumber\\
        &=& Q_k(r-1)
      \end{eqnarray}
      We complete the proof by combining (\ref{eq:sampling2}) and (\ref{eq:sampling3}).
    \end{proof}

    \begin{theorem} \label{thm:inftyincrease}
      When $N \rightarrow \infty$, in any round $r$ such that $Z_1(r-1) < N$ (or $Z_1(r-1) \leq N-1$), we have
      \begin{equation} \label{eq:increase0}
        \mathbb E[Z_1(r) - Z_1(r-1) \mid Z_1(r-1) < N] > 0
      \end{equation}
    \end{theorem}  
    \begin{proof}
      Let $Z_1^+(r)$ and $Z_1^-(r)$ be the number of agents whose adoptions are changed from $a_{\geq 2} \in \{a_2, \cdots, a_K\}$ to $a_1$ in round $r$ and the number of agents whose adoptions are changed from the best arm $a_1$ to $a_{\geq 2}$ in round $r$, respectively.  Let $q_{i,k}(r) \triangleq \mathbb P(a_i(r)=a_k \mid \mathcal X(r-1))$ be the probability that agent $i$ chooses $a_k$ in the sampling stage of round $r$ conditioned on $\mathcal X(r-1)$. At the beginning of round $r$, we can divide the agents into three subsets $\mathcal S_0(r-1) = \{i\in\mathcal N \mid \omega_i(r-1)=a_0\}$, $\mathcal S_1(r-1)= \{i\in\mathcal N \mid \omega_i(r-1)=a_1\}$ and $\mathcal S_{\geq 2}(r-1) = \{i\in\mathcal N \mid \omega_i(r-1) \in \{a_2,\cdots,a_K\}\}$. In another word, $\mathcal S_0(r-1)$, $\mathcal S_1(r-1)$ and $\mathcal S_{\geq 2}(r-1)$ denote the subsets of the agents adopting $a_0$, $a_1$ and $a_{\geq 2}$ in round $r-1$, respectively. Hence, we then have $Z_0(r-1) = |\mathcal S_0(r-1)|$ and $Z_1(r-1) = |\mathcal S_1(r-1)|$. We also let $Z_{\geq 2}(r-1) = \sum^K_{k=2} Z_k(r-1) = |\mathcal S_{\geq 2}(r-1)|$ denote the number of agents adopting any $a_{\geq 2} \in \{a_2, \cdots, a_K\}$.
      
      According to \textbf{Algorithm}~\ref{alg:collearning}, we have
      \begin{itemize}
        \item For $\forall i \in \mathcal S_1(r-1)$, it adopts the same arm (i.e., $a_1$) in round $r$ with probability $q_{i,1}(r) + q_{i,0}(r) + \sum^K_{k=2} q_{i,k}(r)(1-p_k)$, while adopting one of the others (i.e., $a_{\geq 2}$) with probability $\sum^K_{k=2} q_{i,k}(r)p_k$.
        \item For $\forall i \in \mathcal S_{\geq 2}(r-1)$, it adopts the best arm $a_1$ in round $r$ with probability $q_{i,1}(r)p_1$.
        \item For $\forall i \in \mathcal S_0(r-1)$, the probability to adopt arm $a_1$ in round $r$ is $\left( \frac{\mu}{K} + (1-\mu) q_{i,1}(r) \right) p_1$.
      \end{itemize}
      Therefore, the conditional expectations of $Z^+_1(r)$ and $Z^-_1(r)$ then can be defined as
      \begin{align*}
        & \mathbb E[Z^+_1(r) \mid \mathcal X(r-1)] \nonumber\\
        =& \hspace{-2ex} \sum_{i\in \mathcal S_0(r-1)} \left( \frac{\mu}{K} + (1-\mu) q_{i,1}(r) \right) p_1  + \hspace{-2ex}\sum_{i \in \mathcal S_{\geq 2}(r-1)} q_{i,1}(r)p_1 \\
      \end{align*}
      and
      \begin{align*}
        & \mathbb E \left[ Z^-_1(r) \mid \mathcal X(r-1) \right] = \sum_{i \in \mathcal S_1(r-1)} \sum^K_{k=2} q_{i,k}(r)p_k
      \end{align*}
      respectively. According to \textbf{Lemma}~\ref{le:sampling}, when $N \rightarrow \infty$, we can re-write $Z^+_1(r)$ and $Z^-_1(r)$ as
      \begin{align} \label{eq:expz1}
        & \mathbb E[Z^+_1(r) \mid \mathcal X(r-1)] \nonumber\\
        =& Z_0(r-1)\left( \frac{\mu}{K} + (1-\mu)Q_1(r-1) \right) p_1  \nonumber\\
         & + Z_{\geq 2}(r-1)Q_1(r-1)p_1
      \end{align}
      and
      \begin{align}\label{eq:expz2}
        \mathbb E[Z^-_1(r) \mid \mathcal X(r-1)] = Z_1(r-1)\sum^K_{k=2}Q_k(r-1)p_k 
      \end{align}
      When $Z_1(r-1) < N$, we have
      \begin{align} \label{eq:learnneq}
        &\frac{\mathbb E[Z^+_1(r) \mid Z_1(r-1) < N]}{\mathbb E[Z^+_1(r) \mid Z_1(r-1)  < N] + \mathbb E[Z^-_1(r) \mid Z_1(r-1) < N]}  \nonumber\\
        &\geq {\xi_0}/{\xi_1}
      \end{align}
      where
      \begin{align*}
        \begin{cases}
          \xi_0 = (N -\mu Z_0(r-1) -Z_1(r-1)) p_1 Q_1(r-1) \vspace{2ex}\\
          \xi_1 = (N -\mu Z_0(r-1) -Z_1(r-1)) p_1 Q_1(r-1)) \\
          ~~~~~~~+ Z_1(r - 1)\sum^K_{k=2} Q_{k}(r-1)p_k
        \end{cases}
      \end{align*}
      Furthermore, since $p_2 \geq p_3 \geq \cdots \geq p_K$ and $0 \leq \mu \leq 1$, we have
      \begin{align*} 
        & Z_1(r - 1)\sum^K_{k=2} Q_{k}(r-1)p_k \nonumber\\
        \leq& Z_1(r - 1)p_2\sum^K_{k=2} Q_k(r-1)   \nonumber\\
        =& Q_1(r - 1) p_2 (N - Z_0(r-1) -Z_1(r-1))   \nonumber\\
        \leq& Q_1(r - 1) p_2 (N -\mu Z_0(r-1) -Z_1(r-1))
      \end{align*}
      By substituting the above inequality into (\ref{eq:learnneq}), we have
      \begin{align} \label{eq:learnneq3}
        &\frac{\mathbb E[Z^+_1(r) \mid Z_1(r-1) < N]}{\mathbb E[Z^+_1(r) \mid Z_1(r-1) < N] + \mathbb E[Z^-_1(r) \mid Z_1(r-1) < N]}  \nonumber\\
        &\geq \frac{p_1}{p_1 + p_2} > \frac{1}{2}
      \end{align}
      and thus $\mathbb E[Z^+_1(r) \mid Z_1(r-1) < N] > \mathbb E[Z^-_1(r) \mid Z_1(r-1) < N]$, according to which, we finally complete the proof since $Z_1(r) = Z_1(r-1) + Z^+_1(r) - Z^-_1(r)$.
    \end{proof}
    \begin{remark} \label{re:learningrate}
      The above theorem also implies that, when $N \rightarrow \infty$
      \begin{equation}
        \frac{\mathbb{E}[Z^+_1(r) \mid \mathcal X(r-1)]}{\mathbb{E}[Z^-_1(r) \mid \mathcal X(r-1)]} \geq \frac{p_1}{p_2}
      \end{equation}
      That is, if the best arm $a_1$ is much better than the second best one, we could have much more agents adopting $a_1$ on expectation in each round $r$.
    \end{remark}

    In the following, we analyze the learnability of our algorithm with a finite number of agents. To facilitate our analysis, we assume $r$ is sufficiently large such that each agent has a non-null adoption (i.e., $Z_0(r) = 0$ and thus $M(r)= h \log N \sum^K_{i=1}Z_k(r) = Nh\log N$), since $Z_0(\cdot)$ is decreased with high probability in the early phase of our collaborative learning process, as shown in \textbf{Lemma}~\ref{le:z0decrease}.
    \begin{lemma} \label{le:z0decrease}
      Recall $\mathcal{S}_0(r-1) \subseteq \mathcal{N}$ denotes the set of agents with null adoptions and $Z_0(r-1) = |\mathcal{S}_0(r-1)|$. In any round $r$ with $Z_0(r-1) = \zeta_0 N$, we have
      \begin{align}
        Z_0(r-1)-Z_0(r) \geq \frac{(1-\delta)\zeta_0 N \mu}{K}\sum^K_{k=1} p_k, ~\forall\delta \in (0,1)
      \end{align}
      hold with probability at least $1 - \exp \left( -\frac{\zeta_0 N \mu \delta^2}{2K}\sum^K_{k=1} p_k \right)$.
    \end{lemma} 
    \begin{proof}
      Let $B_i(r)$ be a random variable indicating if agent $i$ adopts a non-null arm in round $r$. Specifically, if agent $i$ has a non-null adoption in round $r$, $B_i(r)=1$; otherwise, $B_i(r)=0$. Due to \textbf{Proposition}~\ref{prop:leave}, $Z_0(\cdot)$ actually is non-decreasing; hence, $Z_0(r-1)-Z_0(r) = \sum_{i \in \mathcal S_0(r-1)} B_i(r)$. Each agent $i \in \mathcal{S}_0(r-1)$ adopts a non-null arm in round $r$ by either uniform sampling or learning from its peers; therefore, $\mathbb{P}(B_{i}(r)=1) \geq \frac{\mu}{K}\sum^K_{k=1}p_k$. We then complete the proof by applying the Chernoff-Hoeffding bound~\cite{DubhashiP-book09}.
    \end{proof}

    As mentioned in \textbf{Proposition}~\ref{prop:absorb}, when all the agents adopt the best arm as their preference in some round, they do not change their adoption thereafter. Therefore, we concentrate on analyzing the evolution of $Z_1(\cdot)$ when $1 \leq Z_1(\cdot) \leq N-1$. It is shown by the following lemmas that, with high probability, we have $Z_1(r) - Z_1(r-1) \geq 1$ when $1 \leq Z_1(r-1) \leq N-1$ when there are a sufficient number of agents participating in our collaborative learning process.
    \begin{lemma}  \label{le:finiteincrease}
      Let $\Delta Z_1(r) = Z_1(r) - Z_1(r-1)$. Assume $Z_1(r-1) = \zeta_1 N$ where $1 \leq \zeta_1 N \leq N-1$. Let $0 < \varepsilon < \frac{1}{2} \zeta_1 \left(1 - \zeta_1\right)(p_1-p_2)$ and $h \geq 24/\varepsilon^2$. When $N$ is sufficiently large such that $\frac{\log N}{N} \leq \frac{1}{8h}$, we have
      \begin{align}
        \mathbb P( \Delta Z_1(r) \geq 1 \mid Z_1(r-1) = \zeta_1N) \geq 1 - \frac{2}{1+2N\varepsilon^2}
      \end{align}
    \end{lemma}
    \begin{proof}
      As mentioned in Sec.~\ref{ssec:algodiscussion}, each agent $i$ makes its sampling decision according to $Q_{i,k}(r)$ in each round $r$. Before diving into the proof of this lemma, we first show in \textbf{Lemma}~\ref{le:lwtail4Qik} that the difference between $Q_{i,k}(r)$ and $Q_k(r-1)$ can be bounded. Due to the space limit, we present the proof of \textbf{Lemma}~\ref{le:lwtail4Qik} in the appendix.
      \begin{lemma} \label{le:lwtail4Qik}
        In each round $r$, for any $k=1, 2, \cdots, K$, we have
        \begin{align}
          &\mathbb{P} \left(Q_{i,k}(r) \geq Q_k(r-1)-\varepsilon, \forall i\in \mathcal N \mid \mathcal X(r-1) \right)  \nonumber\\
          &\geq 1 - \frac{N^{1-h\varepsilon^2/8}}{1 - {4h N^{-1} \log{N}}} - \frac{1}{N^2}, ~~0 \leq \varepsilon \leq Q_k(r-1)
        \end{align}
        and
        \begin{align}
          &\mathbb{P} \left(Q_{i,k}(r) \leq Q_k(r-1)+\varepsilon, \forall i\in \mathcal N \mid \mathcal X(r-1) \right)  \nonumber\\
          &\geq 1 - \frac{N^{1-h\varepsilon^2/8}}{1 - {4h N^{-1} \log{N}}} - \frac{1}{N^2}, ~~0 \leq \varepsilon \leq 1-Q_k(r-1)
        \end{align}
        when $h \geq 64$ and $N$ is sufficiently large such that $N \geq 4 h \log N$.
      \end{lemma}

      For each agent $i \in \mathcal N$, let $A_{i,1}(r) \in \{0,1\}$ denote a Bernoulli random variable indicating if agent $i$ adopts $a_1$ in round $r$. According to Sec.~\ref{sec:algo}, for each agent $i \in \mathcal S_{\geq 2}(r-1)$, $\mathbb P(A_{i,1}(r)=1) = Q_{i,1}(r)p_1$, while for each agent $i \in \mathcal S_1(r-1)$, $\mathbb P(A_{i,1}(r)=1) = Q_{i,1}(r) + \sum^K_{k=2}Q_{i,k}(r)(1-p_k)$. Therefore, we have $\mathbb E[A_{i,1}(r)] = Q_{i,1}(r)p_1$ for $i \in \mathcal S_{\geq 2}(r-1)$ and $\mathbb E[A_{i,1}(r)] = Q_{i,1}(r) + \sum^K_{k=2}Q_{i,k}(r)(1-p_k)$ for $i \in \mathcal S_1(r-1)$. As $Z_1(r) = \sum^N_{i=1} A_{i,1}(r)$, by applying the Chernoff-Hoeffding bound~\cite{DubhashiP-book09}, we have
      \begin{align}
        \frac{Z_1(r)}{N} \geq& \frac{1}{N} \sum_{i \in \mathcal S_1(r-1)} \left( Q_{i,1}(r) + \sum^K_{k=2}Q_{i,k}(r)p_k \right)  \nonumber\\
        &+ \frac{1}{N} \sum_{i \in \mathcal S_{\geq2}(r-1)} Q_{i,1}(r)p_1 - \varepsilon  \nonumber\\
        \geq& \frac{p_1}{N} \sum_{i \in \mathcal S_{\geq 2}(r-1)} Q_{i,1}(r) + \frac{p_2}{N} \sum_{i \in \mathcal S_{1}(r-1)} Q_{i,1}(r)  \nonumber\\
        &+ \frac{(1-p_2)Z_1(r-1)}{N} - \varepsilon
      \end{align}
      hold with probability at least $1-\exp(-2N\varepsilon^2)$ (where $\varepsilon > 0$), where the second inequality holds since $p_2 \geq p_3 \geq \cdots \geq p_K$. Considering \textbf{Lemma}~\ref{le:lwtail4Qik}, we continue the above inequality and then have
      \begin{align}
        \frac{\Delta Z_1(r)}{N} =& \frac{Z_1(r) - Z_1(r-1)}{N}  \nonumber\\
        \geq& (1-Q_1(r-1))Q_1(r-1)(p_1-p_2)  \nonumber\\
        &- (p_1 - (p_1-p_2) Q_1(r-1))\varepsilon - \varepsilon \nonumber\\
        \geq& (1-Q_1(r-1))Q_1(r-1)(p_1-p_2) - 2\varepsilon  \nonumber\\
        =& \zeta_1 \left( 1-\zeta_1 \right)(p_1-p_2) - 2\varepsilon > 0
      \end{align}
      hold with probability at least 
      \begin{align}
        & 1 - \frac{N^{1-h\varepsilon^2/8}}{1 - {4h N^{-1} \log{N}}} - \frac{1}{N^2} - \exp(-2N\varepsilon^2)  \nonumber\\
        \geq& 1 - \frac{3}{N^2} - \frac{1}{1+2N\varepsilon^2} \geq 1 - \frac{2}{1+2N\varepsilon^2}
      \end{align}
      where the first inequality holds because i) $N$ is sufficiently large such that $\frac{\log N}{N} \leq \frac{1}{8h}$ and $h \geq \frac{24}{\varepsilon^2}$ and ii) $\exp(-2N \varepsilon^2) \leq \frac{1}{1+2N\varepsilon^2}$ when $N \geq 2$, while the second one holds since $\frac{N^2}{3} \geq 1 + \frac{1}{4}N(p_1-p_2) \geq 1 + N \zeta_1 (1-\zeta_1)(p_1-p_2) \geq 1 + 2 N \varepsilon \geq 1 + 2 N \varepsilon^2$ when $N \geq 2$ and $\varepsilon < \frac{1}{2}\zeta_1(1-\zeta_1)(p_1-p_2)$. We finally complete the proof by considering $\Delta Z_1(r)$ is a positive integer such that $\Delta Z_1(r) >0$ naturally implies $\Delta Z_1(r) \geq 1$.
    \end{proof}

    \vspace{-3ex}
    \begin{remark}
      As shown by \textbf{Theorem}~\ref{thm:inftyincrease}, when $N \rightarrow \infty$, the expected rate at which $Z_1(\cdot)$ achieves $N$ in each round $r$ is $\mathbb E[Z_1(r) - Z_1(r-1) \mid Z_1(r-1) \leq N-1] \geq 1$. When the number of agents is infinite, based on \textbf{Lemma}~\ref{le:finiteincrease}, the rate of convergence of our algorithm in any round $r$ can be characterized by \textbf{Corollary}~\ref{cor:rate}.
      \begin{corollary}  \label{cor:rate}
        In any round $r$ such that $Z_1(r-1) = \zeta_1 N \geq 1$,
        \begin{align*} \label{eq:rate}
           \mathbb E[Z_1(r) - Z_1(r-1) \mid Z_1(r-1) = \zeta_1 N] \geq 1 - \frac{2(1+\zeta_1 N)}{1+2N \varepsilon^2}
        \end{align*}
        for any $\varepsilon \in \left(0, \frac{1}{2}\zeta_1(1-\zeta_1)(p_1-p_2)\right)$, when $N$ is sufficiently large such that $\frac{\log N}{N} \leq \frac{1}{8h}$ where $h \geq 24/\varepsilon^2$.
      \end{corollary}
      \begin{proof}
        Since $Z_1(r) \in \{0, 1, \cdots, N\}$, we have
        \begin{align*}
          &\mathbb E[Z_1(r) - Z_1(r-1) \mid Z_1(r-1) = \zeta_1 N] \nonumber\\
          =& \sum^0_{\ell = -Z_1(r-1)} \ell \mathbb{P}(\Delta Z_1(r) = \ell \mid Z_1(r-1) = \zeta_1 N)  \nonumber\\
          &+ \sum^{N-Z_1(r-1)}_{\ell = 1} \ell \mathbb{P}(\Delta Z_1(r) = \ell \mid Z_1(r-1) = \zeta_1 N)  \nonumber\\
          \geq& -Z_1(r-1) \mathbb{P}(\Delta Z_1(r) \leq 0 \mid Z_1(r-1) = \zeta_1 N)   \nonumber\\
          & + \mathbb{P}(\Delta Z_1(r) \geq 1 \mid Z_1(r-1) = \zeta_1 N)  \nonumber\\
          =& (\zeta_1 N+1) \mathbb{P}(\Delta Z_1(r) \geq 1 \mid Z_1(r-1) = \zeta_1 N) - \zeta_1 N
        \end{align*}
        We finally complete proof by considering \textbf{Lemma}~\ref{le:finiteincrease}      
      \end{proof}
    \end{remark}

    Based on what we have revealed in \textbf{Lemma}~\ref{le:finiteincrease}, we prove the learnability of our algorithm in \textbf{Theorem}~\ref{thm:learnability}.
    \begin{theorem} \label{thm:learnability}
      Suppose $Z_1(0) = \zeta_1 N$ (where $1 \leq \zeta_1 N \leq N-1$) initially. Let $0 < \varepsilon < \frac{1}{2}\zeta_1(1-\zeta_1)(p_1-p_2)$ and $h \geq 24/\varepsilon^2$ be constants. When $N \geq 9$ is sufficiently large such that $\frac{\log N}{N} \leq \frac{1}{8h}$ and $N > \frac{3}{2\varepsilon^2}$, we have
      \begin{equation} \label{eq:learnability}
        \mathbb P \left( \mathsf{Succ}(N) \mid Z_1(0) = \zeta_1 N \right) \geq 1 - \left( \frac{2}{2N \varepsilon^2-1} \right)^{\zeta_1 N}
      \end{equation}
      Especially, as $N \rightarrow \infty$, $\mathbb P \left( \mathsf{Succ}(N) \mid Z_1(0) = \zeta_1 N \right) \rightarrow 1$.
    \end{theorem}
    \begin{proof}
      According to \textbf{Algorithm}~\ref{alg:collearning}, the evolution of $Z_1(\cdot)$ is a Markov process with the transition probability distribution described in \textbf{Lemma}~\ref{le:finiteincrease}. The goal is to analyze the probability of $Z_1(\cdot)$ hitting $N$. Unfortunately, it is highly non-trivial to analyze the evolution of $Z_1(\cdot)$, as we can transit from state $Z_1(r-1)$ to any other ones $Z_1(r) = 0, 1, \cdots, N$. Therefore, in the following, we build a simplified Markov process whose probability of hitting $N$ can be better understood.

      Let $\{\bar r_j\}_{j\geq 1} \triangleq \{r \geq 0: \Delta Z_1(r) \neq 0\}$ denote the sequence of rounds where $Z_1(\cdot)$ jumps. Then, by assuming $\bar{Z}_1(j) \triangleq Z_1(r_j)$, we can use $\bar{Z}_1(\cdot)$ to represent the evolution of $Z_1(\cdot)$. Hence, $\mathsf{Succ}(N)$ can be re-defined by
      \begin{equation} \label{eq:succevent2}
        \mathsf{Succ}(N) \triangleq \left\{ \lim_{j \rightarrow \infty} \bar{Z}_1(j) = N \right\}
      \end{equation}
      such that 
      \begin{align*}
        &\mathbb P \left( \lim_{r \rightarrow \infty} Z_1(r)=N \mid  Z_1(0)=\zeta_1 N \right) \nonumber\\
        &= \mathbb P \left( \lim_{j \rightarrow \infty} \bar{Z}_1(j)=N \mid  \bar{Z}_1(0)=\zeta_1 N \right).
      \end{align*} 
      Furthermore, we define a ``worse'' random process $\bar{Z}'_1(\cdot)$ based on $\bar{Z}_1(\cdot)$ as follows
      \begin{align} \label{eq:simpletransit}
      \begin{cases}
        \mathbb{P} \left( \bar{Z}'_1(j) = \bar{Z}'_1(j-1)+1 \right) = 1 - \frac{2}{1+2 N \varepsilon^2} \vspace{1ex}\\
        \mathbb{P} \left( \bar{Z}'_1(j) \leq \bar{Z}'_1(j-1)-1 \right) = \frac{2}{1+2 N \varepsilon^2}
      \end{cases}
      \end{align}
      such that 
      \begin{align}
        &\mathbb{P} \left(\lim_{r\rightarrow\infty} \bar{Z}_1(r) = N \mid \bar{Z}_1(0) = \zeta_1 N\right)  \nonumber\\
        &\geq \mathbb{P}\left(\lim_{j\rightarrow\infty} \bar{Z}'_1(j) = N \mid \bar{Z}'_1(0) = \zeta_1 N\right)
      \end{align}
      In another word, $\bar{Z}'_1(\cdot)$ is defined by inducing an artificial condition $\bar{Z}_1(j) - \bar{Z}_1(j-1) \leq 1$ to $\bar{Z}_1(\cdot)$.
      Then, our aim becomes deriving a lower bound on $\mathbb P \left( \mathsf{Succ}(N) \mid \bar{Z}'_1(0)=\zeta_1 N \right)$. The lower bound should hold for any transition probability distribution of $\bar{Z}'_1(\cdot)$ respecting the conditions shown in (\ref{eq:simpletransit}). In the following, for notation convenience, let $\Delta\bar{Z}'_1(j) = \bar{Z}'_1(j) - \bar{Z}'_1(j-1)$, $q = \mathbb{P}\left(\Delta\bar{Z}'_1(j)=1 \mid \bar{Z}'_1(j)=\zeta_1 N \right)$ and $P_{\zeta_1 N} = \mathbb P \left( \mathsf{Succ}(N) \mid \bar{Z}'_1(0)=\zeta_1 N \right)$. Then,
      \begin{align} \label{eq:recurrence00}
        P_{\zeta_1 N}=&\mathbb{P} \left( \mathsf{Succ}(N) \mid \bar{Z}'_1(0) = \zeta_1 N \right)  \nonumber\\
        =& \mathbb{P} \left(\Delta \bar{Z}'_1(1) =1 \mid \bar{Z}'_1(0) = \zeta_1 N \right)  \nonumber\\
         & \cdot \mathbb{P} \left( \mathsf{Succ}(N) \mid \Delta \bar{Z}'_1(1) =1, \bar{Z}'_1(0) = \zeta_1 N \right)  \nonumber\\
         & +\mathbb{P} \left( \Delta \bar{Z}'_1(1) \leq -1 \mid \bar{Z}'_1(0) = \zeta_1 N \right)  \nonumber\\
         & \cdot \mathbb{P} \left( \mathsf{Succ}(N) \mid \Delta \bar{Z}'_1(1) \leq -1, \bar{Z}'_1(0) = \zeta_1 N \right)   \nonumber\\
        =& q \cdot \mathbb{P} \left( \mathsf{Succ}(N) \mid  \bar{Z}'_1(1)=\zeta_1 N+1 \right)  \nonumber\\
         & + (1-q) \cdot \mathbb{P} \left( \mathsf{Succ}(N) \mid  \bar{Z}'_1(1)\leq \zeta_1 N -1 \right) \nonumber\\
        \leq& q P_{\zeta_1 N+1}+ (1-q) P_{\zeta_1 N-1}
      \end{align}
      where the inequality holds since 
      \begin{align*}
        &\mathbb{P} \left( \mathsf{Succ}(N) \mid  \bar{Z}'_1(j) \leq \zeta_1 N - 1 \right) \\
        &\leq \mathbb{P} \left( \mathsf{Succ}(N) \mid \bar{Z}'_1(j)=\zeta_1 N-1 \right)
      \end{align*}
      By substituting ${P}_{\zeta_1 N} = q {P}_{\zeta_1 N} + (1-q) {P}_{\zeta_1 N}$ into (\ref{eq:recurrence00}), we have
      \begin{align}
        {P}_{\zeta_1 N+1} - {P}_{\zeta_1 N} \geq \frac{1-q}{q} \left( {P}_{\zeta_1 N} - {P}_{\zeta_1 N-1} \right)
      \end{align}
      Since $\mathbb P \left( \Delta\bar{Z}'_1(j) \geq 1 \mid \bar{Z}'_1(j-1) < \zeta'_1 N \right) \leq q$ for any $\zeta'_1 < \zeta_1$, by recurrence,
      \begin{equation}
        {P}_{\zeta_1 N+1} - {P}_{\zeta_1 N} \geq ({P}_1 - {P}_0) \left(\frac{1-q}{q}\right)^{\zeta_1 N}
      \end{equation}
      Since we hereby consider each agent has a non-null adoption, we have ${P}_0 = 0$ due to \textbf{Proposition}~\ref{prop:local}. Therefore, the above inequality can be re-written as
      \begin{equation}
        {P}_{\zeta_1 N+1} - {P}_{\zeta_1 N} \geq {P}_1 \left(\frac{1-q}{q}\right)^{\zeta_1 N}
      \end{equation}
      Furthermore,
      \begin{align}
        {P}_{\zeta_1 N+1} - {P}_{1} \geq \sum^{\zeta_1 N}_{\ell=1} ({P}_{\ell+1} - {P}_{\ell}) = {P}_1 \sum^{\zeta_1 N}_{\ell=1} \left(\frac{1-q}{q}\right)^\ell  
      \end{align}
      Therefore,
      \begin{align}
        {P}_{\zeta_1 N+1} \geq {P}_1\sum^{\zeta_1 N}_{\ell=0} \left(\frac{1-q}{q}\right)^\ell   = {P}_1 \frac{1 - \left(\frac{1-q}{q}\right)^{\zeta_1 N+1}}{1 - \frac{1-q}{q}} 
      \end{align}
      When $\zeta_1 N = N-1$, $P_N \geq \frac{1 - \left(\frac{1-q}{q}\right)^{N}}{1 - \frac{1-q}{q}} {P}_1$. Since ${P}_N=1$, 
      \begin{equation} \label{eq:probsucc1}
        {P}_1 \leq \frac{1 - \frac{1-q}{q}}{1-\left(\frac{1-q}{q}\right)^{N}}
      \end{equation}
      Note that the equality in (\ref{eq:probsucc1}) holds with $\mathbb{P} \left( \bar{Z}'_1(j) = \bar{Z}'_1(j-1)-1 \right) = 1-q$, which also satisfies the condition stated in (\ref{eq:simpletransit}). Therefore, we have
      \begin{equation*} 
        P_{\zeta_1 N+1} \geq \frac{1- \left( \frac{1-q}{q} \right)^{\zeta_1 N + 1}}{1-\left( \frac{1-q}{q} \right)^N}
      \end{equation*}
      Since $N > \frac{3}{2\varepsilon^2}$, we have $\frac{1}{2} < q \leq 1$ and thus $0 < 1-\left( \frac{1-q}{q} \right)^N \leq 1$. Therefore, we get
      \begin{align} 
        P_{\zeta_1 N} \geq& \frac{1- \left( \frac{1-q}{q} \right)^{\zeta_1 N}}{1-\left( \frac{1-q}{q} \right)^N} \geq 1- \left( \frac{1-q}{q} \right)^{\zeta_1 N} \nonumber\\
        =& 1 - \left( \frac{2}{2N \varepsilon^2-1} \right)^{\zeta_1 N}
      \end{align}   
      which completes the proof.
    \end{proof} 
    \begin{remark} \label{re:nfixed}
      It is revealed in \textbf{Theorem}~\ref{thm:learnability} that, given an initial condition $Z_1(0) = \zeta_1 N$, when the number of agents, i.e., $N$, is sufficiently large (with respect to constants $\varepsilon$ and $h$), $\mathsf{Succ}(N)$ holds with high probability. Another interesting question is, given fixed $N$, with what initial condition, the learnability can be ensured? \textbf{Theorem}~\ref{thm:learnability} also give an implication to answer this question. Specifically, given fixed $N$ and $h \leq \frac{N}{8 \log N}$, let $\varepsilon$ is a constant such that $\varepsilon > \sqrt{\frac{3}{2N}}$ and $\varepsilon \geq \frac{2\sqrt{6}}{h}$. When $\zeta_1 \in (0,1)$ satisfies $\zeta_1(1-\zeta_1) > {2\varepsilon}/{(p_1 - p_2)}$, we have (\ref{eq:learnability}) holds. 
    \end{remark}

    As shown in \textbf{Theorem}~\ref{thm:learnability}, the success event $\mathsf{Succ}(N)$ happens with high probability under an initial condition $Z_1(0) = \zeta_1 N$, whereas our algorithm begins with a more unified initial condition that all agents having no preference such that $Z_1(0)=0$ as demonstrated in Sec.~\ref{sec:algo}. In the following \textbf{Lemma}~\ref{le:init}, we reveal the evolution of $Z_1(\cdot)$ in the early phase of our algorithm where $Z_0(r) \neq 0$ and thus extend \textbf{Theorem}~\ref{thm:learnability} to a more general initial condition.
    \begin{lemma} \label{le:init}
      In any round $r$ such that $Z_0(r-1) = \zeta_0 N$ where $1 \leq \zeta_0 N \leq N$, for any $\delta \in (0,1)$, we have
      \begin{align} \label{eq:init}
        &\mathbb P \left( Z_1(r) \geq (1-\delta) \frac{\mu p_1 \zeta_0 N}{K} ~\bigg|~ Z_0(r-1) = \zeta_0 N \right)  \nonumber\\
        &\geq 1 - \exp\left( - \frac{\mu p_1 \zeta_0 N \delta^2}{2K}\right)
      \end{align}
    \end{lemma}
    \begin{proof}
      Recall $\mathcal{S}_0(r-1) \subseteq \mathcal{N}$ denote the group of agents with null adoptions in round $r$ and thus $Z_0(r-1) = |\mathcal{S}_0(r-1)|$. According to \textbf{Algorithm}~\ref{alg:collearning}, in any round $r$ with $\mathcal{S}_0(r-1) \neq \emptyset$ (or $Z_0(r-1) \neq 0$), each agent $i \in \mathcal{S}_0(r-1)$ adopts the best arm $a_1$ through either uniform sampling or learning from its peers. In particular, when the agent $i$ adopts $a_1$ through the uniform sampling, we have $a_i(r)=a_1$ with probability $\frac{\mu}{K}$ in the sampling stage and $\omega_i \leftarrow a_1$ with probability $p_1$ in the adopting stage. Therefore, in any round $r$ with $Z_0(r-1) \neq 0$, the probability for each agent $i \in \mathcal{S}_0(r-1)$ to adopt $a_1$ is at least $\frac{\mu p_1}{K}$. Hence, $\mathbb{E}\left[\sum_{i\in \mathcal{S}_0(r-1)} X_{i,1}(r)\right] \geq \frac{\mu p_1 \zeta_0 N}{K}$. According to the Chernoff-Hoeffding bound~\cite{DubhashiP-book09}, we have
      \begin{align*}
        &\mathbb{P}\left( \sum_{i\in \mathcal{S}_0(r)} X_{i,1}(r) \geq \frac{\mu p_1 \zeta_0 N(1-\delta) }{K} ~\bigg|~ Z_0(r-1) = \zeta_0 N \right)  \nonumber\\
        &\geq 1 - \exp\left( - \frac{\mu p_1 \zeta_0 N \delta^2}{2K}\right)
      \end{align*}
      which completes the proof since $Z_1(r) \geq \sum_{i\in \mathcal{S}_0(r)} X_{i,1}(r)$.
    \end{proof}

  \subsection{Reliability } \label{ssec:reliability}
    We now look at the reliability of our proposed collaborative learning algorithm in face of agent corruptions. We let $\mathcal{N}^\dagger \subseteq \mathcal{N}$ denote the subset of $\tau N$ corrupted agents. For notation convenience, we re-define some notations and symbols used to analyze the learnability of our algorithm with no agent corruptions considered. We denote by $Z_k(r)$ the number of honest agents adopting arm $a_k$ in round $r$ and let $Q_k(r) = \frac{Z_k(r)}{(1-\tau)N}$ denote the proportion of the honest agents adopting $a_k$ in round $r$. We also denote by $Z^+_1(r)$ the number of honest agents who adopt $a_{\geq 2}$ in round $r-1$ and $a_1$ in round $r$, and let  $Z^-_1(r)$ be the number of honest agents who adopt $a_{1}$ in round $r-1$ and $a_{\geq 2}$ in round $r$. Similarly, let $\mathcal X(r)$ denote the adoption state of the honest agents in round $r$.

    It is worthy to note that the learnability should be re-explained in this case. When there are a fixed set of corrupted agents, their falsified recommendations may “deceive” some of the honest agents to adopt non-optimal arms. For example, as shown in \textbf{Lemma}~\ref{le:upbd}, there always are a fraction of honest agents adopting the non-optimal arms on expectation, when $N \rightarrow \infty$. Furthermore, it is demonstrated by \textbf{Lemma}~\ref{le:finitedecrease} that, when there are a large number of agents participating in the collaborative learning process, at lest one honest agent adopting $a_1$ in round $r-1$ would take $a_{\geq 2}$ as adoption in round $r$, even all agents already adopted the best arm $a_1$ in round $r-1$. Therefore, we reveal the reliability of our collaborative learning algorithm by illustrating the evolution of $Z_1(\cdot)$, as shown in \textbf{Theorem}~\ref{thm:infreliability} and \textbf{Theorem}~\ref{thm:finitereliability}. 
    \begin{lemma} \label{le:upbd}
      Suppose $\tau N$ agents are corrupted and the number of arms $K\geq 2$. When $N \rightarrow \infty$, in any round $r$, we have
      \begin{equation} \label{eq:upbdq1}
        \mathbb{E}[Q_1(r)] \leq 1 - \frac{\tau p_K}{K}
      \end{equation}
    \end{lemma}
    \begin{proof}
      It is well known that $\mathbb E[Q_1(r))] \leq \mathbb E[Q_1(r) \mid Z_1(r-1) = (1-\tau)N)]$ in any round $r$, since $Z_1(r) \leq (1-\tau) N$ holds for any $r$. Hence, we prove this lemma by deriving an upper bound on $\mathbb E[Q_1(r) \mid Z_1(r-1) = (1-\tau)N)]$. Let $\widetilde Q_k(r)$ denote the popularity of $a_k$-token disseminated in round $r$. In any round $r$ with $Z_1(r-1) = (1-\tau)N$, all $a_{\geq 2}$-tokens are from the corrupted agents. Since each of them chooses an arbitrary arm uniformly at random, we have $\mathbb E \left[ \widetilde{Q}_k(r) \big| Z_1(r-1) = (1-\tau)N \right] = {\tau}/{K}$ for any $k=2,\cdots,K$. According to \textbf{Lemma}~\ref{le:sampling}, when $N \rightarrow \infty$, an honest agent adopts $a_{\geq 2}$ with probability $\sum^K_{k=2} \widetilde{Q}_k(r) p_k$. Therefore, we complete the proof by $\mathbb E[ Q_1(r) \mid Z_1(r-1) = (1-\tau)N ] = 1 - \mathbb E \left[\sum^K_{k=2} \widetilde{Q}_k(r) p_k\right]\leq 1 - \frac{\tau}{K}\sum^K_{k=2}p_k$. Since $K \geq 2$ and $p_2 \geq \cdots \geq p_K$, we have $1-\frac{\tau}{K}\sum^K_{k=2}p_k \leq 1 - \frac{\tau p_K}{K}$. 
    \end{proof}

    \begin{lemma}  \label{le:finitedecrease}
      For any $0 \leq \varepsilon \leq \min \left\{ \frac{(1-\tau)Np_K}{4((1-\tau)Np_K+1)}, \sqrt{\frac{(1-\tau)\tau}{5K}} \right\}$, in any round $r$, we have $Z^-_1(r) \geq 1$ with probability at least $1-\frac{3}{1+2(1-\tau)N\varepsilon^2}$ when $N \geq 9$ is sufficiently large such that $N \geq 8h \log N$ and $h \geq \frac{24}{\varepsilon^2}$.
    \end{lemma}
    \begin{proof}
      Given a fixed set of $\tau N$ agents, it is well known that $\mathbb P (Z^-_1(r) \geq 1) \geq \mathbb P (Z^-_1(r) \geq 1 \mid Z_1(r-1)=(1-\tau)N)$. Hence, we prove this lemma by revealing the lower bound of $\mathbb P (Z^-_1(r) \geq 1 \mid Z_1(r-1)=(1-\tau)N)$. Therefore, the remaining of the proof is conducted with condition $Z_1(r-1) = (1-\tau)N$. Let $M'_k(r)$ denote the number of corrupted agents choosing arm $a_k$ to disseminate in round $r$. Since each corrupted agent chooses an arbitrary arm uniformly at random to ``deceive'' the honest agents, according to the Chernoff-Hoeffding bound \cite{DubhashiP-book09}, we have
      \begin{align}
        \mathbb P \left( M'_1(r) \leq \frac{3\tau N}{2K} \right) \geq 1 - \exp\left(-\frac{\tau N}{10K}\right)
      \end{align}
      and thus
      \begin{align} \label{eq:finitedecrease00}
        \mathbb P \left( \widetilde{Q}_1(r) \leq 1 - \left(1-\frac{3}{2K}\right)\tau \right) \geq 1 -  \exp\left(-\frac{\tau N}{10K}\right)
      \end{align}
      According to the the Chernoff-Hoeffding bound \cite{DubhashiP-book09} again, for any $0 \leq \varepsilon \leq 1 - \widetilde{Q}_1(r)$, we have
      \begin{align}
        Z^-_1(r) \geq& \sum^{(1-\tau)N}_{i=1} \sum^K_{k=2} \widetilde{Q}_{i,k}(r)p_k - \varepsilon  \nonumber\\
        \geq& p_K \sum^{(1-\tau)N}_{i=1} \left( 1-\widetilde{Q}_{i,1}(r) \right) - \varepsilon
      \end{align}
      hold with probability at least $1 - \exp(-2(1-\tau)N \varepsilon^2)$. By combining (\ref{eq:finitedecrease00}) with the above inequality, we then obtain
      \begin{align*}
        Z^-_1(r) \geq (1-\tau)Np_K \left( 1- \widetilde{Q}_1(r) \right) - \left( (1-\tau)Np_K+1 \right)\varepsilon
      \end{align*}
      hold with probability at least $1 - \frac{3}{N^2} - \frac{1}{1+2(1-\tau)N \varepsilon^2}$, when $N\geq 9$ is sufficiently large such that $N \geq 8h \log N$ and $h \geq \frac{24}{\varepsilon^2}$. Therefore, for any sufficiently small $\varepsilon$ such that $0 \leq \varepsilon \leq \min \left\{ \frac{(1-\tau)Np_K}{4((1-\tau)Np_K+1)}, \sqrt{\frac{(1-\tau)\tau}{5K}} \right\}$, we have $\mathbb P\left(Z^-_1(r) > 0 \right) \geq 1-\frac{3}{1+2(1-\tau)N\varepsilon^2}$, which completes the proof since $Z^-_1(r)$ is a positive integer.
    \end{proof}

    In \textbf{Theorem}~\ref{thm:infreliability}, we show that $\mathbb{E}[Z_1(r) - Z_1(r-1) \mid \mathcal X(r-1)] = \mathbb{E}[Z^+_1(r) - Z^-_1(r) \mid \mathcal X(r-1)] \geq 0$ when $N \rightarrow \infty$ but the proportion of the corrupted agents, i.e., $\tau$, is bounded.
    \begin{theorem} \label{thm:infreliability}
      Suppose there are $\tau N$ corrupted agents. Let $0<\alpha<1$. When $N \rightarrow \infty$, if $\tau \leq \frac{(1-\alpha)(p_1-p_2)}{(1-\alpha)p_1 + \alpha p_2}$, we have 
      \begin{equation} \label{eq:infreliability}
        \mathbb{E}(Z_1(r) - Z_1(r-1) \mid \mathcal{X}(r-1)) \geq 0
      \end{equation}
      in any round $r$ such that $0 < Q_1(r-1) \leq \alpha$.
    \end{theorem} 
    \begin{proof}
      Recall $\widetilde Q_k(r)$ denotes the popularity of $a_k$-token disseminated in round $r$. Since the corrupted nodes choose an arbitrary arm to disseminate uniformly at random, we have
      \begin{align} \label{eq:corruptineq}
        \widetilde{Q}_k(r) \geq (1-\tau)Q_k(r-1), ~\forall k=1,\cdots,K
      \end{align}
      According to \textbf{Lemma}~\ref{le:sampling}, when $N \rightarrow \infty$, each agent $i$ chooses arm $a_k$ with probability $\widetilde{Q}_k(r)$ in the sampling stage of round $r$. The expected values of $Z^-_1(r)$ and $Z^-_1(r)$ can be represented by
      \begin{align}
        &\mathbb E[Z^+_1(r) \mid \mathcal X(r-1)] = ((1-\tau)N - Z_1(r-1))\widetilde{Q}_1(r) p_1 \nonumber\\
        &\geq ((1-\tau)N - Z_1(r-1))(1-\tau){Q}_1(r) p_1
      \end{align}
      and
      \begin{align}
        &\mathbb E[Z^-_1(r) \mid \mathcal X(r-1)] = Z_1(r-1)\sum^K_{k=2} \widetilde{Q}_k(r)p_k \nonumber\\
        &\geq  (1-\tau)(N - Z_1(r-1)){Q}_1(r)p_2
      \end{align}
      respectively. Therefore,
      \begin{align}
        \frac{\mathbb E[ Z^+_1(r) \mid \mathcal X(r-1)]}{\mathbb E[ Z^-_1(r) \mid \mathcal X(r-1)]} \geq  \frac{p_1}{p_2} \cdot \frac{1-\tau}{1 + \frac{\tau}{1/Q_1(r-1) - 1}}
      \end{align}
      When $0 < Q_1(r-1) \leq \alpha$, we continue the above inequality and obtain
      \begin{align}
        \frac{\mathbb E[ Z^+_1(r) \mid \mathcal X(r-1)]}{\mathbb E[ Z^-_1(r) \mid \mathcal X(r-1)]} \geq  \frac{p_1}{p_2} \cdot \frac{(1-\alpha)(1-\tau)}{1 - \alpha + \alpha\tau}
      \end{align}
      We finally have $\mathbb E[Z^+_1(r) \mid \mathcal X(r-1)] \geq \mathbb E[Z^-_1(r)  \mid \mathcal X(r-1)]$ and thus complete the proof when $\tau \leq \frac{(1-\alpha)(p_1-p_2)}{(1-\alpha)p_1 + \alpha p_2}$.
    \end{proof}

    We also demonstrate the reliability of our algorithm with a finite but sufficient number of agents participating in the collaborative learning process.
    \begin{theorem} \label{thm:finitereliability}
      Let $0 < \alpha'<\alpha<1$ be constants. Let $\tau < \frac{(1-\alpha)(p_1-p_2)}{(1-\alpha)p_1+\alpha p_2}$ and assume $g(x) = -(1-\tau)(p_1-p_2)x^2 + ((1-\tau)p_1 - p_2)x$ is a quadratic function. In any round $r$ such that $\alpha' \leq Q_1(r-1) \leq \alpha$, we have
      \begin{align}
        &\mathbb P[ Z_1(r) - Z_1(r-1) \geq 1 \mid \mathcal X(r-1)]  \nonumber\\
        \geq& 1 - \frac{2}{1+2(1-\tau)N\varepsilon^2}
      \end{align}
      where $0 < \varepsilon < \min\{g(\alpha'), g(\alpha)\}$, when $N$ is sufficiently large such that $\frac{\log N}{N} \leq \frac{1}{8h}$ and $h \geq \frac{24}{\varepsilon^2}$.
    \end{theorem}
    \begin{proof}
      According to \textbf{Lemma}~\ref{le:lwtail4Qik}, in each round $r$, for any $k=1,2,\cdots,K$, we have
      \begin{align} \label{eq:advqik}
        &\mathbb{P} \left(Q_{i,k}(r) \geq \widetilde{Q}_k(r)-\varepsilon \mid \mathcal X(r-1), \forall i\right)  \nonumber\\
        \geq& 1 - \frac{N^{1-h\varepsilon^2/8}}{1 - {4h N^{-1} \log{N}}} - \frac{1}{N^2}, ~~0 \leq \varepsilon \leq \widetilde{Q}_k(r)
      \end{align}
      We denote by $\mathcal S_1(r-1) = \left\{i \in \mathcal{N}^\dagger : X_{i,1}(r-1)=1 \right\}$ and $\mathcal S_{\geq 2}(r-1) = \left\{i \in \mathcal{N}^\dagger : \sum^K_{k=2} X_{i,k}(r) = 1 \right\}$ the set of honest agents adopting arm $a_1$ and the set of ones adopting $a_{\geq 2}$, respectively. We then have $Z_1(r-1) = |\mathcal S_1(r-1)|$ and $Z_{\geq 2}(r-1) = |\mathcal S_{\geq 2}(r-1)|$. By applying the Chernoff-Hoeffding bound~\cite{DubhashiP-book09},
      \begin{align}
        \frac{Z_1(r)}{(1-\tau) N} \geq& \frac{1}{(1-\tau)N} \sum_{i \in \mathcal S_1(r-1)} \left( Q_{i,1}(r) + \sum^K_{k=2}Q_{i,k}(r)p_k \right)  \nonumber\\
        &+ \frac{1}{(1-\tau)N} \sum_{i \in \mathcal S_{\geq2}(r-1)} Q_{i,1}(r)p_1 - \varepsilon  \nonumber\\
        \geq& \frac{p_1 \sum_{i \in \mathcal S_{\geq 2}(r-1)} Q_{i,1}(r)}{(1-\tau)N}  + \frac{p_2 \sum_{i \in \mathcal S_{1}(r-1)} Q_{i,1}(r)}{(1-\tau)N}   \nonumber\\
        &+ \frac{(1-p_2)S_1(r-1)}{(1-\tau)N} - \varepsilon
      \end{align}
      holds with probability at least $1-\exp(-2(1-\tau)N\varepsilon^2)$ (with $0 \leq \varepsilon \leq \widetilde{Q}_k(r)$), where the second inequality holds since $p_2 \geq p_3 \geq \cdots \geq p_K$. By substituting (\ref{eq:advqik}) and (\ref{eq:corruptineq}) into the above inequality, we have
      \begin{align*} 
        &\frac{Z_1(r) - Z_1(r-1)}{(1-\tau) N} \nonumber\\
        \geq& \frac{p_1 S_{\geq 2}(r-1) \widetilde{Q}_1(r) + p_2 S_{1}(r-1) \widetilde{Q}_1(r) - p_2 S_1(r-1)}{(1-\tau)N}   \nonumber\\
        & - \frac{p_1 S_{\geq 2}(r-1) + p_2 S_{1}(r-1)}{(1-\tau)N} \varepsilon - \varepsilon \nonumber\\
        \geq& \frac{(1-(1-\tau)Q_1(r-1))(p_1-p_2)S_1(r-1) - \tau p_1 S_1(r-1)}{(1-\tau)N}   \nonumber\\
        & - \frac{p_1(1-\tau)N - (p_1-p_2)S_1(r-1)}{(1-\tau)N} \varepsilon - \varepsilon  \nonumber\\
        =& (1-(1-\tau)Q_1(r-1))Q_1(r-1)(p_1-p_2) - \tau p_1 Q_1(r-1)   \nonumber\\
        & - (p_1 - (p_1-p_2)Q_1(r-1)) \varepsilon - \varepsilon  \nonumber\\
        \geq& (1-(1-\tau)Q_1(r-1))Q_1(r-1)(p_1-p_2)  \nonumber\\
        &- \tau p_1 Q_1(r-1)  - 2\varepsilon
      \end{align*}
      hold with probability at least
      \begin{align}
        &1 - \frac{N^{1-h\varepsilon^2/8}}{1 - {4h N^{-1} \log{N}}} - \frac{1}{N^2} - \exp(-2(1-\tau)N\varepsilon^2)  \nonumber\\
        \geq& 1 - \frac{3}{N^2} - \exp(-2(1-\tau)N\varepsilon^2)  \nonumber\\
        \geq& 1 - \frac{2}{1+2(1-\tau)N\varepsilon^2}
      \end{align}
      where the first inequality holds when $N$ is sufficiently large such that $\frac{\log N}{N} \leq \frac{1}{8h}$ and $h \geq \frac{24}{\varepsilon^2}$, while the second one holds since ${N^2}/{3} \geq 3N \geq 1+2(1-\tau)N \varepsilon^2 \geq \exp(-2 (1-\tau) N \varepsilon^2)$ when $N \geq 9$.
      When $\tau < \frac{(p_1-p_2)(1-\alpha)}{p_1(1-\alpha)+p_2 \alpha}$, we have $(1-(1-\tau)Q_1(r-1))Q_1(r-1)(p_1-p_2) - \tau p_1 Q_1(r-1) > 0$ for any $Q_1(r-1) \in (0, \alpha]$. Furthermore, we let $\varepsilon$ be sufficiently small such that
      \begin{equation}
        0 < \varepsilon < \frac{1}{2} \max \{ g(\alpha'), ~g(\alpha) \}
      \end{equation}
      and thus $Z_1(r) - Z_1(r-1) > 1$.
    \end{proof}
    \begin{remark}
      According to our adversary model, when there are more agents corrupted, the proportion of the honest agents deceived by the adversary ones is larger. In another word, we have to be content with less tolerance for corrupted agents, if we expect to have a larger fraction of honest agents adopting the best arm. Specifically, as suggested in \textbf{Theorem}~\ref{thm:infreliability} and \textbf{Theorem}~\ref{thm:finitereliability} when $\alpha \rightarrow 1$, the proportion of the adversary agents tolerated by our collaborative learning algorithm approaches $0$.
    \end{remark}

\section{Numerical Results}  \label{sec:sim}
  In this section, we perform extensive simulations on both synthetic and real datasets in Sec.~\ref{ssec:synthetic} and Sec.~\ref{ssec:real}, respectively, to verify the efficacy of our algorithm. Throughout this section, we fix $h=64$ and $\mu=0.3$ for the disseminating stage and sampling stage of our collaborative learning algorithm, respectively, since they are constants which have a very slight impact on the performance of our algorithm. For each reported data points, we conduct the experiments thirty times to take an average on the results.

  \subsection{Simulations with Synthetic Data} \label{ssec:synthetic}
    We fist show the learnability of our algorithm. According to \textbf{Theorem}~\ref{thm:learnability}, our evaluation is performed by varying the number of agents and tuning the difference between $p_1$ and $p_2$. Specifically, we gradually increase the number of agents (i.e., $N$) from $2 \times 10^3$ to $6 \times 10^3$. We connect the agents randomly such that the resulting communication graph is connected and non-bipartite and each agent may have a very different number of neighbors in the graph. Additionally, we fixed $p_1=0.8$ and vary $p_2=0.6, 0.5, 0.4$. We also let the number of arms $K=100, 200, 300$ to investigate the impact of $K$ on the learnability of our algorithm, considering $K$ matters in the early phase of our algorithm (see \textbf{Lemma}~\ref{le:init}).
    \begin{figure*}[htb!]
      \begin{center}
        \parbox{.33\textwidth}{\center\includegraphics[width=.30\textwidth]{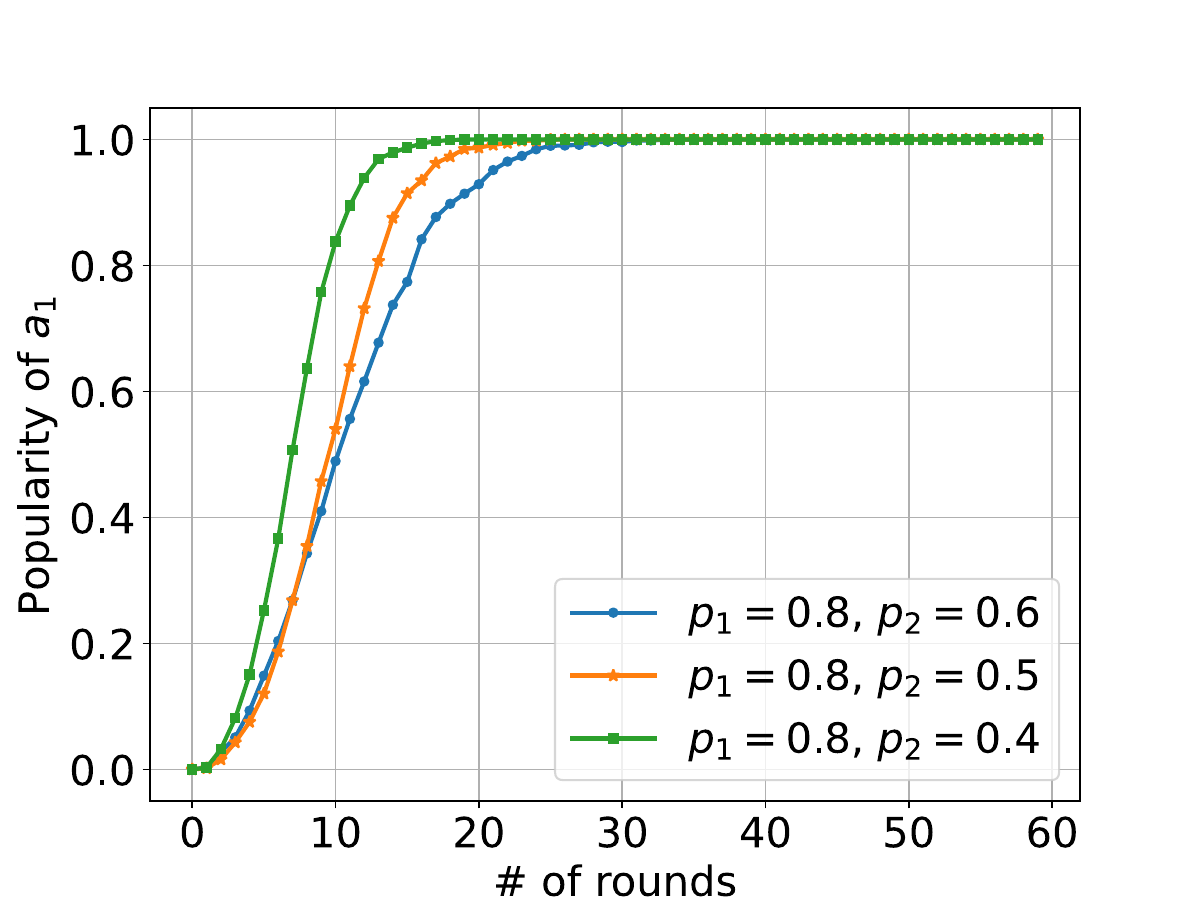}}
        \parbox{.33\textwidth}{\center\includegraphics[width=.30\textwidth]{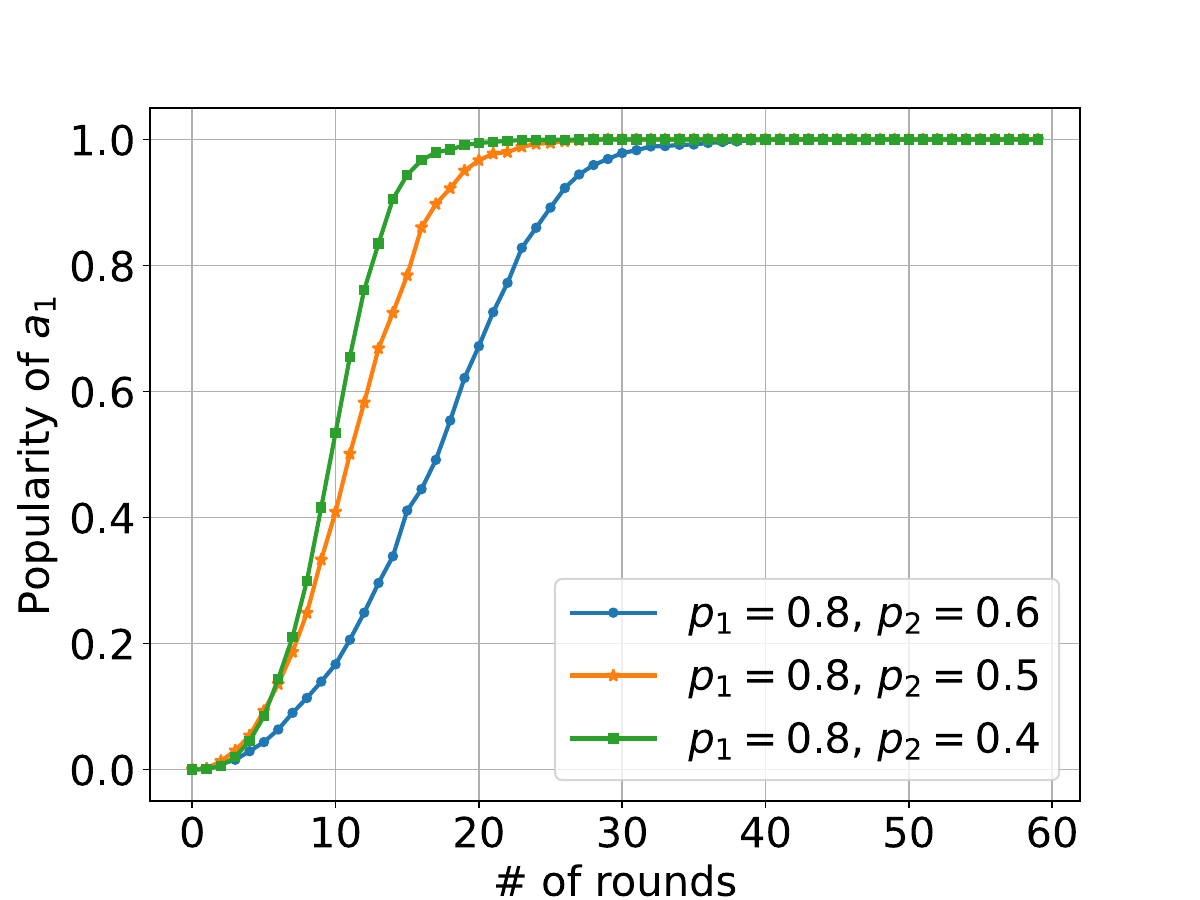}}
        \parbox{.33\textwidth}{\center\includegraphics[width=.30\textwidth]{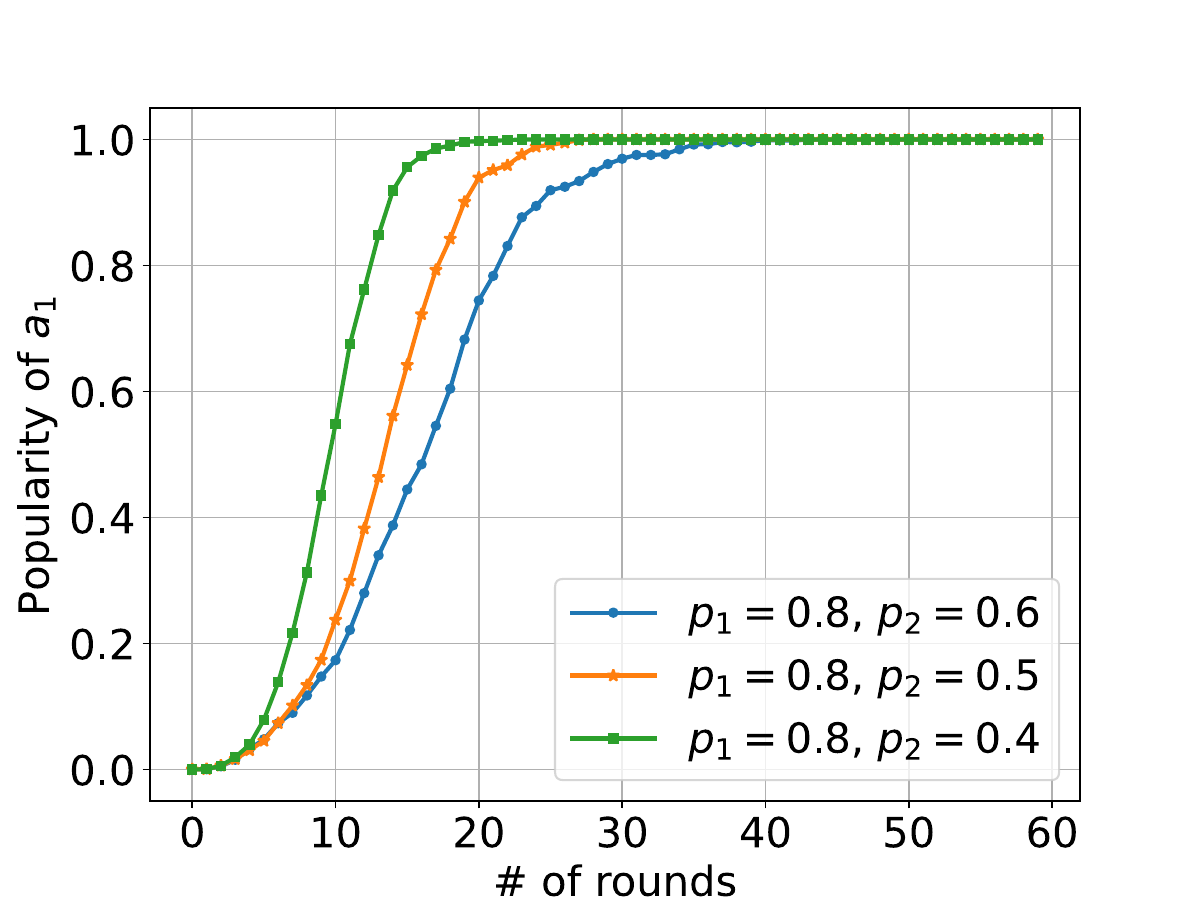}}
        \parbox{.33\textwidth}{\center\scriptsize(a) $N=2 \times 10^3, K=100$}
        \parbox{.33\textwidth}{\center\scriptsize(b) $N=2 \times 10^3, K=200$}
        \parbox{.33\textwidth}{\center\scriptsize(c) $N=2 \times 10^3, K=300$}
        \parbox{.33\textwidth}{\center\includegraphics[width=.30\textwidth]{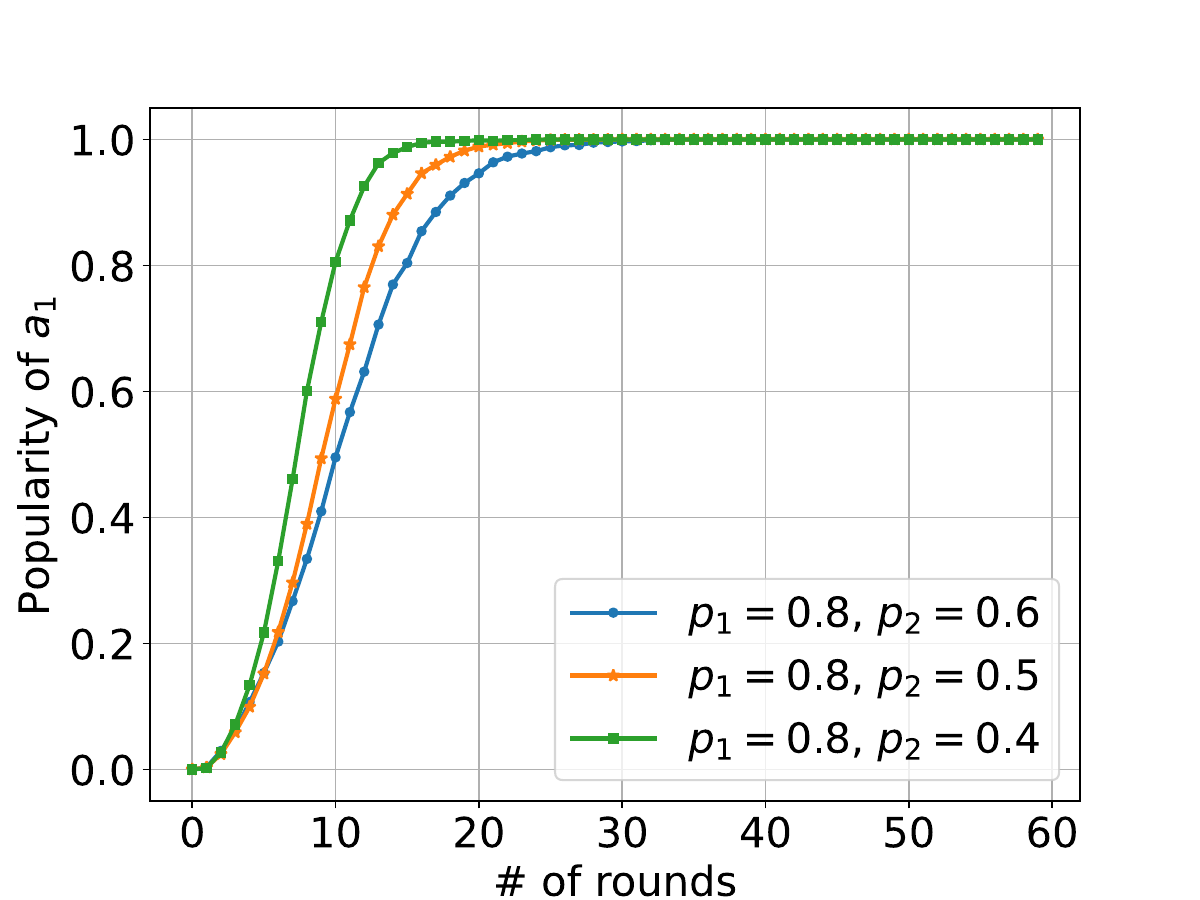}}
        \parbox{.33\textwidth}{\center\includegraphics[width=.30\textwidth]{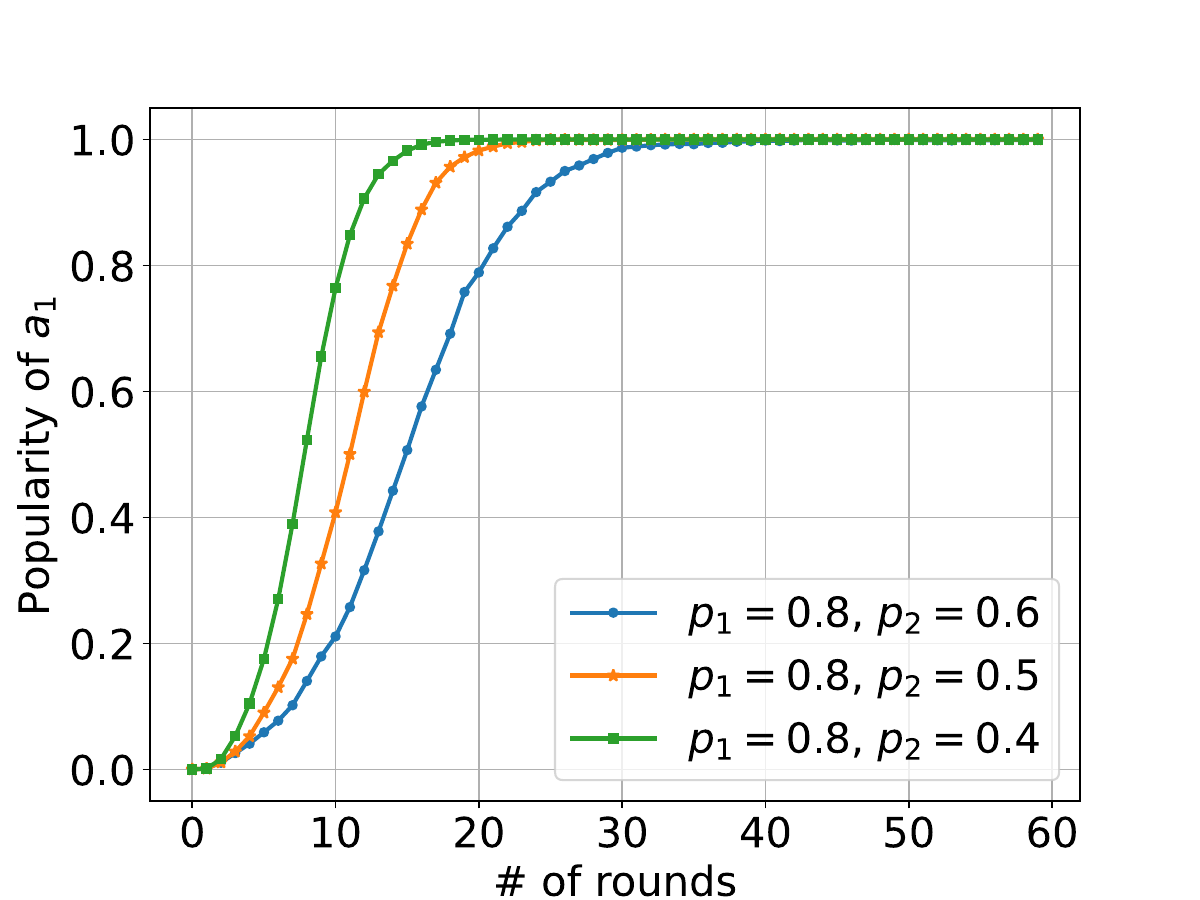}}
        \parbox{.33\textwidth}{\center\includegraphics[width=.30\textwidth]{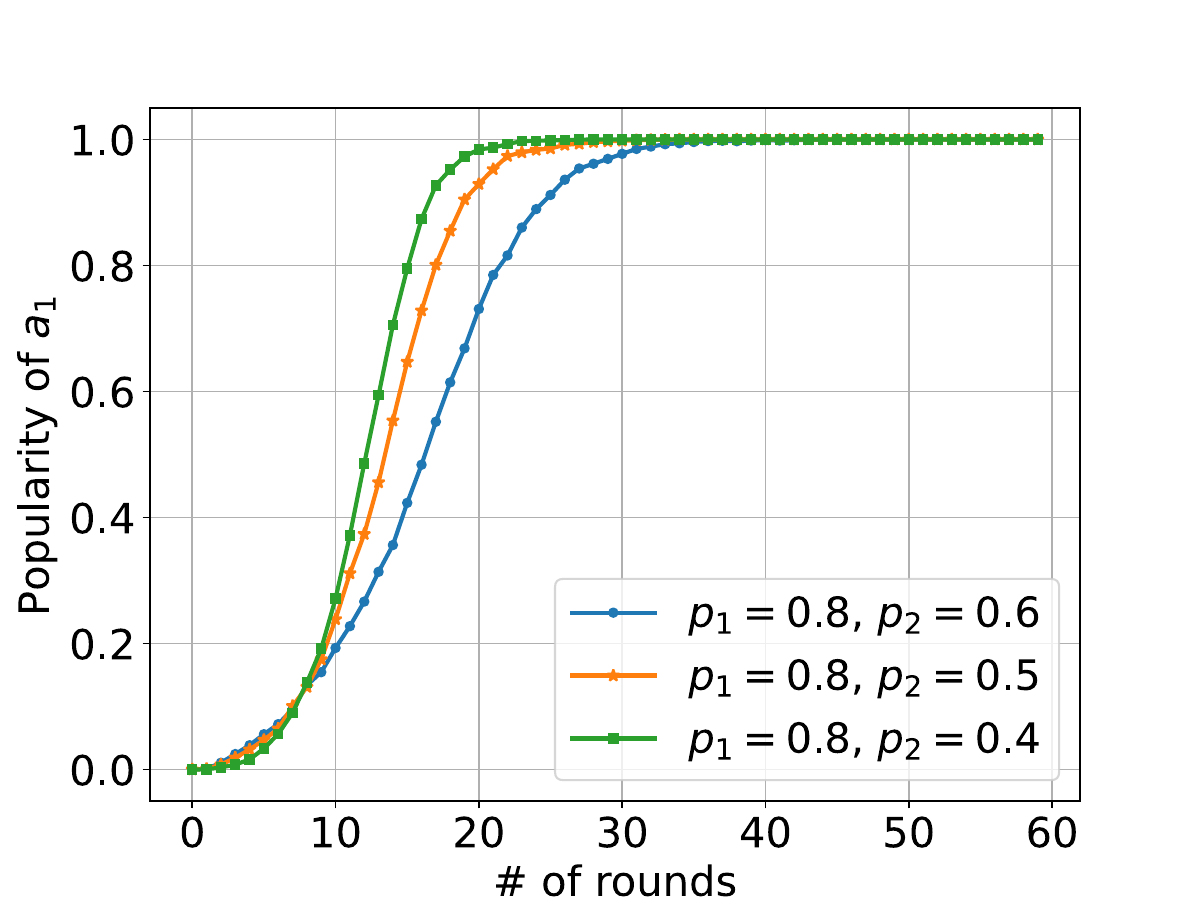}}
        \parbox{.33\textwidth}{\center\scriptsize(d) $N=4 \times 10^3, K=100$}
        \parbox{.33\textwidth}{\center\scriptsize(e) $N=4 \times 10^3, K=200$}
        \parbox{.33\textwidth}{\center\scriptsize(f) $N=4 \times 10^3, K=300$}
        \parbox{.33\textwidth}{\center\includegraphics[width=.30\textwidth]{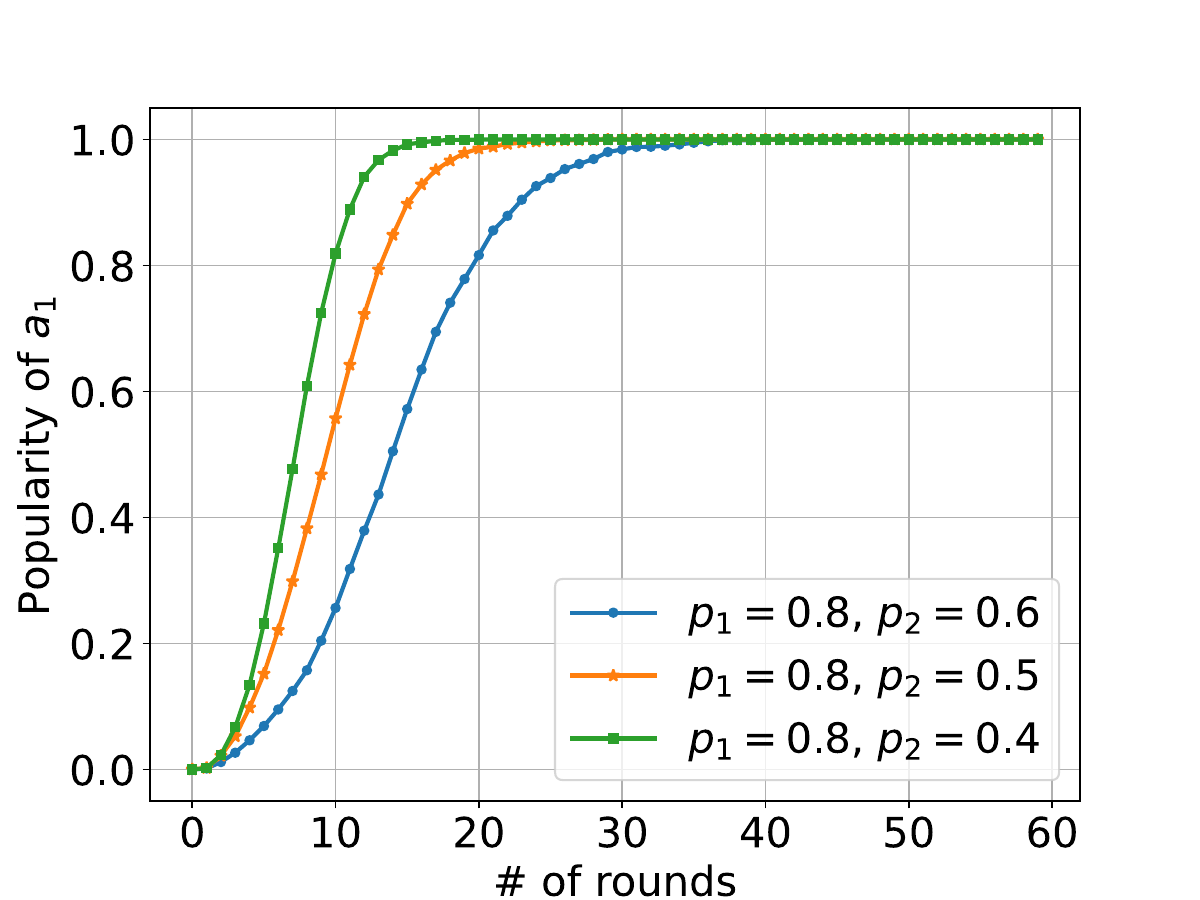}}
        \parbox{.33\textwidth}{\center\includegraphics[width=.30\textwidth]{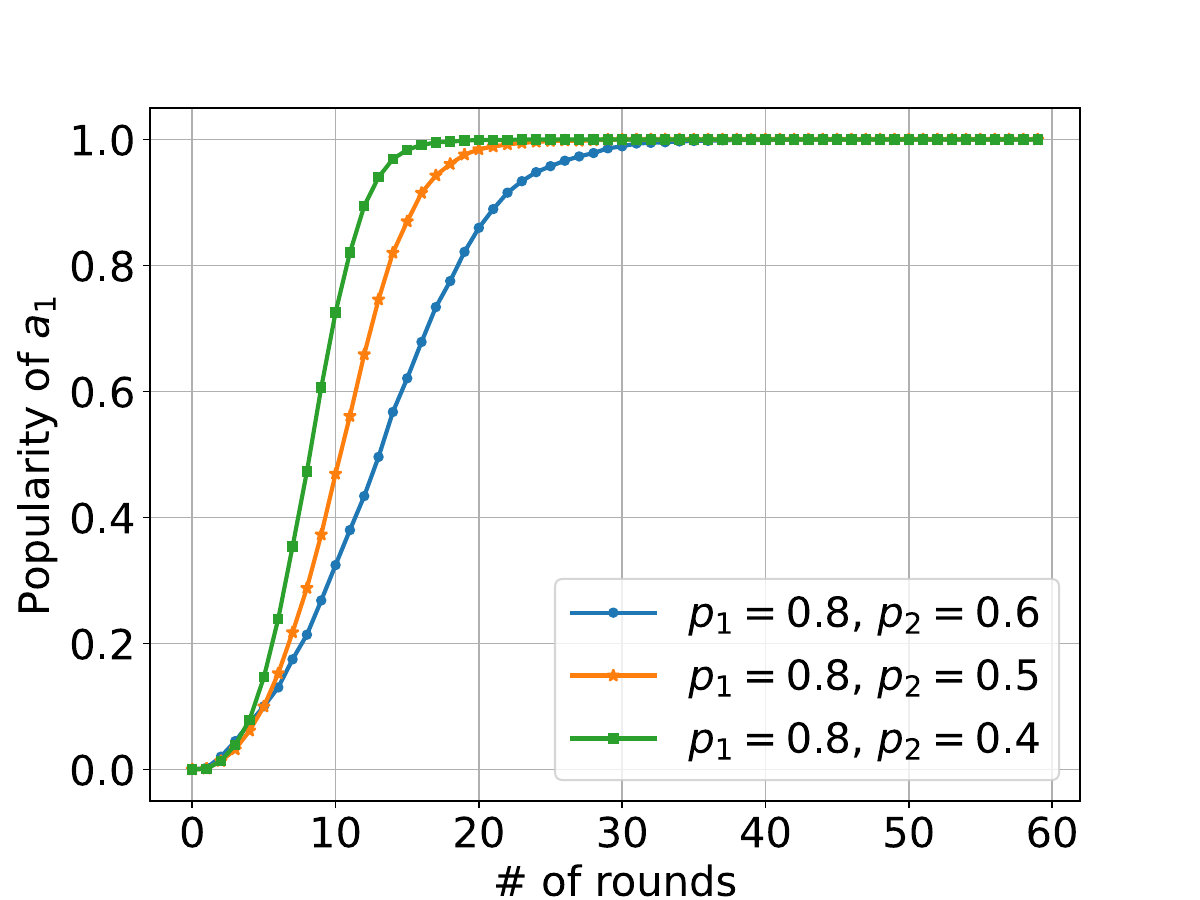}}
        \parbox{.33\textwidth}{\center\includegraphics[width=.30\textwidth]{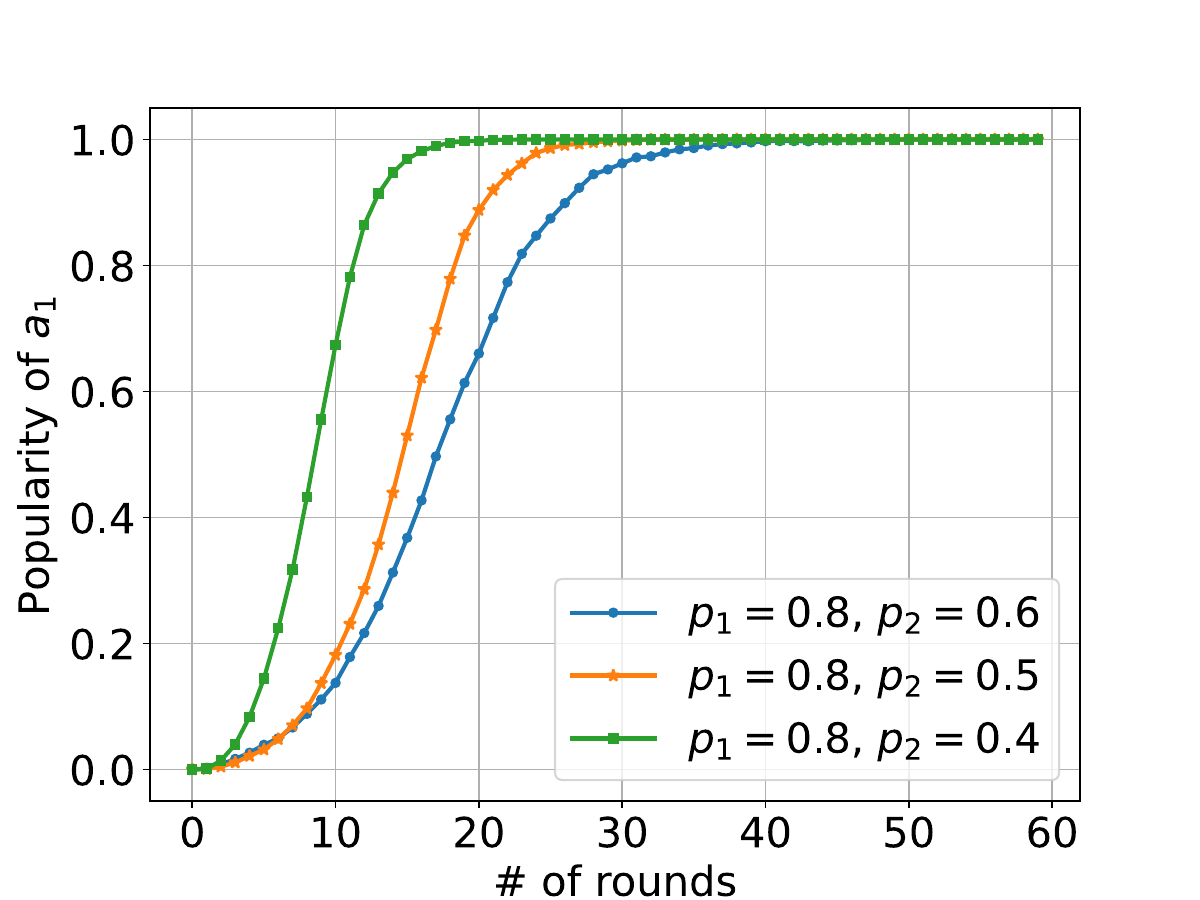}}
        \parbox{.33\textwidth}{\center\scriptsize(g) $N=6 \times 10^3, K=100$}
        \parbox{.33\textwidth}{\center\scriptsize(h) $N=6 \times 10^3, K=200$}
        \parbox{.33\textwidth}{\center\scriptsize(i) $N=6 \times 10^3, K=300$}
        \caption{Learnability under different settings with synthetic dataset.}
        \label{fig:learn}
      \end{center}
    \end{figure*}

    The numerical results are reported in Fig.~\ref{fig:learn}, where we use the popularity of the best arm $a_1$ (i.e., $Q_1(r)$) to illustrate the learning dynamics of our algorithm. It is demonstrated that, all of the agents eventually learn the best arm (i.e., $Q_1(r)=1$), according to \textbf{Theorem}~\ref{thm:learnability}. Although increasing the number of agents results in a slight increase in terms of the number of rounds, $\mathsf{Succ}(N)$ can be achieved in about 40 rounds in all of our settings. Note that the number of rounds actually is not meant to the actual temporal complexity. More time is necessitated to achieve $\mathsf{Succ}(N)$ when there are more agents, recalling each round consists of $\mathcal O(\log^2 N)$ slots for the agents to disseminate their adoptions as shown in \textbf{Theorem}~\ref{thm:complexity}. Nevertheless, the temporal cost is an inevitable investment to ensure the learnability in large-scale multi-agent graphs. Furthermore, the number of the agents adopting $a_1$ approaches $N$ with a higher rate when there is a larger gap between $p_1$ and $p_2$, which is consistent with what has been mentioned in \textbf{Remark}~\ref{re:learningrate}. Additionally, as implied by \textbf{Theorem}~\ref{thm:learnability} and \textbf{Lemma}~\ref{le:init}, the number of arms, i.e., $K$, actually has a very slight impact on the learnability of our collaborative learning algorithm.

    We then evaluate the reliability of our algorithm under different settings. We let $N=1\times 10^3$ and the generate the edges among the agents randomly and adopt the same setting on $N$ and $K$ as what we did in the last experiments. We vary $\alpha$ from $0.9$ to $0.6$ with a step size $0.1$ and let $\tau$ take its upper bound, i.e., $\tau = \frac{(1-\alpha)(p_1-p_2)}{(1-\alpha)p_1 + \alpha p_2}$. The results (i.e., the evolution of the popularity of the best arm among the honest agents) are reported in Fig.~\ref{fig:synreliability}. It is shown that a smaller $\alpha$ implies a smaller upper bound of the popularity of $a_1$, since we have the popularity of $a_1$ in round $r$ (i.e., $Q_1(r)$) increased when $Q_1(r-1) \leq {\alpha}$ according to \textbf{Theorem}~\ref{thm:infreliability} and \textbf{Theorem}~\ref{thm:finitereliability}. Our another observation is that, given $\alpha$ fixed, our algorithm can tolerate more agent corruptions if there is a larger difference between $p_1$ and $p_2$, which is also consistent with what we have shown in \textbf{Theorem}~\ref{thm:infreliability} and \textbf{Theorem}~\ref{thm:finitereliability}. Moreover, another interesting observation is that, the larger gap between $p_1$ and $p_2$ implies higher convergence rate, even there may be more corrupted agents in our settings.
    \begin{figure}[htb!]
      \begin{center}
        \parbox{.48\columnwidth}{\center\includegraphics[width=.48\columnwidth]{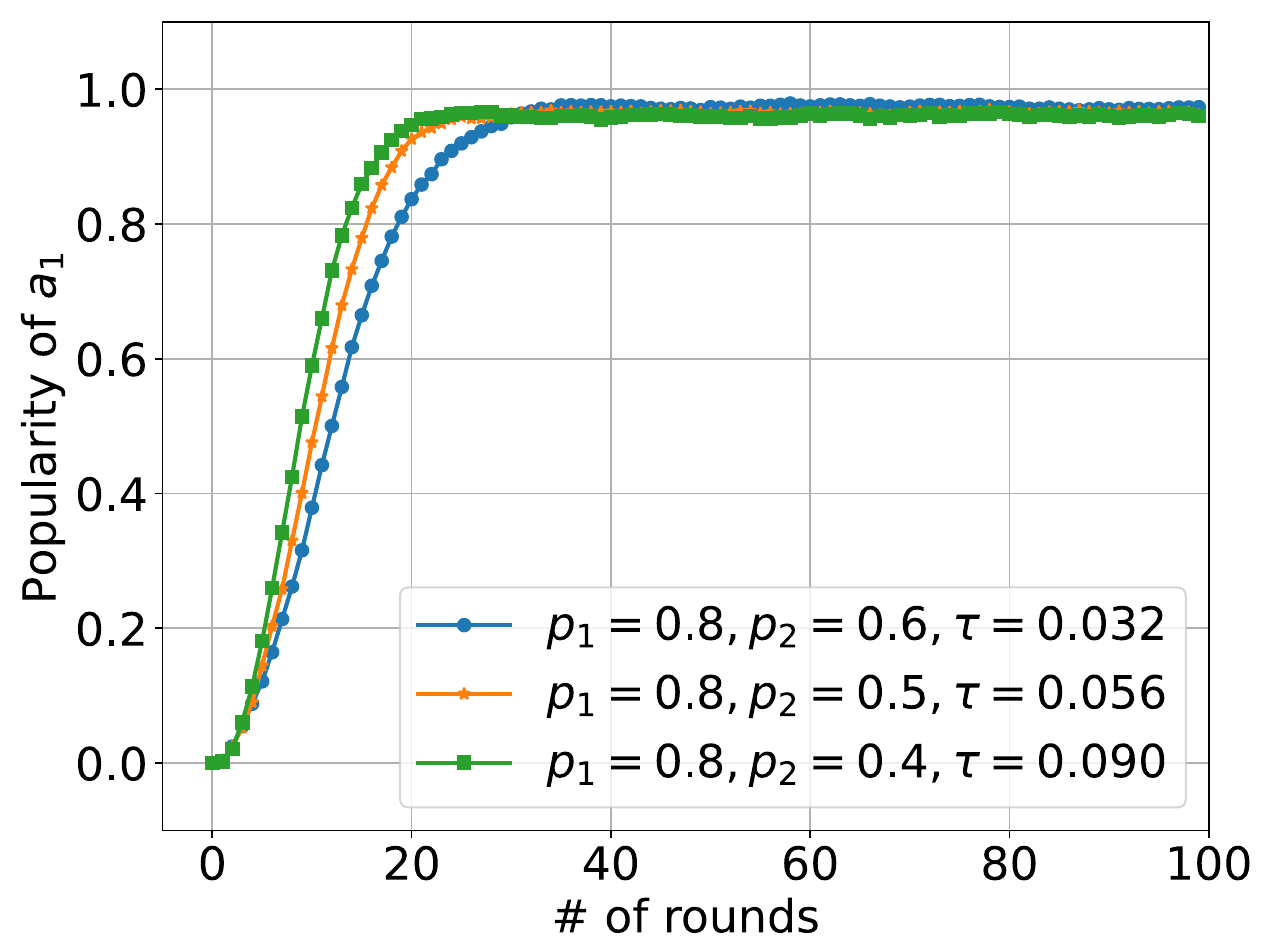}}
        \parbox{.48\columnwidth}{\center\includegraphics[width=.48\columnwidth]{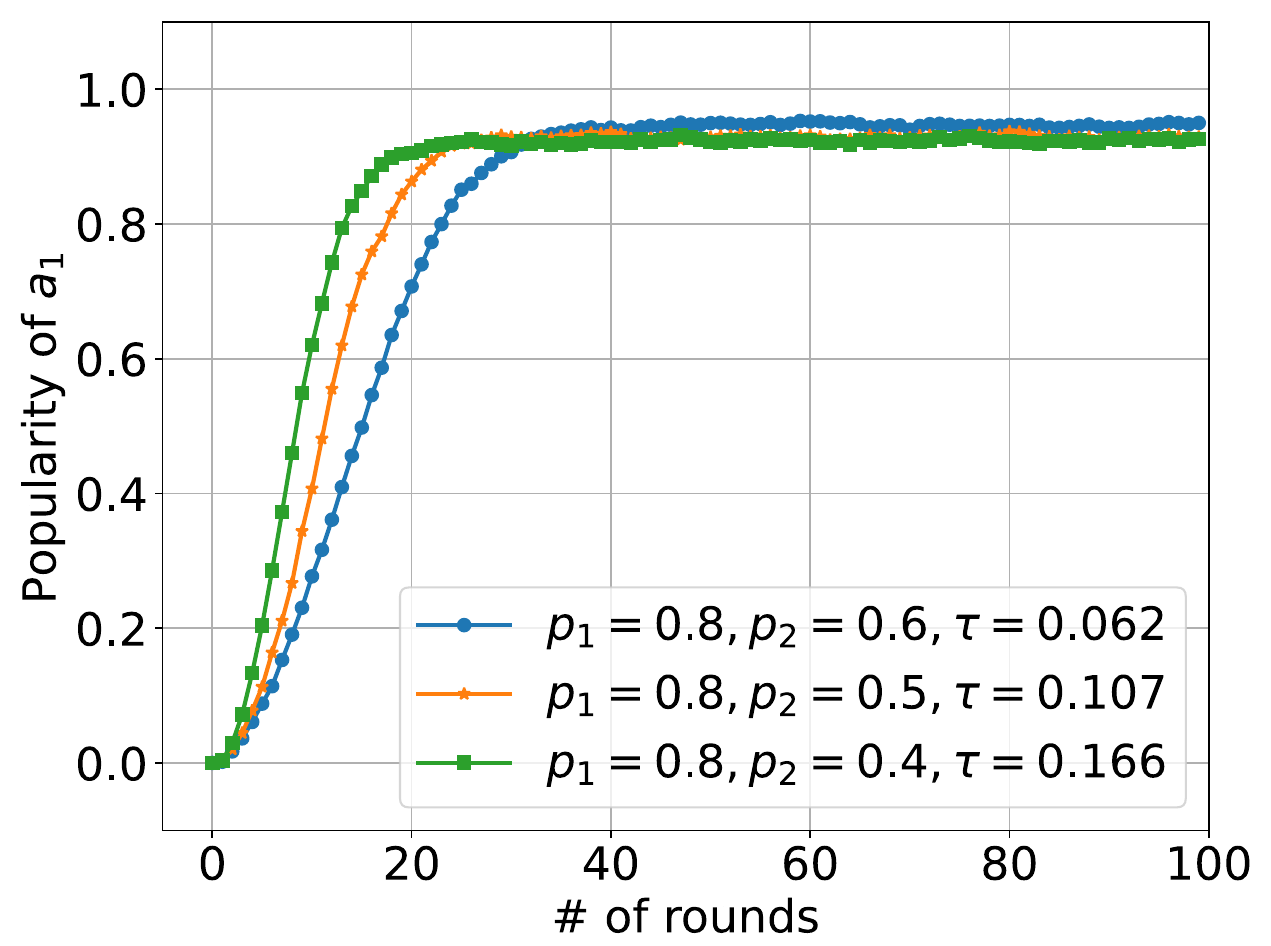}}
        \parbox{.48\columnwidth}{\center\scriptsize(a) $\alpha=0.9$}
        \parbox{.48\columnwidth}{\center\scriptsize(b) $\alpha=0.8$}
        \parbox{.48\columnwidth}{\center\includegraphics[width=.48\columnwidth]{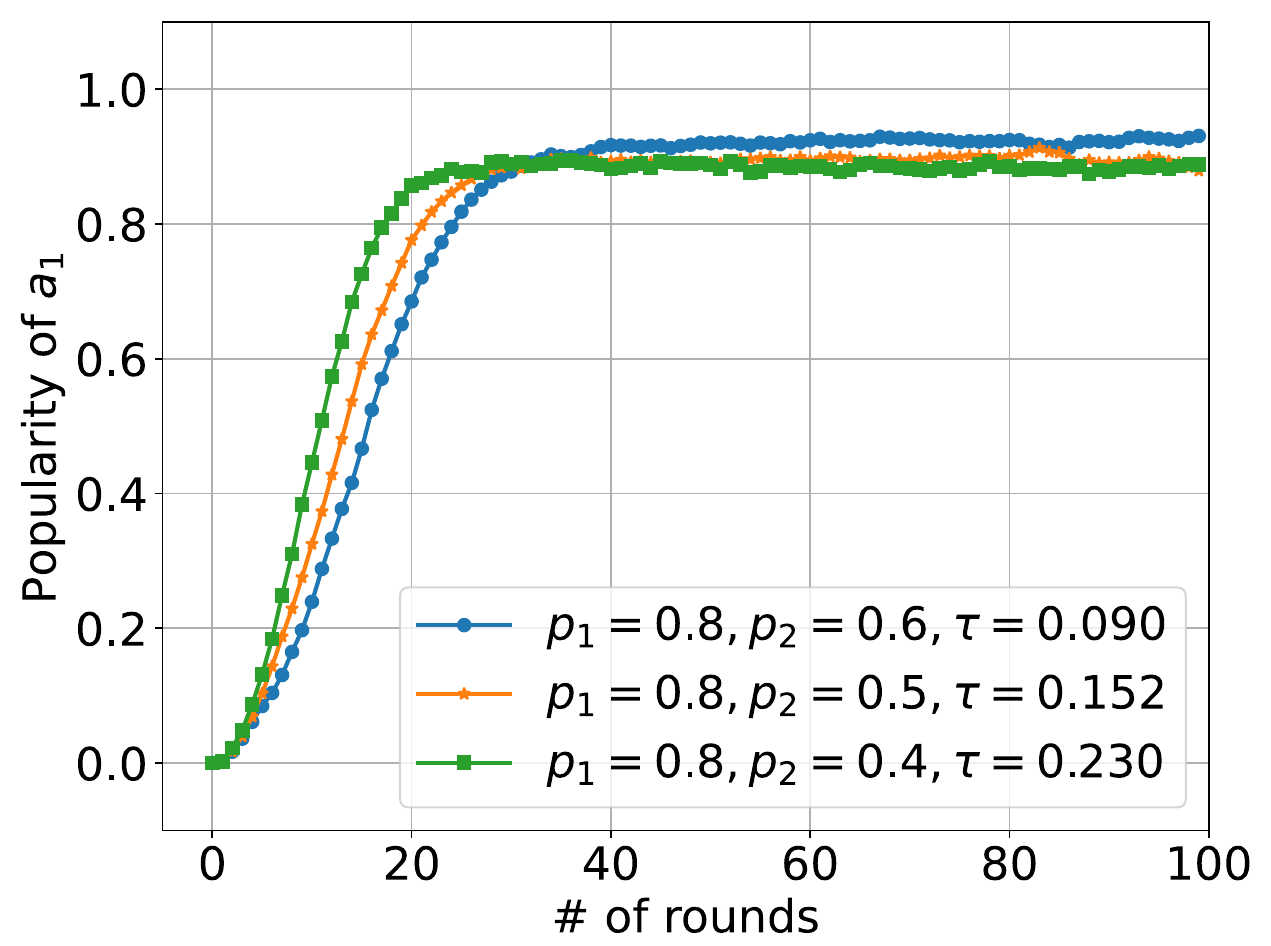}}
        \parbox{.48\columnwidth}{\center\includegraphics[width=.48\columnwidth]{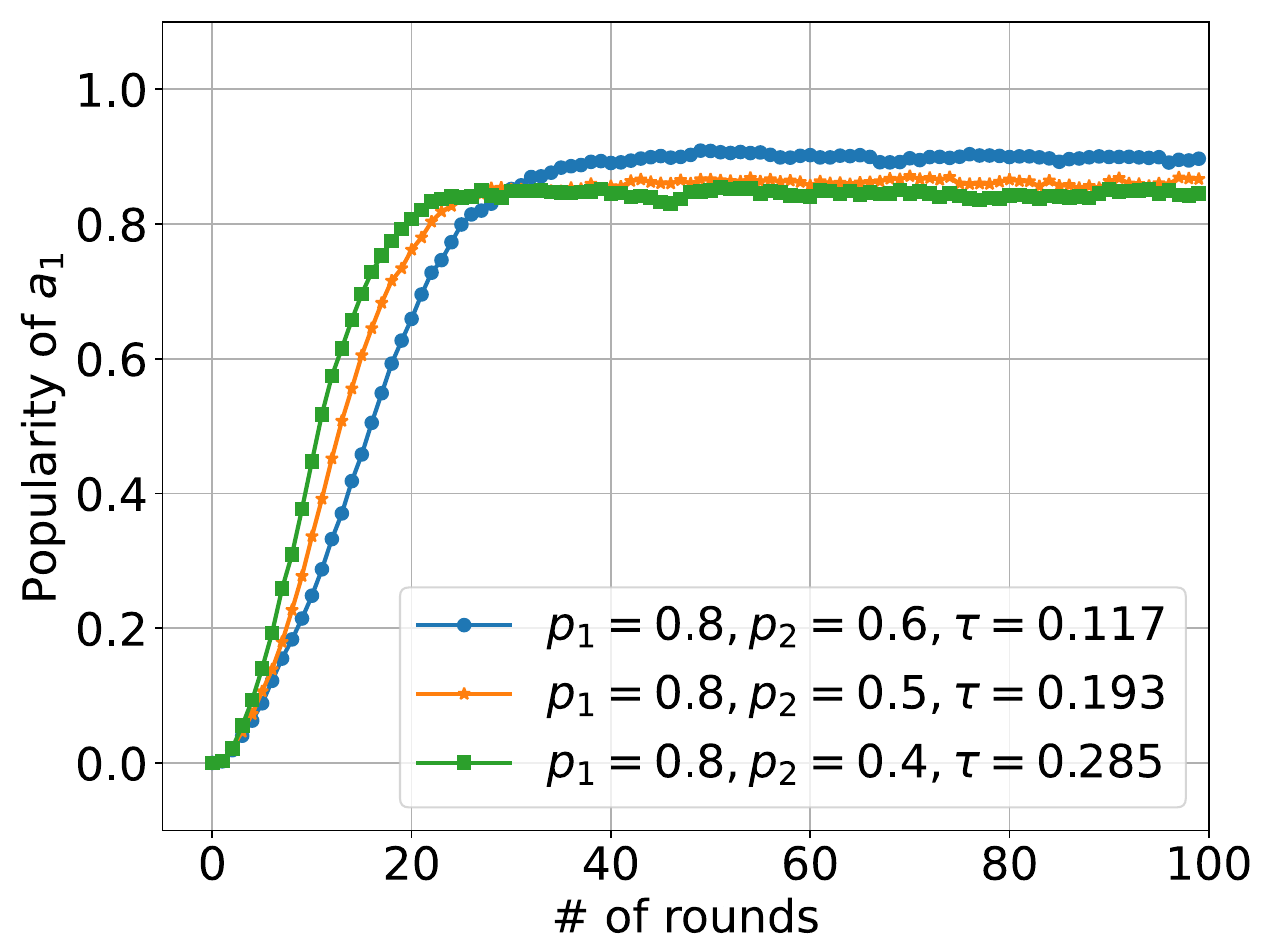}}
        \parbox{.48\columnwidth}{\center\scriptsize(c) $\alpha=0.7$}
        \parbox{.48\columnwidth}{\center\scriptsize(d) $\alpha=0.6$}
        \caption{Reliability under different settings with synthetic dataset.}
        \label{fig:synreliability}
      \end{center}
    \end{figure}

  \subsection{Simulations with Real Data} \label{ssec:real}
    Besides the synthetic dataset, we also evaluate our algorithm with a real dataset. In this paper, We hereby choose Movielens 25M dataset as example, which involves $162,000$ users and $62,000$ movies \cite{mlens}. We select a subset of $3,443$ users and a subset of $707$ movies, such that each of the selected users rated at least 30 of these movies and each of the movies was rated by at least 30 of these users. We extract out the corresponding submatrix and apply the matrix completion method~\cite{HastieMLZ-JMLR15} to fill the missing entries in the extracted submatrix. We then calculate the average of each column and normalize the average to $[0,1]$ by dividing the average by 5. We consider each movie as an arm whose quality can be represented by the normalized score. Likewise, to illustrate the influence of $p_1-p_2$ on the learning process, we fix $p_1=0.9$ (which is the maximum score) and randomly take $500$ samples from the remaining scores such that $p_2\in \{0.7, 0.6, 0.5\}$. We also construct a communication graph where the edges among the users are generated randomly. The results shown in Fig.~\ref{fig:real} is very similar to the ones we obtained with synthetic dataset. In particular, $\mathsf{Succ}(N)$ can be achieved within $40$ rounds even when the gap between $p_1$ and $p_2$ is small (e.g., $p_1-p_2=0.2$), while the temporal overhead can be further reduced with smaller gaps (e.g., around 25 rounds are sufficient when $p_1-p_2=0.4$).
    \begin{figure}[htb!]
      \begin{center}
        \includegraphics[width=.7\columnwidth]{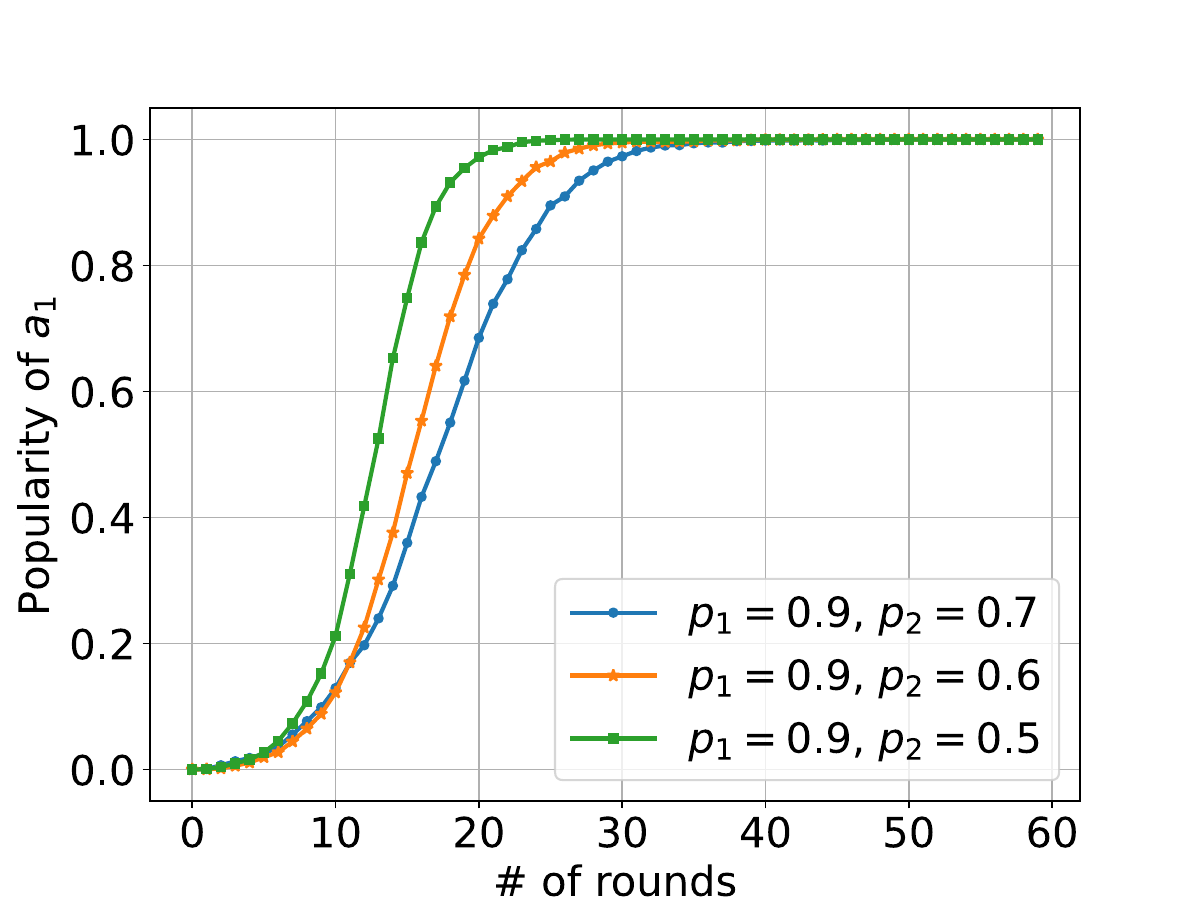}
      \caption{Learnability under different settings with real dataset.}
    \label{fig:real}
    \end{center}
    \end{figure}

    We also evaluate the performance of our algorithm from the perspective of reliability with the real dataset. The experiment results are given in Fig.~\ref{fig:realreliability}. Similar with what we have learnt from the experiment results on the synthetic data, it is demonstrated that we have higher popularity of $a_1$ with larger $\alpha$. Furthermore, with fixed $\alpha$, the convergence rate and the corruption tolerance of our algorithm mainly depend on the difference between $p_1$ and $p_2$. Specifically, our algorithm tolerates more agent corruptions and converges at a higher rate when the second best arm is of much lower quality than the first one. By these observations, our theoretical analysis is further confirmed.
    \begin{figure}[htb!]
      \begin{center}
        \parbox{.48\columnwidth}{\center\includegraphics[width=.48\columnwidth]{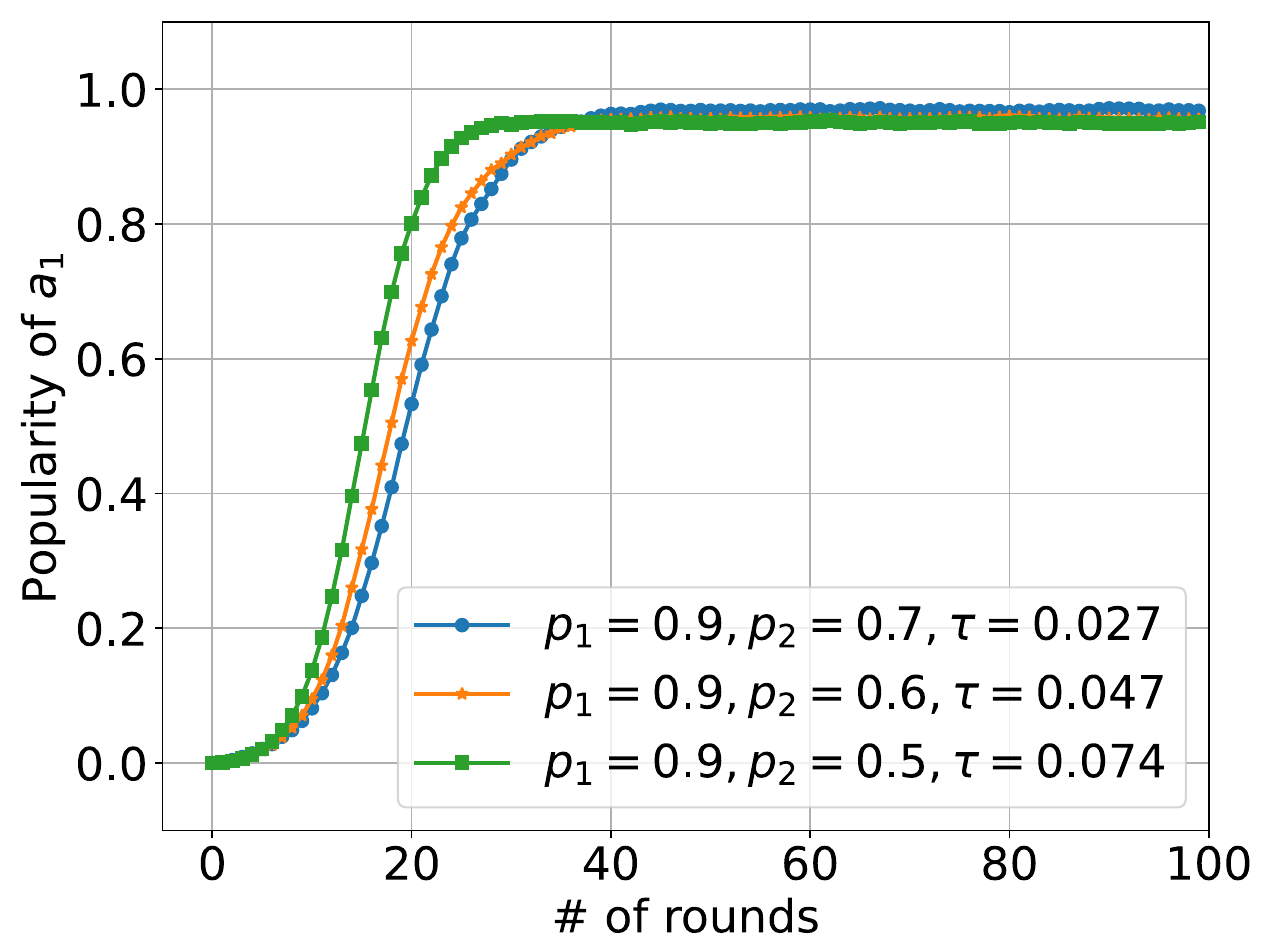}}
        \parbox{.48\columnwidth}{\center\includegraphics[width=.48\columnwidth]{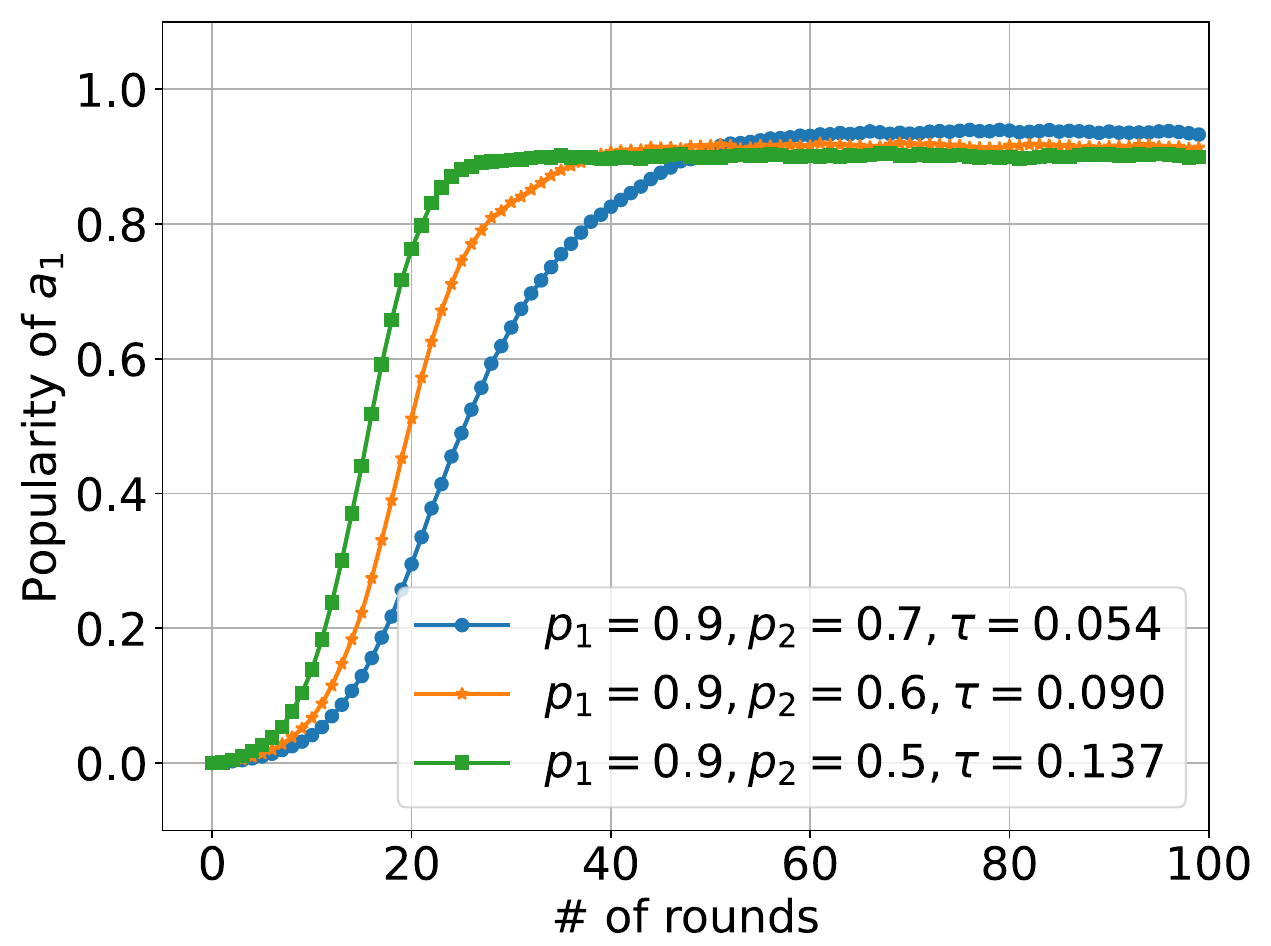}}
        \parbox{.48\columnwidth}{\center\scriptsize(a) $\alpha=0.9$}
        \parbox{.48\columnwidth}{\center\scriptsize(b) $\alpha=0.8$}
        \parbox{.48\columnwidth}{\center\includegraphics[width=.48\columnwidth]{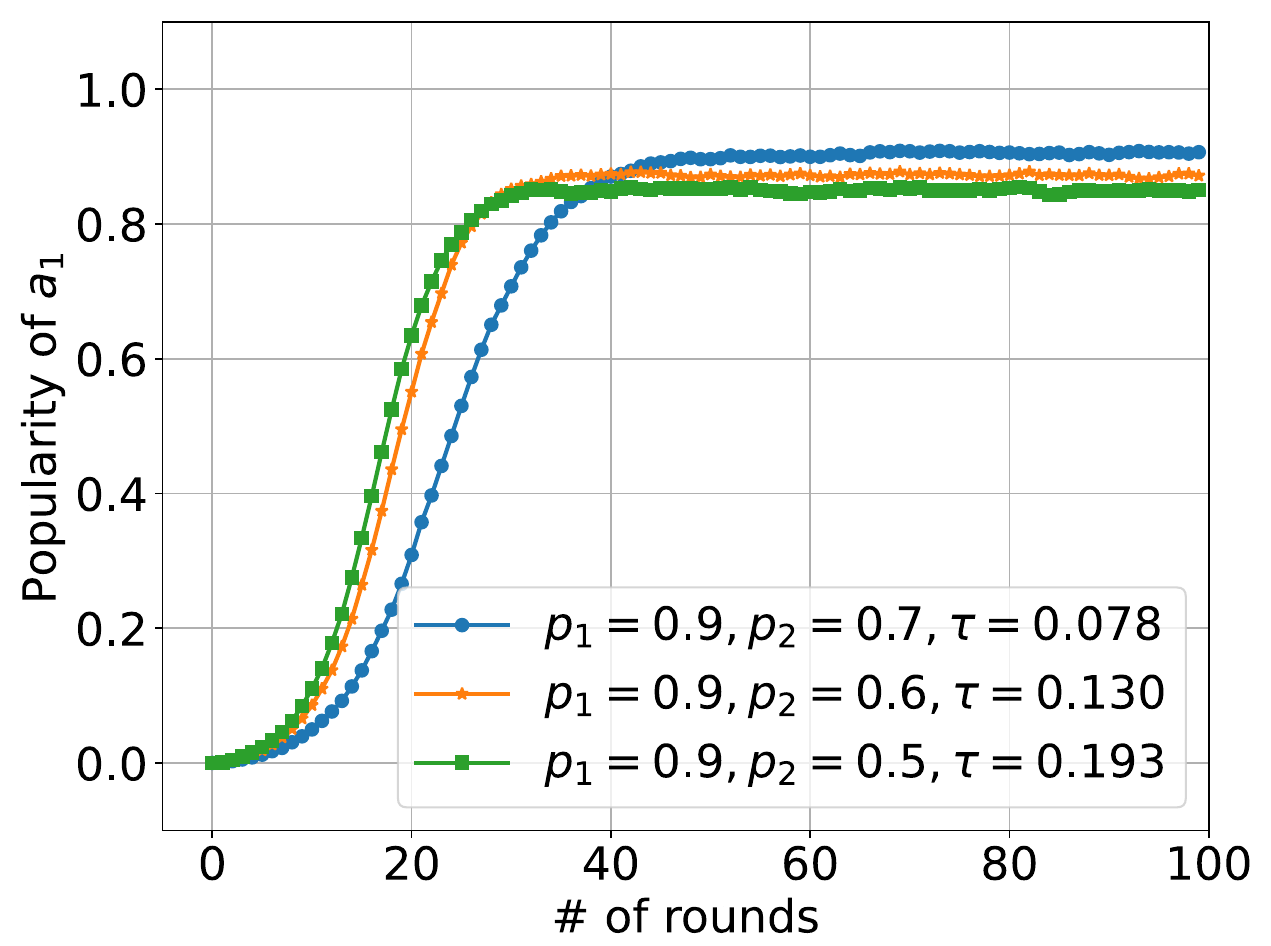}}
        \parbox{.48\columnwidth}{\center\includegraphics[width=.48\columnwidth]{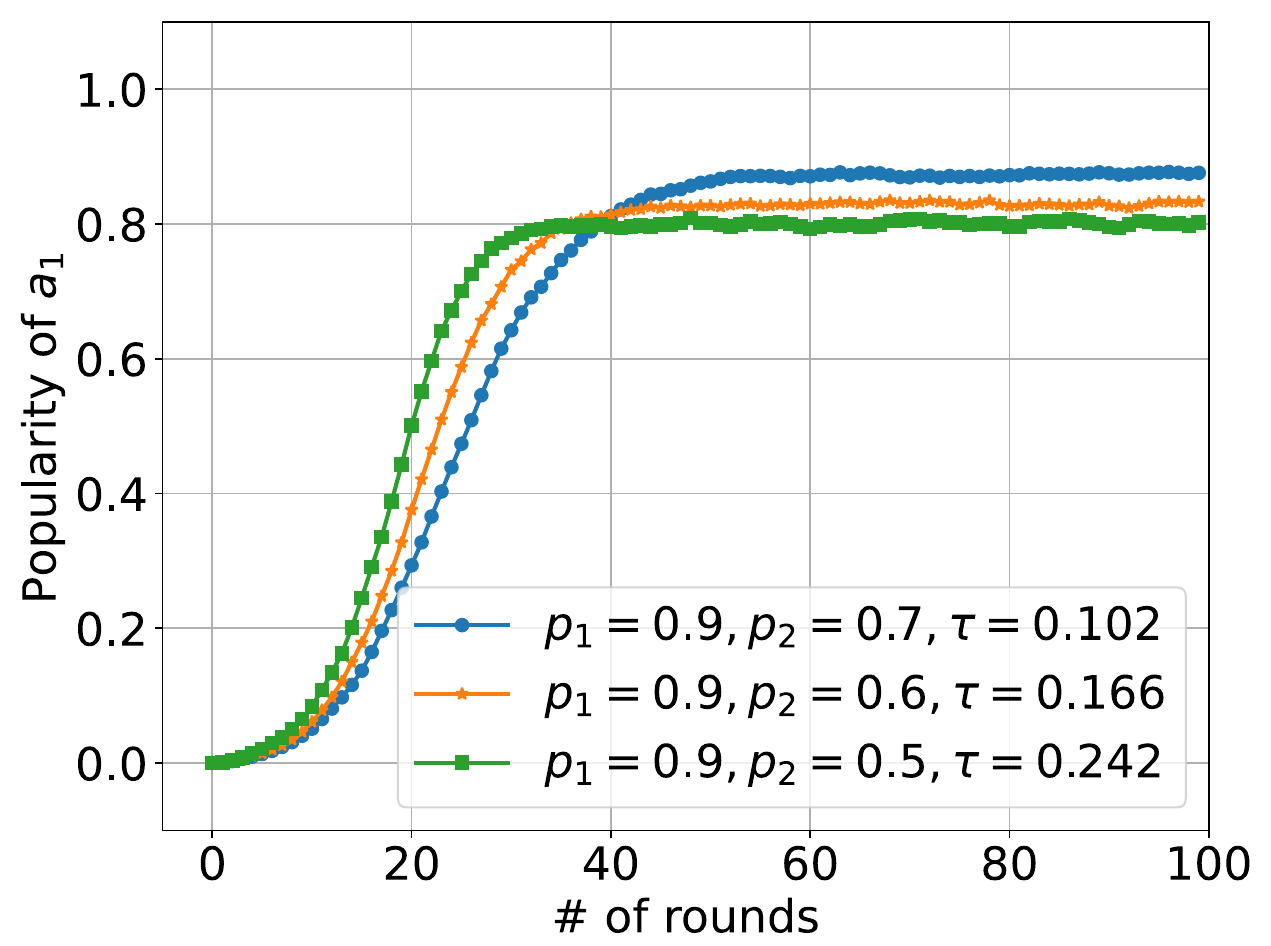}}
        \parbox{.48\columnwidth}{\center\scriptsize(c) $\alpha=0.7$}
        \parbox{.48\columnwidth}{\center\scriptsize(d) $\alpha=0.6$}
        \caption{Reliability under different settings with real dataset.}
        \label{fig:realreliability}
      \end{center}
    \end{figure}

\section{Conclusion and Future work} \label{sec:con}
  In this paper, we have proposed a three-staged collaborative learning algorithm for general multi-agent graphs with constraints on communication bandwidth and memorization. In each round of our algorithm, each agent first disseminates its current adoption (or preference) over the general communication graph through parallel random walks and then chooses one arm to pull according to the suggestions received from its peers. It finally makes an adoption decision according to the observation on the reward yielded by the pulling. According to our theoretical analysis, although the agents have bounded memorizing capabilities such that only the most recent adoptions can be memorized, the learnability of our algorithm can be ensured such that all the agents eventually adopt the best arm with high probability. We also have quantified the reliability of our collaborative learning algorithm in face of agent corruptions. We finally have conducted extensive experiments on both synthetic and real datasets to verify the efficacy of our algorithm.

  In this paper, we assume each agent memorizes only its most recent adoption. Another interesting question is, what if each agent can memorize some of its recent adoptions instead of only the most recent one. For example, an agent may be able to  memorize its adoptions in the most recent $\beta$ rounds. Intuitively, such a relaxed bound on the agents' memorizing capabilities is in favour of the learning process; nevertheless, it is highly non-trivial to quantify the relation between $\beta$ and the performance of the learning algorithm. We are on the way of addressing this challenge.
  %


\bibliographystyle{unsrt}

\bibliography{collalearning}  


\appendix
Recall that $V_i(r)$ denotes the number of tokens received by agent $i$ in the disseminating stage of round $r$. We first illustrate the lower and upper bounds on $V_i(r)$ for any agent $i\in \mathcal N$ in round $r$. Note that, to facilitate our analysis, we suppose $r$ is sufficiently large such that each agent has a non-null adoption, according to \textbf{Lemma}~2.
  \begin{lemma} \label{le:bounds4Vi}
    Assume $N \geq 2$ and $h \geq 64$. In each round $r$, with probability at least $1 - \frac{1}{N^2}$, we have
    \begin{equation} \tag{48}
      \frac{3}{8} h \log N \leq V_i(r) \leq \frac{25}{4} h \log N, ~\forall i\in \mathcal N.
    \end{equation}
  \end{lemma}
  \begin{proof}
    Let $Y_{i,j}(r) \in \{0,1\}$ be an indicator random variable specifying if the $j$-th token reaches agent $i$ in the disseminating stage of round $r$. $V_i(r)$ then can be re-written as $V_i(r) = \sum^{Nh\log N}_{j=1} Y_{i,j}(r)$. Let $\gamma_{i,j} = \mathbb P[Y_{i,j}(r)=1]$ denote the probability that the $j$-th token reaches agent $i$. Hence, we have $\frac{1}{N} - \frac{1}{N^3} \leq \gamma_{i,j} \leq \frac{1}{N} + \frac{1}{N^3}$ according to \textbf{Theorem}~1. By applying the Chernoff-Hoeffding bounds \cite{DubhashiP-book09}, we have
    \begin{equation} \label{eq:rectoken00}
      \mathbb P\left[ \left| V_i(r) - \gamma_i  \right| > \frac{4}{\sqrt{h}}\gamma_i \right] \leq 2 \exp\left(-\frac{16\gamma_i}{3h}\right) \tag{49}
    \end{equation}
    where $\gamma_{i} = \sum^{h N \log N}_{j=1} \gamma_{i,j}$. By applying the union bound across the different agents,
    \begin{align} \label{eq:rectoken01}
      \mathbb P\left[ \left| V_i(r) - \gamma_i  \right| \leq \frac{4}{\sqrt{h}}\gamma_i, ~\forall i\in \mathcal N \right] \geq 1 - 2 N \exp\left(-\frac{16\gamma_i}{3h}\right)  \tag{50}
    \end{align}
    Since $\frac{1}{N} - \frac{1}{N^3} \leq \gamma_{i,j} \leq \frac{1}{N} + \frac{1}{N^3}$, we have
    \begin{align}  \label{eq:rectoken02}
    \begin{cases}
      \gamma_{i} \geq h N \log N \left(\frac{1}{N} - \frac{1}{N^3}\right) = h \log N \left(1 - \frac{1}{N^2}\right) \geq \frac{3h}{4} \log N \\
      \gamma_{i} \leq h N \log N \left(\frac{1}{N} + \frac{1}{N^3}\right) = h \log N \left(1 + \frac{1}{N^2}\right) \leq \frac{5h}{4} \log N   \tag{51}\\
    \end{cases}
    \end{align}
    when $N \geq 2$,  by substituting which into (\ref{eq:rectoken01}), we have
    \begin{align}  \label{eq:rectoken03}
      \frac{3}{4}\left( h - 4\sqrt{h} \right)\log N \leq V_i(r) \leq \frac{5}{4}\left( h + 4\sqrt{h} \right)\log N  \tag{52}
    \end{align}
    hold with probability at least
    \begin{align}
      1 - 2 N \exp \left(-\frac{16\gamma_i}{3h}\right) \geq 1- \frac{2}{N^3} \geq 1 - \frac{1}{N^2}  \tag{53}
    \end{align}
    We complete the proof by substituting $h - 4\sqrt{h} \geq \frac{h}{2}$ and $h + 4\sqrt{h} \leq 5h$ (when $h \geq 64$) into (\ref{eq:rectoken03}).
  \end{proof}
  It is shown in the above lemma that, in each round $r$, every agent receives $\Theta(\log N)$ tokens from its peers. As illustrated in Sec.~4, each agent chooses an arm according to an estimate on its popularity. The question is, based on the received $\Theta(\log N)$ tokens, how accurately can an agent estimate the popularity for any arm? The question is answered in the following \textbf{Lemma}~\ref{le:tail}.
  \begin{lemma} \label{le:tail}
    When $N$ is sufficiently large such that $N \geq 4h \log{N}$, we have the following upper and lower tails for any agent $i$ and any arm $a_k$ in each round $r$
    \begin{align} \label{eq:uptail}
      &\mathbb{P}(Q_{i,k}(r) \geq Q_k(r-1)+\varepsilon \mid V_i(r)=v_i, \mathcal X(r-1))  \nonumber\\
      \leq& \frac{1}{1 - {4h N^{-1} \log{N}}} \exp \left( -\frac{v_i \varepsilon^2}{2+\varepsilon} \right), ~0 \leq \varepsilon \leq 1-Q_k(r-1)  \tag{54}
    \end{align}
    and
    \begin{align} \label{eq:lwtail}
      &\mathbb{P}(Q_{i,k}(r) \leq Q_k(r-1)-\varepsilon \mid V_i(r)=v_i, \mathcal X(r-1))  \nonumber\\
      \leq& \frac{1}{1 - {4h N^{-1} \log{N}}} \exp \left( -\frac{v_i \varepsilon^2}{2+\varepsilon} \right), ~~0 \leq \varepsilon \leq Q_k(r-1)  \tag{55}
    \end{align}
  \end{lemma}
  \begin{proof}
    Recall that there are $M(r) = h N \log N$ tokens disseminated in round $r$, $Q_k(r-1)$ of which are $a_k$-tokens, $V_i(r)$ and $V_{i,k}(r)$ denote the number of tokens and the one of $a_k$-tokens received by agent $i$ in round $r$, respectively, and $Q_{i,k}(r) = V_{i,k}(r) / V_{i}(r)$ is the estimate of agent $i$ on $Q_k(r-1)$. Let $\gamma_{i,j}$ denote the probability that the $j$-th token reaches agent $i$. We also suppose $\mathcal{X}(r-1)$ denote the adoption state in round $r-1$. In the following, to simplify our presentation, we get rid of the indicators of the rounds, i.e, $r$ and $r-1$, in these notations, when doing so does not induce any ambiguity.

    We first look at a simpler cases where the tokens are disseminated uniformly at random such that $\gamma_{i,j} = 1/N$ when $T \rightarrow \infty$. In this case, we have $V_{i,k} \mid V_i,\mathcal X$ obey a hypergeometric distribution $\mathsf{H}(M, Q_k, V_i)$. The expected value of $V_{i,k}  \mid V_i, \mathcal X$ can be calculated as
    \begin{align} \label{eq:hypergeoexp0}
      &\mathbb E[V_{i,k} \mid V_i=v_i, \mathcal X]  \nonumber\\
      =& \sum^{v_i}_{\ell=1} \ell \binom{Q_k M(r)}{\ell} \binom{(1-Q_k) M}{v_i - \ell} {\binom{M}{v_i}}^{-1} = Q_k v_i  \tag{56}
    \end{align}
    and we thus have
    \begin{equation}  \label{eq:hypergeoexp1}
      \mathbb E[Q_{i,k} \mid V_i=v_i, \mathcal X] = Q_k  \tag{57}
    \end{equation}
    According to \cite{Chvatal-DM79,Mulzer-CoRR18}, there is a claim related to the tail of the hypergoemetric distribution which will be useful to our following proof, i.e., for any $\delta \geq 1$,
    \begin{align} \label{eq:claim0}
      {\binom{M}{v_i}}^{-1} \sum^{v_i}_{\ell=0} \binom{Q_kM}{\ell} \binom{(1-Q_k)M}{v_i - \ell} \delta^\ell \leq (1+(\delta-1)Q_k)^{v_i}  \tag{58}
    \end{align}

    In our case, the main difference from the above standard hypergeometric distribution is that the tokens are disseminated in a nearly uniform manner. Specifically, $\gamma_{i,j} \in \left[ \frac{1}{N}-\frac{1}{N^3}, \frac{1}{N}+\frac{1}{N^3} \right]$ for $\forall i,j$. Hence, it is straightforward that
    \begin{align}  \label{eq:probvibd}
      \binom{M}{v_i} \xi_0 \leq \mathbb P(V_i = v_i \mid \mathcal X) \leq \binom{M}{v_i} \xi_1  \tag{59}
    \end{align}
    holds for each agent $i \in \mathcal N$, where
    \begin{align}
      \begin{cases} \label{eq:xi}
      \xi_0 = \left( \frac{1}{N} - \frac{1}{N^3} \right)^{v_i}  \left( 1- \frac{1}{N} - \frac{1}{N^3} \right)^{M-v_i}  \\
      \xi_1 = \left( \frac{1}{N} + \frac{1}{N^3} \right)^{v_i}  \left( 1- \frac{1}{N} + \frac{1}{N^3} \right)^{M-v_i}  \tag{60}
      \end{cases}
    \end{align}
    It is also well known that
    \begin{align}  \label{eq:probjointlwbd}
      \mathbb P(V_{i,k} = v_{i,k}, V_i = v_i  \mid \mathcal X)  \geq \binom{Q_k M}{v_{i,k}} \binom{(1 - Q_k) M}{v_i - v_{i,k}} \xi_0  \tag{61}
    \end{align}
    and
    \begin{align}  \label{eq:probjointupbd}
      \mathbb P(V_{i,k} = v_{i,k}, V_i = v_i  \mid \mathcal X)  \leq \binom{Q_k M}{v_{i,k}} \binom{(1 - Q_k) M}{v_i - v_{i,k}} \xi_1  \tag{62}
    \end{align}
    Therefore, 
    %
    %
    %
    we have
    \begin{align} \label{eq:probvikupbd0}
      & \mathbb P (V_{i,k} = v_{i,k} \mid V_i = v_i, \mathcal X)  \nonumber\\
      =& {\mathbb P ((V_{i,k} = v_{i,k}, V_i = v_i \mid \mathcal X)} \big/ {\mathbb P (V_i = v_i \mid \mathcal X)}  \nonumber\\
      \leq& \binom{Q_k M}{v_{i,k}} \binom{(1-Q_k) M}{v_i - v_{i,k}} {\binom{M}{v_i}}^{-1} \frac{\xi_1}{\xi_0}  \tag{66}
    \end{align}
    by combining (\ref{eq:probvibd})$\sim$(\ref{eq:probjointupbd}). The above bound can be refined by the following inequality. 
    \begin{align} \label{eq:bd4xi10}
      \frac{\xi_1}{\xi_0} =& \left( 1 + \frac{2}{N^2-1} \right)^{v_i} \left( 1 + \frac{2}{N^3-N^2-1} \right)^{M-v_i}  \nonumber\\
      \leq& \left( 1 + \frac{2}{N^2-1} \right)^{h N \log{N}}
      %
      %
      %
      %
      \leq \exp\left(4h N^{-1} \log{N}\right)  \nonumber\\
      \leq& \frac{1}{1 - {4h N^{-1} \log{N}}}  \tag{64}
    \end{align}
    where we have the first inequality since $N^3-N^2-1 \geq N^2-1$ when $N \geq 2$, the second one as $(1+ 1/N)^N \leq e$ holds for any positive integer $N$, and the last one because $\exp\left(4h N^{-1} \log{N}\right)  \leq  {1}/({1 - {4h N^{-1}\log{N}}})$ when $N$ is sufficiently large such that $N \geq 4h \log N$.

    Therefore, for any $\delta \geq 1$ and $v_{i,k} \leq Q_k v_i$, we get
    \begin{align} \label{eq:lwtail00}
      & \mathbb P(V_{i,k} \geq v_{i,k} \mid V_i = v_i, \mathcal X)  \nonumber\\
      =& \sum^{v_i}_{\ell= v_{i,k}} P (V_{i,k} = \ell \mid V_i = v_i, \mathcal X)  \nonumber\\
      %
      %
      %
      %
      \leq& \frac{1}{1 - {4h N^{-1} \log{N}}} {\binom{M}{v_i}}^{-1} \sum^{v_i}_{\ell= v_{i,k}} {\binom{Q_k M}{\ell} \binom{(1- Q_k)M}{v_i - \ell}}  \nonumber\\
      \leq& \frac{{\binom{M}{v_i}}^{-1} \sum^{v_i}_{\ell=0} {\binom{Q_k M}{\ell} \binom{(1- Q_k)M}{v_i - \ell}} \delta^{\ell-v_{i,k}}}{1 - {4h N^{-1} \log{N}}}   \nonumber\\
      \leq& \frac{1}{1 - {4h N^{-1} \log{N}}} \cdot \frac{(Q_k \delta + 1 - Q_k)^{v_i}}{\delta^{v_{i,k}}}  \tag{65}
    \end{align}
    where we have the first inequality by substituting (\ref{eq:bd4xi10}) into (\ref{eq:probvikupbd0}), the second one due to $\delta^{\ell - v_{i,k}} \geq 1$ when $\ell \geq v_{i,k}$ and $\sum^{v_{i,k}-1}_{\ell= 0} {\binom{Q_k M}{\ell} \binom{(1- Q_k)M}{v_i - \ell}} \delta^{\ell - v_{i,k}} \geq 0$ when $\delta \geq 1$, and the last one due to (\ref{eq:claim0}).
    Let $v_{i,k} = (Q_k + \varepsilon) v_i$ (where $0 < \varepsilon < 1-Q_k$) and $\delta = \exp(\lambda)$ (where $\lambda \geq 0$). The above inequality (\ref{eq:lwtail00}) then can be re-written as
    \begin{align}
      &\mathbb P(V_{i,k} \geq (Q_k+\varepsilon)v_i \mid V_i = v_i, \mathcal X)  \nonumber\\
      \leq& \frac{1}{1 - {4h N^{-1} \log{N}}} \left( \frac{Q_k \exp(\lambda) + 1 - Q_k}{\exp(\lambda(Q_k+\varepsilon))} \right)^{v_i}, ~~\forall \lambda>0  \tag{66}
    \end{align}
    To tighten the above bound, we minimize the right hand side of the above inequality with respect to $\lambda$ and obtain
    \begin{equation}
      \exp{(\lambda)} = \frac{(1-Q_k)(Q_k+\varepsilon)}{Q_k(1-Q_k-\varepsilon)}  \tag{67}
    \end{equation}
    by substituting which into (\ref{eq:lwtail00}), we get
    \begin{align} \label{eq:lwtail01}
      &\mathbb P(V_{i,k} \geq (Q_k + \varepsilon) v_i \mid V_i = v_i, \mathcal X)  \nonumber\\
      \leq& \frac{\left( \left( \frac{Q_k}{Q_k + \varepsilon} \right)^{Q_k + \varepsilon} \left( \frac{1 - Q_k}{1 - Q_k - \varepsilon} \right)^{1 - Q_k - \varepsilon} \right)^{v_i}}{1 - {4h N^{-1} \log{N}}}   \nonumber\\
      \leq& \frac{1}{1 - {4h N^{-1} \log{N}}} \exp \left( {-v_i \mathsf{D_{KL}}(Q_k + \varepsilon || Q_k)} \right)  \nonumber\\
      %
      %
      \leq& \frac{1}{1 - {4h N^{-1} \log{N}}} \exp \left( -\frac{v_i \varepsilon^2}{2Q_k + \varepsilon} \right)  \nonumber\\
      \leq& \frac{1}{1 - {4h N^{-1} \log{N}}} \exp \left( -\frac{v_i \varepsilon^2}{2+\varepsilon} \right)  \tag{68}
    \end{align}
    where $\mathsf{D_{KL}}(\cdot || \cdot)$ denotes Kullback-Leibler divergence and we have
    \begin{align*}
      &-\mathsf{D_{KL}}(Q_k + \varepsilon || Q_k)  \nonumber\\
      =& (Q_k + \varepsilon) \ln \frac{Q_k}{Q_k+\varepsilon} + (1-Q_k-\varepsilon) \ln \frac{1-Q_k}{1-Q_k-\varepsilon}  \nonumber\\
      =& -(Q_k + \varepsilon) \ln \left(1+\frac{\varepsilon}{Q_k}\right)  \nonumber\\
      &+ (1-Q_k-\varepsilon) \ln \left(1 + \frac{\varepsilon}{1-Q_k-\varepsilon}\right)  \nonumber\\
      \leq& -(Q_k+\varepsilon)\cdot\frac{\varepsilon/Q_k}{1+\varepsilon/(2Q_k)} + (1-Q_k-\varepsilon)\cdot\frac{\varepsilon}{1-Q_k-\varepsilon}  \nonumber\\
      =& -\frac{\varepsilon^2}{2Q_k+\varepsilon} \leq  -\frac{\varepsilon^2}{2+\varepsilon}
    \end{align*}
    We finally have (\ref{eq:uptail}) proved since $\mathbb P(Q_{i,k} \geq Q_k + \varepsilon \mid V_i = v_i, \mathcal X) = \mathbb P(V_{i,k} \geq (Q_k + \varepsilon) v_i \mid V_i = v_i, \mathcal X)$. 

    The lower tail can be derived according to the above upper one. Specifically, since
    \begin{align}
      & \mathbb P(V_{i,k} \leq (Q_k - \varepsilon)v_i \mid V_i = v_i, \mathcal X)  \nonumber\\
      =& \mathbb P(v_i - V_{i,k} \geq (1  - Q_k + \varepsilon)v_i \mid V_i = v_i, \mathcal X)  \nonumber\\
      =& \mathbb P(V^\neg_{i,k} \geq (Q^\neg_k + \varepsilon)v_i \mid V_i = v_i, \mathcal X)  \tag{69}
    \end{align}
    holds for $0 \leq \varepsilon \leq Q_k$, where $V^\neg_{i,k} = V_i - V_{i,k}$ and $Q^\neg_k = 1 - Q_k$, the lower tail then follows from $\mathsf{D_{KL}}(Q^\neg_k + \varepsilon || Q^\neg_k) = \mathsf{D_{KL}}(Q_k - \varepsilon || Q_k)$.
  \end{proof}

  Now, we are ready to prove \textbf{Lemma}~4. According to \textbf{Lemma}~\ref{le:tail}, in any round $r$, when $V_i(r) \geq \frac{3}{8}h \log N$ and $N$ is sufficiently large such that $N \geq 4 h \log N$ (where $h \geq 64$ is a constant), we have
  \begin{align}
    &\mathbb{P} \left(Q_{i,k}(r) \leq Q_k(r-1)-\varepsilon ~\bigg|~ V_i(r) \geq \frac{3}{8}h \log N, \mathcal X(r-1)\right)  \nonumber\\
    &\leq \frac{N^{-h\varepsilon^2/8}}{1 - {4h N^{-1} \log{N}}}    \tag{70}
  \end{align}
  where $0 \leq \varepsilon \leq Q_k(r-1)$. By applying the union bound across the different agents, we obtain
  \begin{align}
    &\mathbb{P} \left(Q_{i,k}(r) \geq Q_k(r-1)-\varepsilon, \forall i ~\bigg|~ V_i(r) \geq \frac{3}{8}h \log N, \mathcal X(r-1)\right)  \nonumber\\
    &\geq 1 - \frac{N^{1-h\varepsilon^2/8}}{1 - {4h N^{-1} \log{N}}}   \tag{71}
  \end{align}
  for $0 \leq \varepsilon \leq Q_k(r)$. Similarly, for $0 \leq \varepsilon \leq 1-Q_k(r-1)$, we also have
  \begin{align}
    &\mathbb{P} \left(Q_{i,k}(r) \leq Q_k(r-1)+\varepsilon, \forall i ~\bigg|~ V_i(r) \geq \frac{3}{8}h \log N, \mathcal X(r-1)\right)  \nonumber\\
    &\geq 1 - \frac{N^{1-h\varepsilon^2/8}}{1 - {4h N^{-1} \log{N}}}    \tag{72}
  \end{align}
  We finally complete the proof since $\mathbb P(V_i \geq \frac{3}{8}h \log N, \forall i \in \mathcal N \mid \mathcal X(r-1)) \geq 1-1/N^2$ as mentioned in \textbf{Lemma}~\ref{le:bounds4Vi}.

\end{document}